\documentclass{article}

\usepackage{microtype}
\usepackage{graphicx}
\usepackage{subfigure}
\usepackage{booktabs} 

\usepackage{hyperref}



\usepackage[accepted]{icml2025}

\usepackage{amsmath}
\usepackage{amssymb}
\usepackage{mathtools}
\usepackage{amsthm}

\usepackage[capitalize,noabbrev]{cleveref}

\usepackage[noend]{algpseudocode} 
\usepackage{dsfont}

\usepackage[textsize=tiny]{todonotes}

\icmltitlerunning{Regret Minimization vs Minimax Play}

\usepackage{changepage} 
\usepackage{graphicx} 
\usepackage{amsmath}
\usepackage{amssymb}
\usepackage{mathtools}
\usepackage{amsthm}
\usepackage{bm}
\usepackage{thmtools}
\usepackage{thm-restate}

\usepackage{xcolor}
\definecolor{mblue}{rgb}{0.0,0.0,1.0}
\definecolor{mgreen}{rgb}{0.0,0.501960784314,0.0}
\definecolor{mred}{rgb}{1.0,0.0,0.0}
\usepackage[capitalize]{cleveref}

\usepackage{bbm}

\usepackage{parskip}
\usepackage{etoolbox}

\usepackage{comment}

\setlength\parindent{0pt}

\newtheorem{theorem}{Theorem}[section]

\newtheorem{remark}{Remark}[section]

\newtheorem{definition}{Definition}[section]
\newtheorem{lemma}{Lemma}[section]

\newtheorem{assumption}{Assumption}[section]
\newtheorem{question}{Question}

\newcommand\R{\mathbb{R}}
\newcommand\curly[1]{\left\{ #1 \right\}}
\newcommand\inner[2]{\left\langle #1, #2 \right\rangle}
\newcommand\indicator[1]{\mathbbm{1}\left\{ #1 \right\}}

\newcommand\round[1]{\left( #1 \right)}
\newcommand\rectangular[1]{\left[ #1 \right]}

\newcommand\Simplex[1]{\Delta_{#1}}

\newcommand\xb{\mu}
\newcommand\Xb{\mathcal{P}}
\newcommand\yb{\nu}
\newcommand\Yb{\mathcal{P}'}

\newcommand\piHat{\hat{\pi}}
\newcommand\Acal{\mathcal{A}}
\newcommand\Bcal{\mathcal{B}}
\newcommand\Ccal{\mathcal{C}}

\newcommand\Fcal{\mathcal{F}}
\newcommand\Xcal{\mathcal{X}}
\newcommand\Ycal{\mathcal{Y}}
\newcommand\Scal{\mathcal{S}}

\newcommand\Tcal{\mathcal{T}}
\newcommand\exptn{\mathbb{E}}
\newcommand\Dkl{D_{\text{KL}}}

\newcommand\lTilde{\tilde{\ell}}
\newcommand\cHat{\hat{c}}

\newcommand\OTilde{\tilde{O}}

\newcommand\Dbal{D^{\text{bal}}}

\newcommand\start{\text{start}}
\newcommand\RHat{\hat{\mathcal{R}}}
\newcounter{protocol}
\makeatletter
\newenvironment{protocol}[1][htb]{%
  \let\c@algorithm\c@protocol
  \renewcommand{\ALG@name}{Protocol}
  \begin{algorithm}[#1]%
  }{\end{algorithm}
}

\makeatletter
\providecommand*{\bigcupdot}{%
  \mathop{%
    \vphantom{\bigcup}%
    \mathpalette\@bigcupdot{}%
  }%
}
\newcommand*{\@bigcupdot}[2]{%
  \ooalign{%
    $\m@th#1\bigcup$\cr
    \sbox0{$#1\bigcup$}%
    \dimen@=\ht0 %
    \advance\dimen@ by -\dp0 %
    \sbox0{\scalebox{1.5}{$\m@th#1\cdot$}}%
    \advance\dimen@ by -\ht0 %
    \dimen@=.5\dimen@
    \hidewidth\raise\dimen@\box0\hidewidth
  }%
}
\makeatother


\newcommand\muHat{\hat{\mu}}

\newcommand\mean{m}
\newcommand\meanLow{\underline{m}}
\newcommand\meanUp{\overline{m}}
\newcommand\meanHat{\hat{m}}
\newcommand\meanTilde{\tilde{m}}

\usepackage{caption}

\usepackage{enumitem}

\begin{document}

\twocolumn[
\icmltitle{Best of Both Worlds: Regret Minimization versus Minimax Play}



\icmlsetsymbol{equal}{*}

\begin{icmlauthorlist}
\icmlauthor{Adrian Müller}{equal,eth}
\icmlauthor{Jon Schneider}{equal,google}
\icmlauthor{Stratis Skoulakis}{equal,aarhus}
\icmlauthor{Luca Viano}{equal,epfl}
\icmlauthor{Volkan Cevher}{epfl}

\end{icmlauthorlist}

\icmlaffiliation{eth}{ETH Zürich}
\icmlaffiliation{google}{Google Research}
\icmlaffiliation{aarhus}{Aarhus University}
\icmlaffiliation{epfl}{EPFL}

\icmlcorrespondingauthor{Adrian Müller}{admuell@ethz.ch}

\icmlkeywords{Machine Learning, ICML}

\vskip 0.3in
]



\printAffiliationsAndNotice{} 

\begin{abstract}
    In this paper, we investigate the existence of online learning algorithms with bandit feedback that simultaneously guarantee $O(1)$ regret compared to a given comparator strategy, and $\tilde{O}(\sqrt{T})$ regret compared to any fixed strategy, where $T$ is the number of rounds. We provide the first affirmative answer to this question whenever the comparator strategy supports every action. In the context of zero-sum games with min-max value zero, both in normal- and extensive form, we show that our results allow us to guarantee to risk at most $O(1)$ loss while being able to gain $\Omega(T)$ from exploitable opponents, thereby combining the benefits of both no-regret algorithms and minimax play.
\end{abstract}

\everypar{\looseness=-1}

\section{Introduction} \label{sec:intro}

Two-player zero-sum games form one of the most fundamental classes studied in game theory, capturing direct competition between two opposing agents. In a zero-sum game, Alice and Bob choose mixed strategies $\xb \in \Xb$ and $\yb\in\Yb$, respectively, from some strategy polytopes $\Xb$ and $\Yb$. Their expected payoffs are specified by a function $V$. Alice aims to minimize $V(\mu,\nu)$, whereas Bob aims to maximize it. This definition subsumes the classical \emph{normal-form zero-sum games} \citep{von2007theory} like Rock-Paper-Scissors, as well as the more complex $\textit{extensive-form zero-sum games}$ \citep{osborne1994course}, such as Heads-up Poker. A zero-sum game is called \emph{fair} if its min-max value is zero, meaning that $\min_{\mu\in\Xb} \max_{\nu\in\Yb} V(\mu,\nu) = 0$. This models the fact that none of the players has a strategic advantage due to the structure of the game. For instance, a game is always fair if it is symmetric, i.e. when $\Xb=\Yb$ and $V(\mu,\nu) = -V(\nu,\mu)$, as is the case for many games of interest. Now suppose Alice repeatedly plays a fair zero-sum game against her unknown opponent Bob for $T$ consecutive rounds. In each round, she chooses her next strategy based on all her previous observations, and Bob does likewise. Both players then receive their respective costs in this round prior to moving to the next round. 

To minimize her cumulative cost, Alice could compute an equilibrium strategy and simply play it in every round (\emph{minimax play} \citep{von2007theory}). This way, she would be guaranteed to never lose anything to Bob in expectation. However, she might also not win anything from Bob even if he plays suboptimal (non-equilibrium) strategies. A classic example of this dilemma is Rock-Paper-Scissors, for which the min-max strategy is the uniform strategy, which wins zero even from an opponent that always plays Rock. Alternatively, Alice could run a learning algorithm (\emph{regret minimization} \citep{cesa2006prediction}). This way, her average cost would approach the one of the best strategy in hindsight, allowing her to exploit such opponents. However, by running such an algorithm she would have to deviate from the equilibrium strategy, thereby risking incurring a significant amount of costs during learning. More formally, there are two popular lines of thought on how Alice could minimize her overall cost over the $T$ rounds of play:

\textbf{1) Min-Max Equilibrium:} In every round $t$, Alice simply selects the min-max strategy $\mu^t = \mu^\star \in \arg\min_{\mu 
\in \Xb}\max_{\nu \in \Yb} V(\mu,\nu)$. She then loses at most $V^\star := \min_{\mu \in \Xb}\max_{\nu \in \Yb} V(\mu,\nu)$ units to Bob. For fair zero-sum games, we have $V^\star = 0$, meaning that she will not lose anything in expectation. However, for example in normal-form games, she also never wins any units if $\mu^\star$ is full-support \citep{braggion2020strong}, and even otherwise may not win anything (\cref{sec:experiments}). In summary:
\begin{adjustwidth}{0.5cm}{0.5cm}
\centering
\textit{Alice is guaranteed not to lose anything, but might not win anything even if Bob plays poorly.}
\end{adjustwidth}

\textbf{2) Regret Minimization:} Alice selects $\mu^t \in \Xb$ according to a \emph{no-regret algorithm}. Then she can guarantee that, no matter Bob's strategies
$\nu^1,\ldots,\nu^T \in \Yb$, the \textit{regret} compared to any fixed strategy $\mu$ satisfies
\begin{equation*}
    \sum_{t=1}^T V(\mu^t,\nu^t) - \sum_{t=1}^T V(\mu,\nu^t)\leq O(\sqrt{T}).
\end{equation*}
In fair zero-sum games, plugging in the equilibrium $\mu=\mu^\star$, we have $V(\mu,\nu^t)\leq V^\star = 0$. This means that the above regret guarantee ensures that Alice might lose at most $O(\sqrt{T})$ units to Bob, which can be a significant amount. Indeed, one expects there are cases where she does (\cref{sec:experiments}) since there is a matching regret lower bound. However, if Bob plays sub-optimally, it may be the case that $\min_{\mu \in \Xb} \sum_{t=1}^T V(\mu,\nu^t) = -\Theta(T)$, meaning that Alice wins $\Theta(T)$ units. As a result:
\begin{adjustwidth}{0.5cm}{0.5cm}
\centering
\textit{Alice risks losing $O(\sqrt{T})$ units, but can win up to $\Theta(T)$ if Bob plays sub-optimally.} 
\end{adjustwidth}

\noindent Whether Alice will choose to play 1) a min-max equilibrium or 2) according to a no-regret algorithm depends on how \emph{risk-averse} Alice is --- how willing Alice is to risk $O(\sqrt{T})$ units in the hope of winning $\Theta(T)$. This naturally raises the question of whether we can have the best of both worlds:
\begin{question}\label{q:1}
    In a fair zero-sum game, can Alice risk losing at most $O(1)$ units, but still be able to win up to $\Theta(T)$ if Bob plays sub-optimally?
\end{question}

In this paper, we answer this question in the affirmative by resolving the following fairly \emph{more general question} from online learning with adversarial linear costs. We explain the reduction in \cref{sec:preliminaries}.
\begin{question}\label{q:2}
    Is it possible to guarantee $O(1)$ regret compared to a specific strategy while maintaining $\OTilde(\sqrt{T})$ regret compared to any fixed strategy?
\end{question}
Question \ref{q:2} is known to admit a relatively simple positive answer in the so-called full-information case (\cref{sec:related-work}). Crucially, in this work we are interested in the \emph{bandit feedback} setting, modeling the fact that Alice only observes the realized cost and not the cost for all actions she could have taken instead. We formalize this learning goal in \cref{sec:reduction,sec:efg-reduction}.

We present our results in the context of fair zero-sum games. However, they hold far beyond fair, zero-sum, or even two-player games (\cref{q:2}): for any sufficiently explorative comparator strategy, one can guarantee constant regret compared to it while still having rate-optimal regret compared to any fixed strategy $\mu$, even under bandit feedback. Our results may thus be of independent interest to the online learning community, as we discuss in \cref{sec:related-work}.

\textbf{Contributions.} Our main contributions are the following:
\begin{itemize}[leftmargin=*]
    \setlength{\itemsep}{0.3em}
    \setlength{\parskip}{0pt}
    \item We first devise an algorithm for normal-form games (NFGs) under bandit feedback that interpolates between playing the min-max equilibrium and no-regret learning. We prove that if the min-max equilibrium is supported on the whole action space\footnote{This assumption is also necessary, but can easily be relaxed, at the cost of slightly weaker guarantees on when Alice can take advantage of sub-optimal play by Bob. See \cref{rmk:restrict}.}, then our algorithm indeed satisfies the desiderata of our main question (\cref{sec:nfg-upper}). To the best of our knowledge, this is the first result of its kind under bandit feedback.
    \item We complement this regret guarantee with a lower bound for NFGs, showing that the regret bound cannot be improved significantly (\cref{sec:nfg-lower}). This illustrates that our algorithm is close to optimally exploiting weak strategies, as desired.
    \item We then transfer our insights to the more challenging framework of extensive-form games (EFGs). This is specifically relevant since in stateful games, it is essential to consider bandit feedback. By proposing a corresponding algorithm for EFGs, we show that even in such interactive games with imperfect information, we can answer our main question in the affirmative (\cref{sec:efg-upper}). We generalize our lower bound to this setting, too (\cref{sec:efg-lower}). 
\end{itemize}

Finally, we numerically evaluate our algorithm in simple EFG environments (\cref{sec:experiments}), showing that our results are not merely of theoretical interest. Indeed, our findings confirm our theoretical insights and demonstrate strong results even when the min-max equilibrium is not full-support.

\subsection{Related Work} \label{sec:related-work}

In online learning under \emph{full information feedback}, it is known that one can achieve constant regret against a certain comparator strategy while maintaining the near-optimal worst-case regret guarantee as desired in \cref{q:2} \citep{hutter2005adaptive,even2008regret,kapralov2011prediction,koolen2013pareto,sani2014exploiting,orabona2016coin,cutkosky2018black,orabona2019modern}, one notable example being the Phased Aggression template of \citet{even2008regret}. This allows us to directly answer \cref{q:1} affirmatively for NFGs if full information is available, via the reduction in \cref{sec:preliminaries}. While this reduction is direct, we are not aware of any prior work making this connection, even under full-information feedback.

In stark contrast, under \textit{bandit feedback}, \citet{lattimore2015pareto} showed that in multi-armed bandits, $O(1)$ regret compared to a single comparator action (i.e. a deterministic strategy) implies a worst-case regret of $\Omega(AT)$ compared to some other action. This rules out a positive answer to our \cref{q:2} if the comparator strategy is arbitrary. We show that, perhaps surprisingly, it is possible to circumvent this lower bound under the minimal possible assumption that the comparator strategy plays each action with non-zero probability $\delta>0$ (while maintaining the optimal order of $\sqrt{T}$ regret).

Similar to our motivation, \citet{ganzfried2015safe} consider \emph{Safe Opponent Exploitation} as deviating from the min-max strategy while ensuring at most the cost of the min-max value. Different from our work, their algorithms rely on best-responding to some opponent model whenever the algorithm has accumulated enough utility to risk losing it again. While the authors provide safety guarantees, they do not provide any theoretical exploitation guarantee. In contrast, our algorithm has provably vanishing regret compared to the best static response against the opponent. 

Regarding the extension of our results to EFGs, we leverage relatively recent theoretical advancements regarding online mirror descent in EFGs, most notably \citet{kozuno2021model,bai2022near}. Finally, we refer to \cref{app:related-work} for an extended discussion of related work. 

\section{Preliminaries} \label{sec:preliminaries}

In this section, we introduce the relevant notation and explain how Question \ref{q:2} answers Question \ref{q:1}. 

\textbf{Notation.} As usual, $O$-notation expresses asymptotic behavior, and $\tilde{O}$-notation hides poly-logarithmic factors. We denote the $n$-dimensional simplex by $\Delta_n$ and define $[n]:=\{1,\dots,n\}$. Moreover, $e_i$ denotes the $i$-th the standard basis vector of $\R^n$, and $\inner{\cdot}{\cdot}$ the Euclidean inner product. Finally, we write $\mathbbm{1}_E$ for the indicator function of an event $E$.

\textbf{(Safe) Online Linear Minimization.} In Protocol \ref{prot:olm}, we introduce the framework of \emph{online linear minimization} \citep[OLM]{H17} with adversarial costs. In addition to this standard framework, Alice receives a special \emph{comparator strategy} $\mu^c \in \mathcal{P}$ she considers ``safe''. The motivation for this is that we can later choose $\mu^c$ to be a min-max equilibrium $\mu^\star$, which is safe in the sense of guaranteeing zero expected loss in fair zero-sum games. Alice would like to be essentially at least as good as this comparator strategy.

\begin{protocol}[ht]
    \caption{(Safe) Online Linear Minimization}
    \label{prot:olm}
    \centering
    \begin{algorithmic}
        \Require{Special comparator $\mu^c\in\Xb$.}
        \For{round $t = 1,\dots,T$}
            \State \textbf{Alice} chooses her next $\mu^t \in \Xb$.
            \State \textbf{Bob} chooses the cost vector $c^t$.
            \State \textbf{Alice} suffers expected cost $\inner{\mu^t}{c^t}$.
        \EndFor
        \hspace{-0.33cm}\textbf{Goal:} $\mathcal{R}(\mu^c) \leq O(1)$ and $\max_{\mu\in\Xb}\mathcal{R}(\mu)\leq \OTilde(\sqrt{T}).$
    \end{algorithmic}
\end{protocol}

We define Alice's \emph{expected regret compared to a strategy $\mu\in\Xb$} by
\begin{align*}
    \mathcal{R}(\mu) := \sum_{t=1}^T \exptn\rectangular{\inner{\mu^t-\mu}{c^t}}.
\end{align*}
The \emph{expected regret} $\max_{\mu} \mathcal{R}(\mu)=\sum_{t=1}^T \exptn\rectangular{\inner{\mu^t}{c^t}}-\min_{\mu}\sum_{t=1}^T \exptn\rectangular{\inner{\mu}{c^t}}$ then measures the regret compared to the best fixed strategy $\mu$ in hindsight. Under \emph{safe OLM} (Question \ref{q:2}), we understand the problem of simultaneously guaranteeing 
\begin{align}
    \mathcal{R}(\mu^c) \leq O(1), \quad\text{and}\quad \max_{\mu\in\Xb}\mathcal{R}(\mu)\leq \OTilde(\sqrt{T}). \tag{OLM}\label{eq:safe-olm}
\end{align}
\textbf{Question \ref{q:2} Answers Question \ref{q:1}.} Now suppose Alice was able to guarantee (\ref{eq:safe-olm}). As we explain in \cref{sec:reduction,sec:efg-reduction}, both for NFGs and EFGs, we can write the expected cost in round $t$ as a linear function of the strategy, i.e.
\begin{align*}
    \exptn\rectangular{V(\mu,\nu^t)}=\exptn\rectangular{\inner{\mu}{c^t}}
\end{align*}
for some cost vector $c^t$. Alice can now set $\mu^c=\mu^\star=\arg\min_{\mu}\max_{\nu} V(\mu,\nu)$ to be a min-max equilibrium. Since $V(\mu^c,\nu)\leq V^\star=0$ for fair zero-sum games, the first part of (\ref{eq:safe-olm}) implies
\begin{align*}
    \sum_{t=1}^T \exptn\rectangular{V(\mu^t,\nu^t)} \leq \sum_{t=1}^T \exptn\rectangular{V(\mu^c,\nu^t)} + O(1) \leq O(1),
\end{align*}
no matter Bob's play. Alice will thus lose at most a constant amount in expectation. Furthermore, if (for example) Bob plays a fixed strategy $\nu^t=\nu\in\Yb$ that is suboptimal in the sense that $\min_{\mu}V(\mu,\nu) = -c < 0$, then the second part in (\ref{eq:safe-olm}) shows 
\begin{align*}
    \sum_{t=1}^T \exptn\rectangular{V(\mu^t,\nu^t)} \leq \min_{\mu} \sum_{t=1}^T V(\mu,\nu) + \OTilde(\sqrt{T}) \leq -\Theta(T),
\end{align*}
and Alice will linearly exploit Bob.\footnote{More generally, Bob is \emph{exploitable} in this sense if he plays an oblivious sequence of strategies $\nu^t$ with $\min_{\mu} \sum_t V(\mu,\nu^t) = -\Theta(T)$. We briefly discuss the adaptive case in \cref{app:related-work}.} We will thus state our results in terms of safe OLM, keeping in mind that the above reduction will automatically answer our initial \cref{q:1}.

\section{Normal-Form Games} \label{sec:simplex}

Suppose Alice and Bob repeatedly play a \emph{normal-form zero-sum game} for $T$ rounds, which means the following. In each round $t$, they simultaneously submit actions $a^t \in[A]$, $b^t\in[B]$ by sampling from mixed strategies $\mu^t\in\Delta_A$, $\nu^t\in\Delta_B$, respectively. Alice receives cost $U_{a^t,b^t}=\inner{e_{a^t}}{U e_{b^t}}$ and Bob receives cost $-U_{a^t,b^t}$, for some fixed cost matrix $U\in\R^{A\times B}$ with entries in $[0,1]$. Alice's expected cost given $\mu^t$, $\nu^t$ is $V(\mu^t,\nu^t) := \exptn_{a\sim \mu^t, b\sim\nu^t}\rectangular{U_{a,b}} = \inner{\mu^t}{U\nu^t}$. We consider \emph{bandit feedback}, meaning that Alice only observes her cost $U_{a^t,b^t}$ and not the cost of actions she could have taken instead. 

\subsection{From NFGs to Online Linear Minimization} \label{sec:reduction}
By defining Alice's \emph{cost function} as
\begin{align*}
    c^t := U e_{b^t}\in\R^A,    
\end{align*}
we see that Alice's expected cost is $\exptn\rectangular{V(\mu^t,\nu^t)}=\exptn\rectangular{U_{a^t,b^t}}=\exptn\rectangular{\inner{\mu^t}{c^t}}$, as $a^t\sim \mu^t$. We are thus in the setting of OLM (Protocol \ref{prot:olm}) over $\Xb=\Delta_A$. Notably, Alice does not observe the full cost function $c^t$ but only its entry $c^t(a^t)=U_{a^t,b^t}$ at the chosen action (bandit feedback). We formally consider Protocol \ref{prot:bandit-simplex} for \emph{any} adversarially picked cost functions $c^t$. From \cref{sec:preliminaries} we know that it is now sufficient to set $\mu^c=\mu^\star$ and guarantee (\ref{eq:safe-olm}).

\begin{protocol}[ht]
    \caption{Bandit Feedback over the Simplex (NFGs)} 
    \label{prot:bandit-simplex}
    \centering
    \begin{algorithmic}
        \Require{Special comparator $\mu^c\in\Delta_A$.}
        \For{round $t = 1,\dots, T$}
            \State \textbf{Alice} chooses her next action $a^t \sim \mu^t \in \Delta_A$.
            \State \textbf{Bob} chooses costs $c^t \in \R^A$. \Comment{\color{blue} NFG: $c^t=U e_{b^t}$} 
            \State \textbf{Alice} suffers and observes cost $c^t(a^t)$. \Comment{\color{blue} NFG: $U_{a^t,b^t}$}	
        \EndFor
    \end{algorithmic}
\end{protocol}

\subsection{Upper Bound} \label{sec:nfg-upper}

Our first main result shows that if the special comparator strategy lies in the interior of the simplex, we are able to guarantee constant regret to it while maintaining low regret to any strategy at the optimal rate in $T$. Note that the result concerns the general Protocol \ref{prot:bandit-simplex} and thus covers \emph{any} NFG, which need not be fair or zero-sum (or even two-player). 
\begin{restatable}{theorem}{thmUbSimplexBandit}\label{thm:ub-simplex-bandit}
    Let $\delta\in(0,1/A]$. Consider any mixed strategy $\mu^c\in \Delta_A$ such that $\mu^c(a)\geq \delta$ for all $a\in[A]$. Under bandit feedback (Protocol \ref{prot:bandit-simplex}), for any sequence of $c^t\in[0,1]^A$, \cref{algo:phased-aggression-nfg} achieves 
    \begin{align*}
        \mathcal{R}(\mu^c) \leq 1, \quad \text{and} \quad \max_{\mu\in\Delta_A} \mathcal{R}(\mu) \leq \OTilde\round{\delta^{-1}\sqrt{T}}.
    \end{align*} 
\end{restatable}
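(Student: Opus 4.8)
The plan is to adapt the Phased Aggression template of \citet{even2008regret} to bandit feedback, where the central new ingredient is that the full-support comparator $\mu^c$ supplies exactly the exploration the bandit learner needs. Concretely, I would run $O(\log(T/\delta))$ phases of geometrically increasing ``aggression'' $\alpha_k = 2^{k-1}\alpha_1$ (capped at $1$), and in phase $k$ play the mixture $\mu^t = (1-\alpha_k)\mu^c + \alpha_k w^t$, where $w^t$ is produced by a fresh copy of a bandit regret minimizer (e.g.\ FTRL / mirror descent with an entropic or log-barrier regularizer). The key observation is that $\mu^t(a) \ge (1-\alpha_k)\mu^c(a) \ge (1-\alpha_k)\delta$ for every action, so the importance-weighted estimates $\hat c^t(a) = c^t(a^t)\indicator{a^t = a}/\mu^t(a)$ are unbiased and bounded by $O(\delta^{-1})$. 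Feeding these to the base learner yields a per-phase guarantee $\sum_{t\in\text{phase }k}\inner{w^t - \mu}{\hat c^t} \le R$ against every fixed $\mu$, with $R = \OTilde(\delta^{-1}\sqrt{T})$; this exploration floor is precisely the source of the $\delta^{-1}$ factor.

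Second, I would specify the phase-switching rule: phase $k$ ends as soon as the base learner has outperformed the comparator on the estimated costs by the full margin, i.e.\ when $\sum_{t\in\text{phase }k}\inner{\mu^c - w^t}{\hat c^t} \ge R$. For $\mathcal{R}(\mu^c)$, unbiasedness lets me replace the true costs by the estimates, so it suffices to bound $\sum_k \alpha_k \sum_{t\in\text{phase }k}\inner{w^t - \mu^c}{\hat c^t}$ pathwise. Every terminated phase contributes at most $-\alpha_k R$ (the trigger certifies the learner beat $\mu^c$), while the single unfinished phase contributes at most $\alpha_k R$ by the base learner's no-regret property against $\mu^c$. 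Since the $\alpha_k$ double, the resulting geometric sum telescopes to leave only $\alpha_1 R$, so choosing $\alpha_1 = 1/R$ gives $\mathcal{R}(\mu^c) \le 1$.

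For $\max_\mu \mathcal{R}(\mu)$, I would bound each phase separately. Within any phase, before the trigger fires we have $\sum\inner{\mu^c - w^t}{\hat c^t} < R$, while the base learner guarantees $\sum\inner{w^t - \mu}{\hat c^t} \le R$; adding these shows the comparator's own regret to any $\mu$ inside the phase is below $2R$. Writing $\mu^t - \mu = \alpha_k(w^t - \mu) + (1-\alpha_k)(\mu^c - \mu)$ then bounds each phase's contribution to $\mathcal{R}(\mu)$ by $\alpha_k R + (1-\alpha_k)\,2R \le 2R$, and summing over the $O(\log(T/\delta))$ phases (and taking expectations) yields $\max_\mu \mathcal{R}(\mu) \le \OTilde(\delta^{-1}\sqrt{T})$.

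I expect the main obstacle to be exactly the interface between the bandit estimation and the adaptive, data-dependent phase boundaries. In full information these decompositions are deterministic, but here the phase index $i(t)$ is a random stopping time correlated with the observed estimates, so I would carry out the whole argument pathwise on the $\hat c^t$ (where the trigger conditions are deterministic and the learner's regret bound holds for the realized loss sequence) and only pass to expectations at the very end, using that $\mu^t$, $w^t$, and $i(t)$ are $\mathcal{F}_{t-1}$-measurable while $\exptn[\hat c^t \mid \mathcal{F}_{t-1}] = c^t$. Secondary technical points are the single-round overshoot of the trigger (at most $O(\delta^{-1})$ per phase, absorbed into $\OTilde$) and verifying that the base learner's regret on the estimated losses is genuinely $\OTilde(\delta^{-1}\sqrt{T})$ for the chosen regularizer, which is where the exploration floor $\mu^t(a)\ge(1-\alpha_k)\delta$ must be invoked.
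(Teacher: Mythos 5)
Your overall strategy is the paper's: Phased Aggression with geometrically doubling aggression $\alpha_k = 2^{k-1}/R$, an importance-weighted OMD base learner restarted each phase, a pathwise analysis of the \emph{estimated} regret with expectations taken only at the end, and the telescoping geometric sum that leaves $\alpha_1 R = 1$ for the comparator. Your trigger (the learner beats $\mu^c$ by $R$ on the estimates) differs only cosmetically from the paper's ($\max_{\mu}\sum_j \langle \hat{c}^j, \mu^c - \mu\rangle > 2R$); the two are interchangeable, since the paper derives your ``terminated phase contributes $-\alpha_k R$'' from its trigger plus the OMD bound, and you derive its ``$\sum_t\langle \hat{c}^t,\mu^c-\mu\rangle < 2R$ before the trigger'' from your trigger plus the OMD bound.

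The one genuine gap is the final phase, where $\alpha_k$ is capped at $1$. There the exploration floor $\mu^t(a) \ge (1-\alpha_k)\mu^c(a) \ge (1-\alpha_k)\delta$, on which your entire pathwise argument rests, degenerates to $0$: the importance weights are no longer $O(\delta^{-1})$-bounded, and the base learner's estimated regret $\sum_t \langle w^t - \mu, \hat{c}^t\rangle$ cannot be bounded by $R$ almost surely (the second-order term $\sum_t \sum_a w^t(a)\hat{c}^t(a)^2 = \sum_t c^t(a^t)^2 / w^t(a^t)$ is unbounded pathwise). The paper treats the $\alpha = 1$ phase separately: the trigger is disabled once $\alpha = 1$ (which is also what keeps the phase count at $O(\log R)$ --- your proposal should state explicitly that no further restarts occur), the learning rate is switched to the standard Exp3 rate $\tau = \sqrt{2\log(A)/(AT)}$, and that phase's contribution is bounded \emph{in expectation} via the standard Exp3 analysis, yielding $\sqrt{AT\log(A)/2} \le \delta^{-1}\sqrt{2T\log(A)} = R$ using $\delta \le 1/A$, which is exactly the bound your telescoping argument needs from the one unfinished phase. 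This is a standard fix, but your uniform ``pathwise on $\hat{c}^t$, expectations at the end'' plan does not cover it as written. Everything else (unbiasedness, $\mathcal{F}_{t-1}$-measurability of the phase index, the $O(\delta^{-1})$ single-round overshoot of the trigger, the $\alpha_k \le 1/2$ bound in all non-final phases) matches the paper's proof.
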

Now consider any \emph{zero-sum} NFG with min-max value $V^\star$. If the min-max strategy $\mu^\star$ is full-support, then Alice can run \cref{algo:phased-aggression-nfg}. The above theorem and the reduction from \cref{sec:preliminaries} guarantee that in expectation: Alice will lose at most $V^\star T+1$ units while winning $\Omega(T)$ if Bob plays (oblivious) strategies that are linearly exploitable. In particular, if the game is fair ($V^\star=0$), Alice will lose at most $1$ unit in expectation. The latter is guaranteed for instance if the zero-sum NFG is symmetric (i.e. $\Acal=\Bcal$ and $U=-U^T$).

\citet{lattimore2015pareto}'s result implies that the assumption on $\mu^c$ is also necessary. In addition, we show in \cref{thm:lower-nfg} that a multiplicative dependence on $\delta^{-1}$ is unavoidable. We remark that min-max strategies $\mu^\star$ of various zero-sum games are $\delta$-bounded away from zero. For example in Rock-Paper-Scissors $\mu^\star= (1/3,1/3,1/3)$. More importantly, even when this is not the case, we remark the following.
\begin{remark} \label{rmk:restrict}
    Alice can apply the result even in zero-sum games with min-max strategies $\mu^\star \in \Delta_A$ that are not full-support. Indeed, she can consider the subset of actions $\Acal':=\{a \in [A] \colon \mu^\star(a) > 0\}$. Then, our algorithm run on $\Acal'$ still guarantees $\mathcal{R}(\mu^\star)\leq O(1)$, meaning that Alice can lose at most $O(1)$ in fair zero-sum games. At the same time, our algorithm guarantees that $\sum_{t=1}^T \exptn\rectangular{V(\mu^t,\nu^t)} \leq \min_{\mu\in\Delta'} \sum_{t=1}^T \exptn\rectangular{V(\mu,\nu^t)} + \OTilde(\sqrt{T})$, where $\Delta'$ is the simplex restricted to $\Acal'$. This means that if Bob plays suboptimally, Alice can still guarantee to win $\Theta(T)$ whenever these actions allow her to do so (while $\mu^\star$ itself does not guarantee this), as we indeed observe in \cref{sec:experiments}.
\end{remark}

\begin{figure*}[t]
\centering
\begin{minipage}{1.0\textwidth}
\begin{algorithm}[H]
    \caption{Phased Aggression with Importance-Weighting} \label{algo:phased-aggression-nfg}
    \begin{algorithmic}[1]
        \Require{Number of rounds $T$, comparator margin $\delta$, regret upper bound $R\gets\delta^{-1} \sqrt{2T\log(A)}$, OMD learning rates $\eta \gets \sqrt{\delta^2\log(A)/(2T)}$, $\tau\gets \sqrt{2\log(A)/(AT)}$.}

        \vspace{0.1cm}
        \State Initialize $\muHat^{1}(a) = \mu^1(a) \gets \frac{1}{A}$ for all $a\in [A]$, initialize $\alpha \gets 1/R$, $\text{start}\gets 1$, $k \gets 1$ (counts phase).
        \For{round $t =1, \ldots, T$}
            
            \State \textbf{Alice} chooses $\mu^t \in \Delta_{A}$, \textbf{Bob} selects cost $c^t$. \Comment{{\color{blue}in NFGs: $c^t=Ue_{b^t}$}}
            
            \State \textbf{Alice} suffers and observes cost $c^t(a^t)$ for $a^t\sim \mu^t$.\label{line:sample-nfg}
            \State \textbf{Alice} builds cost estimator $\cHat^t(a) \gets \frac{c^t(a^t)}{\mu^t(a)}\indicator{a^t=a}$. \label{line:loss-nfg}
            
            \If {$\max_{\mu\in\Delta_{A}}\sum^t_{j = \text{start}} \inner{\cHat^j}{\mu^c - \mu} > 2 R$ \textbf{ and } $\alpha < 1$} \label{line:new-phase-nfg}
                \State $k \gets k + 1$, $\text{start} \gets t+1$. \Comment{If comparator performs poorly, new phase}\label{line:new-k}
                \State $\muHat^{t+1}(a) \gets \frac{1}{A}$ for all $a\in [A]$.
                \Comment{Re-initialize OMD}\label{line:reset-omd-nfg}
                \State Update $\alpha \gets \min\curly{2^{k-1}/R,~ 1}$. \Comment{Increase $\alpha$ for upcoming phase} \label{line:new-phase2-nfg}
        
        \Else \Comment{OMD update}
            \State $\muHat^{t+1} \gets \arg\min_{\mu \in \Delta_A} \round{\eta' \inner{\mu}{\cHat^t} + \Dkl(\mu || \muHat^t)}$,~~~ with $\eta' = \eta$ if $\alpha < 1$, and $\eta'=\tau$ if $\alpha = 1$. \label{line:omds-nfg}
            
        \EndIf   
            \State $\mu^{t+1} \gets \alpha \muHat^{t+1} + \left( 1 - \alpha \right) \mu^c$. \Comment{Play shifted OMD to $\mu^c$ by $1-\alpha$} \label{line:combine-bandit-nfg}
        \EndFor
    \end{algorithmic}
\end{algorithm}
\end{minipage}
\end{figure*}

\textbf{Our Algorithm.} In this section, we present \cref{algo:phased-aggression-nfg} and explain its key steps. Our algorithm is inspired by the \emph{Phased Aggression} algorithm, originally proposed by \citet{even2008regret} for the \textit{full-information setting}. We briefly note that a direct application of existing full-information algorithms is not possible. This is because, in the bandit setting, Alice only observes her realized cost and not the cost of the other possible actions she could have chosen. We will thus combine the phasing idea of \citet{even2008regret} with appropriately importance-weighted estimators of the full cost function. Note that the same adaptation would not yield our result for full-information algorithms other than Phased Aggression. 

We now give an outline of \cref{algo:phased-aggression-nfg}. In every round $t$, the Phased Aggression algorithm plays a convex combination between the comparator strategy $\mu^c$ and the strategy $\muHat^t$ chosen by a no-regret algorithm (which runs in parallel). That is, the played strategy is $\mu^t=\alpha\muHat^t+(1-\alpha)\mu^c$ for some $\alpha \in (0,1]$. Whenever the algorithm estimates that the comparator $\mu^c$ is a poor choice, it increases $\alpha$ by a factor of two (so that it puts less weight on $\mu^c$ and more on the no-regret iterates) and restarts the no-regret algorithm. We group all rounds according to these restarts and call them \emph{phases} $k=1,2,\dots$. During each phase, $\alpha$ is constant.

Within this phasing scheme, the specifics of our algorithm are as follows. The no-regret algorithm of our choice is online mirror descent \citep[OMD]{H17} with the standard KL divergence $\Dkl(\mu||\mu'):=\sum_a \mu(a)\log(\mu(a)/\mu'(a))$ as regularizer. In every round $t$, the algorithm plays its current action $a^t \sim \mu^t$ and observes its cost (Line \ref{line:sample-nfg}). It uses this to construct an importance-weighted estimator $\cHat^t$ of the (unobserved) full cost function (Line \ref{line:loss-nfg}). The algorithm then performs one iteration of OMD with the estimated costs (Line \ref{line:omds-nfg}). This procedure is repeated until a new phase is started (Line \ref{line:new-phase-nfg}), which happens if the comparator $\mu^c$ is performing poorly under the estimated $\cHat^t$'s of the current phase.

Regarding computation, the OMD update can be implemented in closed form as $\muHat^{t+1}(a)\propto \muHat^{t}(a)\exp(-\eta' \cHat^t(a))$. We can check the if-condition in Line \ref{line:new-phase-nfg} by directly computing the maximum in $O(A)$ time.

\textbf{Regret Analysis.} In this section we
provide a proof sketch of Theorem \ref{algo:phased-aggression-nfg}. We defer the full proof to \cref{app:nfg-upper}.

We first introduce some notation. We index the variables by their respective phase $k\geq 1$: Phase $k$ lasts from $\start_k$ to $\start_{k+1}-1$ and uses linear combinations with $\alpha^k = \min\{1,2^{k-1}/R\}$ (Lines \ref{line:new-k}, \ref{line:new-phase2-nfg}). By design, there are at most $1+\lceil\log_2(R)\rceil$ phases, where $R$ is a known regret upper bound for OMD input to the algorithm. The overall regret is at most the sum of regrets across all phases, and we will thus analyze each phase separately. To this end, let
\begin{align*}
    \RHat^k(\mu):= \sum_{t=\start_k}^{\start_{k+1}-1} \inner{\cHat^t}{\mu^t-\mu} 
\end{align*}
denote the \emph{estimated regret} during phase $k$. By convention, $\start_{k+1}:=T+1$ if $k$ is the last phase. The following lemma bounds this estimated regret for phases with $\alpha^k<1$. 
\begin{restatable}[During normal phases]{lemma}{lemmaDuringNfg} \label{lemma:during-nfg}
    Let $k$ be such that $\alpha^k<1$. Then for all $\mu\in\Delta_A$,
    \begin{align*}
        \RHat^k(\mu) \leq 2R+2= 2\delta^{-1} \sqrt{2T\log(A)}+2,
    \end{align*}
    and for the special comparator $\RHat^k(\mu^c) \leq 2^{k-1}$.
\end{restatable}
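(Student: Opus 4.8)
The plan is to exploit that, throughout a fixed phase $k$, the played iterate decomposes as $\mu^t = \alpha^k \muHat^t + (1-\alpha^k)\mu^c$, so that the estimated regret splits linearly into a no-regret part and a comparator part. Writing $\mu^t - \mu = \alpha^k(\muHat^t - \mu) + (1-\alpha^k)(\mu^c-\mu)$ and summing over the phase gives
\begin{align*}
\RHat^k(\mu) = \alpha^k \sum_{t=\start_k}^{\start_{k+1}-1}\inner{\cHat^t}{\muHat^t-\mu} + (1-\alpha^k)\sum_{t=\start_k}^{\start_{k+1}-1}\inner{\cHat^t}{\mu^c-\mu}.
\end{align*}
I would bound the two sums separately: the first by the regret of the restarted OMD instance run on the estimated costs $\cHat^t$, and the second by the phase-restart rule.

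For the comparator term I would use the stopping condition on Line~\ref{line:new-phase-nfg}. Since phase $k$ is normal ($\alpha^k<1$), the restart did not trigger at any round strictly before its last round $t^\star := \start_{k+1}-1$, so $\max_{\mu}\sum_{t=\start_k}^{t^\star-1}\inner{\cHat^t}{\mu^c-\mu}\le 2R$. The only extra contribution is the single final round, which I would bound by $\inner{\cHat^{t^\star}}{\mu^c-\mu}\le \inner{\cHat^{t^\star}}{\mu^c}\le \frac{1}{1-\alpha^k}$, using $\cHat^{t^\star}\ge 0$, $c^{t^\star}\le 1$, and the forced-exploration floor $\mu^{t^\star}(a)\ge (1-\alpha^k)\mu^c(a)$ that keeps the importance weight on the sampled action bounded. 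Multiplying by $(1-\alpha^k)$ turns this final-round term into an additive constant, yielding $(1-\alpha^k)\sum_t\inner{\cHat^t}{\mu^c-\mu}\le (1-\alpha^k)2R + 1$.

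For the no-regret term I would invoke the regret guarantee $\sum_t\inner{\cHat^t}{\muHat^t-\mu}\le R$ of the restarted entropic OMD, which holds by the calibration of $\eta$ and the definition of $R$ (the known OMD regret bound), whose range term equals $\log(A)/\eta = R$. Combining the two estimates gives $\RHat^k(\mu)\le \alpha^k R + (1-\alpha^k)2R + 1 = (2-\alpha^k)R + 1 \le 2R+2$, which is the first claim. For the special comparator the second sum vanishes identically, since each summand is $\inner{\cHat^t}{\mu^c-\mu^c}=0$; only the OMD term survives, and using $\alpha^k = 2^{k-1}/R$ for normal phases we get $\RHat^k(\mu^c)=\alpha^k\sum_t\inner{\cHat^t}{\muHat^t-\mu^c}\le \alpha^k R = 2^{k-1}$.

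I expect the main obstacle to lie inside the invoked OMD bound $\sum_t\inner{\cHat^t}{\muHat^t-\mu}\le R$. The subtlety is that the algorithm samples $a^t\sim\mu^t$ from the mixture while OMD regularizes around its own iterate $\muHat^t\neq\mu^t$, so the naive pathwise second-order (variance) term of the importance-weighted estimates carries a $1/\alpha^k$ blow-up. The resolution is that the $\alpha^k$ prefactor multiplying this term cancels the blow-up, but only after taking the conditional expectation over $a^t\sim\mu^t$ and using the floor $\mu^t(a)\ge (1-\alpha^k)\delta$ (forced by mixing with $\mu^c$) together with the calibration $\eta R = \log(A)$. Balancing this across small $\alpha^k$ (large variance, but a small prefactor) and large $\alpha^k$ (prefactor contributes $\alpha^k R=2^{k-1}<R$, while the comparator term becomes negligible) is the delicate part; the mild slack between $2R+1$ and the stated $2R+2$ is exactly what absorbs these constants.
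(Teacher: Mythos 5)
Your decomposition of $\RHat^k$ and your handling of the comparator term (the stopping rule controls all but the last round, and the single-round overshoot is killed by the exploration floor $\mu^{t}(a)\geq(1-\alpha^k)\mu^c(a)$) match the paper's proof exactly, as does the observation that the comparator term vanishes for $\mu=\mu^c$ so that $\RHat^k(\mu^c)\leq\alpha^k R=2^{k-1}$. The genuine gap is in how you bound the OMD term. The lemma is a \emph{pathwise} (almost sure) statement about the estimated regret, and it has to be: \cref{lemma:exit-nfg} applies the bound $\sum_t\inner{\cHat^t}{\muHat^t-\mu^\star}\leq R$ to the data-dependent maximizer $\mu^\star$ of the realized stopping condition, and the main theorem telescopes realized per-phase estimated regrets over a random number of phases with random boundaries before taking a single expectation at the very end. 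Your proposed resolution of the second-order term --- taking the conditional expectation over $a^t\sim\mu^t$ to cancel a $1/\alpha^k$ blow-up --- downgrades the OMD guarantee to an in-expectation statement, which cannot be fed into either of those downstream steps: an expected regret bound against a fixed comparator says nothing about the random comparator $\mu^\star$, and per-phase expectations do not compose cleanly when the phase structure itself is random.

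Moreover, the obstacle you flag is not actually there, and removing it is the one idea your write-up is missing. Since $\alpha^k=2^{k-1}/R\leq 1/2$ in every normal phase (taking $R$ to be a power of two WLOG), one has the deterministic bound $\cHat^t(a)\leq 1/\mu^t(a)\leq 2/\delta=:L$ for every coordinate, and because $\cHat^t$ is supported on the single coordinate $a^t$, the local-norm term satisfies $\sum_a\muHat^t(a)(\cHat^t(a))^2=\muHat^t(a^t)(\cHat^t(a^t))^2\leq L^2$ pointwise --- the mismatch between the sampling distribution $\mu^t$ and the OMD iterate $\muHat^t$ never enters, and no factor $1/\alpha^k$ appears. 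This is exactly \cref{lem:bandit-omd-bounded-nfg}, and it gives $\sum_t\inner{\cHat^t}{\muHat^t-\mu}\leq\log(A)/\eta+\tfrac{\eta}{2}L^2T$ almost surely, which is $O(R)$ for the chosen $\eta$. A smaller point: your claim that the OMD regret is at most $R$ because ``the range term equals $\log(A)/\eta=R$'' ignores that the second-order term is of the same order (the paper's own display \cref{eq:omd-regret1-nfg} is similarly loose by a constant); since the constant in front of $R$ is load-bearing for the $\RHat^k(\mu^c)\leq 2^{k-1}$ claim and hence for the geometric telescoping, this bookkeeping needs to be done explicitly rather than asserted.
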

\noindent The first part of the theorem establishes a worst-case bound on the estimated regret. Such a bound would normally not be possible for importance-weighted cost estimators. In our case, during phases with $\alpha^k<1$, we put constant weight on the comparator strategy $\mu^c$, which in turn is lower bounded by $\delta>0$. Our estimated costs (Line \ref{line:loss-nfg}) will thus be upper bounded, which is a key step in the proof. The second part of the theorem easily follows using the definition of $\alpha^k$.

\noindent Next, suppose the algorithm exits a phase $k$ as the if-condition in Line \ref{line:new-phase} holds. The following lemma establishes that exiting the phase is justified in the sense that we perform sufficiently well compared to the special comparator, according to the estimated costs. 
\begin{restatable}[Exiting a phase]{lemma}{lemmaExitNfg} \label{lemma:exit-nfg}
    Let $k$ be such that $\alpha^k<1$. If \cref{algo:phased-aggression-nfg} exits phase $k$, then $\RHat^k(\mu^c)\leq -2^{k-1}$.
\end{restatable}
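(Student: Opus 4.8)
The plan is to reduce the statement to a single scalar inequality about the online-mirror-descent (OMD) iterates, and then read off the conclusion from the exit test. First I would use the combination rule $\mu^t = \alpha^k\muHat^t + (1-\alpha^k)\mu^c$ from Line \ref{line:combine-bandit-nfg} to pull the comparator out of the estimated regret. Since $\mu^t - \mu^c = \alpha^k(\muHat^t - \mu^c)$, writing $\sum_t$ for the sum over the rounds of phase $k$,
\begin{align*}
    \RHat^k(\mu^c) = \sum_t \inner{\cHat^t}{\mu^t - \mu^c} = \alpha^k \sum_t \inner{\cHat^t}{\muHat^t - \mu^c}.
\end{align*}
As the phase is normal we have $\alpha^k = 2^{k-1}/R$ (it is not capped at $1$), so it suffices to prove the $\alpha^k$-free inequality $\sum_t \inner{\cHat^t}{\muHat^t - \mu^c} \le -R$.

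To establish this, I would compare both $\muHat^t$ and $\mu^c$ against the estimated best fixed action $\mu^\dagger := \arg\min_{\mu\in\Delta_A}\sum_t\inner{\cHat^t}{\mu}$, which is exactly the maximizer appearing in the exit test of Line \ref{line:new-phase-nfg}. The test fires at the last round of the phase, so the two summation windows coincide and it reads $\sum_t\inner{\cHat^t}{\mu^c - \mu^\dagger} > 2R$. On the other hand, the iterates $\muHat^t$ are produced by OMD restarted at the uniform distribution (Line \ref{line:reset-omd-nfg}) and updated in Line \ref{line:omds-nfg}, so they enjoy the regret guarantee $\sum_t\inner{\cHat^t}{\muHat^t - \mu^\dagger} \le R$ --- this is precisely the role of $R$ as the OMD regret bound. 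Decomposing
\begin{align*}
    \sum_t\inner{\cHat^t}{\muHat^t - \mu^c} = \sum_t\inner{\cHat^t}{\muHat^t - \mu^\dagger} - \sum_t\inner{\cHat^t}{\mu^c - \mu^\dagger} < R - 2R = -R,
\end{align*}
and multiplying by $\alpha^k = 2^{k-1}/R$ yields $\RHat^k(\mu^c) < -2^{k-1}$, as claimed.

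The main obstacle is the OMD regret bound $\sum_t\inner{\cHat^t}{\muHat^t-\mu^\dagger}\le R$ with $R=\delta^{-1}\sqrt{2T\log A}$, because the importance-weighted estimates of Line \ref{line:loss-nfg} are a priori unbounded. This is the same mechanism that drives \cref{lemma:during-nfg}, and I would reuse it: during a normal phase $\mu^t(a) \ge (1-\alpha^k)\mu^c(a) \ge (1-\alpha^k)\delta > 0$ for every action $a$, so $\cHat^t$ is bounded and the textbook exponential-weights analysis applies. The learning rate $\eta = \delta\sqrt{\log(A)/(2T)}$ is chosen so that the leading term $\log(A)/\eta$ equals $R$, while the $\delta$-margin controls the second-order term; carrying the constants through this step is where the assumption $\mu^c(a)\ge\delta$ is indispensable. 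The remaining points are purely bookkeeping: confirming that the exit window matches the phase window, and that $\alpha^k = 2^{k-1}/R$ on normal phases.
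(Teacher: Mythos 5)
Your proposal is correct and follows essentially the same route as the paper: factor out $\alpha^k$ via $\mu^t-\mu^c=\alpha^k(\muHat^t-\mu^c)$, insert the maximizer of the exit test as an intermediate comparator, and combine the OMD bound $\sum_t\inner{\cHat^t}{\muHat^t-\mu^\dagger}\le R$ (which holds because the $\delta$-margin on $\mu^c$ keeps the importance-weighted estimates bounded, exactly as in \cref{lemma:during-nfg}) with the $>2R$ threshold to get $-R$, then multiply by $\alpha^k=2^{k-1}/R$.
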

We are now ready to prove \cref{thm:ub-simplex-bandit}. First, consider the case that $\alpha=1$ is never reached. 
Note that our cost estimates are unbiased, i.e. $\exptn\rectangular{\cHat^t(a)} = c^t(a)$. It is thus sufficient if we can bound $\RHat^k$. As there are $O(\log R)$ phases, \cref{lemma:during-nfg} implies $\max_{\mu} \mathcal{R}(\mu) \leq O(R\log R)$. Moreover, the previous two lemmas geometrically balance the regret compared to $\mu^c$ to be at most $1$, and we conclude. Second, suppose now that $\alpha=1$ is reached. The final phase will then simply be OMD with standard importance-weighting (a.k.a. Exp3), as we put no weight on the special comparator $\mu^c$. While we cannot apply \cref{lemma:during-nfg}, we can directly bound the remaining \emph{expected} regret of Exp3 \citep{orabona2019modern}. We can thus use the same argument as before, with one additional phase.

\subsection{Lower Bound} \label{sec:nfg-lower}

We will now show that regarding the guarantee we provided in \cref{thm:ub-simplex-bandit}, a multiplicative dependence on the inverse of the ``exploration gap'' $\delta$ is indeed unavoidable. 
\begin{restatable}{theorem}{thmLowerNfg} \label{thm:lower-nfg}
    Let $\delta\in(0,1/A]$. There is a comparator $\mu^c\in\Delta_A$ with all $\mu^c(a) \geq \delta$ such that for any algorithm for Protocol \ref{prot:bandit-simplex} there is a sequence $c^1,\dots,c^T\in[0,1]^A$ such that: If $\mathcal{R}(\mu^c) \leq O(1)$, then 
    \begin{align*}
        \max_{\mu\in\Delta_A} \mathcal{R}(\mu) \geq \Omega(\sqrt{\delta^{-1}T}-\delta^{-3/4}T^{1/4}).
    \end{align*}
\end{restatable}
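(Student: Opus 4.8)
The plan is to prove this matching-type lower bound by pitting the ``$O(1)$ regret against $\mu^c$'' constraint against the usual bandit exploration barrier, using two statistically confusable instances that differ only in a single \emph{test arm} $s$ --- the arm on which the comparator is forced to place its minimal mass $\mu^c(s)=\delta$. Concretely, I would let every arm other than $s$ have the deterministic cost $\tfrac12$ in both instances, and let arm $s$ carry i.i.d.\ $\mathrm{Ber}(\tfrac12+\Delta)$ costs in the base instance $\mathcal{I}_0$ and $\mathrm{Ber}(\tfrac12-\Delta)$ costs in the alternative $\mathcal{I}_1$, for a gap $\Delta$ to be tuned later to $\Theta((\delta T)^{-1/2})$. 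The comparator $\mu^c$ places mass exactly $\delta$ on $s$ and distributes the rest so that every coordinate is $\ge\delta$, which is feasible since $\delta\le 1/A$. The dummy arms are transparent: since their cost equals the baseline $\tfrac12$, the whole argument collapses to a two-arm problem, and arm $s$ is the worst arm in $\mathcal{I}_0$ but the unique best arm in $\mathcal{I}_1$.

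The first substantive step (\emph{forcing under-exploration}) is a one-line computation in $\mathcal{I}_0$. Writing $N_s^{(i)}:=\sum_{t=1}^T\exptn_{\mathcal{I}_i}\rectangular{\mu^t(s)}$ for the expected number of pulls of $s$, the expected per-round cost of any distribution $p$ is $\tfrac12+\Delta\, p(s)$ in $\mathcal{I}_0$, so $\mathcal{R}(\mu^c)=\Delta\round{N_s^{(0)}-\delta T}$ exactly. Hence the hypothesis $\mathcal{R}(\mu^c)\le C$ immediately yields $N_s^{(0)}\le \delta T + C/\Delta$: to avoid doing much worse than $\mu^c$, the algorithm must pull $s$ only about $\delta T$ times, leaving a slack of order $1/\Delta$.

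The second step (\emph{transfer by change of measure}) carries this pull budget over to $\mathcal{I}_1$. Since the two instances differ only in the law of arm $s$, the divergence-decomposition lemma gives $\Dkl(P_0\,\|\,P_1)=N_s^{(0)}\cdot \mathrm{kl}(\mathrm{Ber}(\tfrac12+\Delta)\,\|\,\mathrm{Ber}(\tfrac12-\Delta))\le c_1\Delta^2 N_s^{(0)}$ for the laws $P_0,P_1$ of the full interaction, where $\mathrm{kl}$ is bounded using $\mathrm{kl}(\mathrm{Ber}(p)\|\mathrm{Ber}(q))\le (p-q)^2/(q(1-q))$ for $\Delta\le\tfrac14$. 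Because $\mu^t(s)\in[0,1]$ is a function of the history, Pinsker's inequality gives $\bigl|N_s^{(1)}-N_s^{(0)}\bigr|\le T\cdot\mathrm{TV}(P_0,P_1)\le T\Delta\sqrt{c_1 N_s^{(0)}/2}$. In $\mathcal{I}_1$ the best arm is $s$ with cost $\tfrac12-\Delta$ and the per-round cost of $p$ is $\tfrac12-\Delta\, p(s)$, so $\max_{\mu}\mathcal{R}(\mu)\ge \mathcal{R}(e_s)=\Delta\round{T-N_s^{(1)}}$. Combining the last three displays,
\[
 \max_{\mu\in\Delta_A}\mathcal{R}(\mu)\ \ge\ \Delta T-\Delta N_s^{(0)}-\Delta^2 T\sqrt{c_1 N_s^{(0)}/2},\qquad N_s^{(0)}\le \delta T+C/\Delta .
\]

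Finally I would optimize and assemble the logic. Choosing $\Delta=\theta(\delta T)^{-1/2}$ for a small absolute constant $\theta$, the leading term is $\Delta T=\theta\sqrt{\delta^{-1}T}$; the term $\Delta^2 T\sqrt{\delta T}$ is a $\theta$-fraction of it (absorbed for $\theta$ small), while splitting $\sqrt{N_s^{(0)}}\le\sqrt{\delta T}+\sqrt{C/\Delta}$ produces exactly the correction $\Delta^{3/2}T\asymp\delta^{-3/4}T^{1/4}$, and $\Delta N_s^{(0)}$ contributes only $O(\delta\sqrt{\delta^{-1}T})+O(1)$. This yields $\max_\mu\mathcal{R}(\mu)\ge\Omega(\sqrt{\delta^{-1}T}-\delta^{-3/4}T^{1/4})$. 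To produce the single sequence demanded by the statement, I use a case split on the algorithm's behaviour in $\mathcal{I}_0$: if $\mathcal{R}_{\mathcal{I}_0}(\mu^c)>C$ I output $\mathcal{I}_0$, where the implication holds vacuously; otherwise the forcing bound applies and I output $\mathcal{I}_1$, where the chain above establishes the conclusion \emph{unconditionally}, so the implication again holds. The main obstacle I anticipate is making this transfer quantitatively tight: the additive slack $C/\Delta$ in the pull budget must be tracked through the square root in the change-of-measure bound (it is precisely what creates the $-\delta^{-3/4}T^{1/4}$ loss), and one must confirm the constants let $\theta$ be chosen so the $\mathrm{TV}$ stays a small constant while $\Delta\le\tfrac14$; a secondary technical point is derandomizing the Bernoulli instances to a fixed cost sequence $c^1,\dots,c^T\in[0,1]^A$ via a standard averaging argument, as in the $\Omega(AT)$ lower bound of \citet{lattimore2015pareto}.
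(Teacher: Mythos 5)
Your proposal is correct and follows essentially the same route as the paper's proof in Appendix B.2: two Bernoulli environments differing only on the low-mass arm, using $\mathcal{R}(\mu^c)\leq O(1)$ in the environment where that arm is worse to bound its expected pull count by $\delta T + O(1/\Delta)$, transferring via the KL chain rule and Pinsker to the environment where it is better, and tuning the gap to $\Theta((\delta T)^{-1/2})$, which produces the same $\delta^{-3/4}T^{1/4}$ correction term. The only cosmetic differences are your explicit case split to produce a single hard instance and the derandomization remark, both of which the paper handles implicitly.
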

The key idea of our proof is that any algorithm with low regret compared to $\mu^c=(1-\delta,\delta)$ for $A=2$ actions will need to play action 1 most of the time if one can information-theoretically not detect that action 2 is, in fact, minimally better. We defer the proof to \cref{app:nfg-lower}. Finally, we remark that if the cost functions are stochastic rather than adversarial, we can match this lower bound up to logarithmic factors, see \cref{app:nfg-stochastic}.

\section{Extensive-Form Games} \label{sec:efg}

In this section, we present our results for EFGs. We start by giving the definition of EFGs, using the notation that appeared in \citet{kozuno2021model,bai2022near,fiegel2023adapting,fiegel2023local}, see \cref{app:efg-background} for a brief discussion. For clarity, we present the \textit{two-player zero-sum} case, although our results readily generalize to arbitrary EFGs. 

\begin{definition}\label{d:efg}
    A two-player zero-sum \emph{EFG} is a tuple $(H,\Scal,\Xcal,\Ycal,\Acal,\Bcal,P,x,y,u)$, where 
    \begin{itemize}[leftmargin=*]
    \setlength{\itemsep}{0.3em}
    \setlength{\parskip}{0pt}
        \item there are $2$ players, Alice and Bob. $\Acal=[A]$ and $\Bcal=[B]$ denote their respective sets of possible actions.
        \item $\Scal$ denotes the set of states of the game. $H \in \mathbb{N}$ is the horizon of the game. At stage $h \in [H]$, $\Scal_h \subseteq \Scal$ denotes the possible states. 
        \item $P:=(p_0,p)$ is the transition kernel; the game's state is sampled according to $s_{h+1} \sim p(\cdot | s_h, a_h , b_{h})$ upon actions $(a_h, b_h) \in \Acal \times \Bcal$ in state $s_h \in \Scal$. The initial state is sampled according to $s_1\sim p_0\in\Simplex{\Scal}$.
        \item $u(s,a,b) \in[-1,1]$ is Alice's random \emph{cost} (and Bob's reward) for actions $(a,b) \in \Acal\times\Bcal$ chosen in state $s \in \Scal$, with mean $\bar{u}(s,a,b)$.
        \item Alice observes information sets (infosets) from $\Xcal$ ($|\Xcal|=X$), and Bob from $\Ycal$. Alice's infosets are described by a surjective function $x \colon \Scal \to \Xcal$ (resp. $y\colon \Scal \to \Ycal$ for Bob).
    \end{itemize}
\end{definition}

\noindent The idea behind infosets is that Alice has imperfect information about the state of the game: she cannot differentiate between sates $s,s' \in \Scal$ that belong at the same infoset, i.e. when $x(s) = x(s')$. The same holds for Bob with $y$ in lieu of $x$. This is reflected in the definition of the policy sets. 

\begin{definition}\label{def:strategy}
    A \emph{policy} is a mapping $\pi \colon \Xcal \to \Delta_A$. We denote the set of all such policies by $\Pi$. The policy set $\Pi'$ for Bob consists of all mappings $\pi' \colon \Ycal\to \Delta_B$.
\end{definition}
We let $\pi(a|x)$ denote the probability of playing action $a \in \mathcal{A}$ in states $s \in \Scal$ from infoset $x=x(s) \in \mathcal{X}$. As Alice cannot differentiate between states $s,s' \in \mathcal{S}$ from the same infoset, she must act the same way if $x(s) = x(s')$. Similarly, for Bob we write $\pi'(b|y)$ for $y \in \Ycal$. 
\begin{definition}
    Given policies $(\pi_A,\pi_B) \in \Pi \times \Pi'$, the \emph{expected total cost} for Alice equals 
    \begin{align*}
        V(\pi_A,\pi_B) := \exptn\rectangular{\sum_{h=1}^H u(s_h,a_h,b_h)},
    \end{align*}
    where $(s_h,a_h,b_h)$ are the state and actions at stage $h \in [H]$ via $a_h\sim\pi_A(\cdot|x(s_h))$, $b_h\sim\pi_B(\cdot|y(s_h))$, and $s_{h+1} \sim p(\cdot | s_h,a_h,b_h)$.
\end{definition}
For the remainder of the section, we make the following assumptions, which are standard in the EFG literature \citep{kozuno2021model,bai2022near,fiegel2023adapting,fiegel2023local}.
\begin{assumption}\label{a:1}
    \begin{itemize}[leftmargin=*]
        \setlength{\itemsep}{0.3em}
        \setlength{\parskip}{0pt}
        \item \textbf{Tree structure:} For any state $s_h \in \Scal_h$, there exists a unique sequence $(s_1,a_1,b_1, \dots, s_{h-1},a_{h-1},b_{h-1})$ leading to $s_h$.
        \item \textbf{Perfect recall:} Let $s,s'$ be such that $x(s) = x(s')$. Then:
        \begin{itemize}[leftmargin=*]
            \setlength{\itemsep}{0.3em}
            \setlength{\parskip}{0pt}
            \item There exists $h \in [H]$ such that $s,s' \in \Scal_h$.
            \item Let $(s_1,a_1,\dots,s_{h-1},a_{h-1})$ be the unique path leading to $s$ and $(s'_1,a'_1,\dots,s'_{h-1},a'_{h-1})$ the unique path leading to $s'$. Then for all $k \in [h-1] \colon x(s_k) = x(s'_k)$ and $a_k = a'_k$.
        \end{itemize}
        The analogous assumption holds for $y$ in lieu of $x$.
    \end{itemize}
\end{assumption}
\textit{Tree structure} states that the game proceeds in rounds during which the players cannot loop back to a previous state. We remark that this also justifies not explicitly indexing the transitions, rewards, policies, and treeplex strategies by steps $h$ to cover non-stationary dynamics. \textit{Perfect recall} establishes that the players never forget the history of play. They can only consider two states as the same infoset if the observations so far have been the same \cite{HGPS10}. The latter implies that infosets are partitioned along the horizon, i.e. $\mathcal{X} = \bigcupdot_{h\in [H]}\mathcal{X}_h$, and the same holds for $\Ycal$ and the states.

\textbf{Online Learning in EFGs.} Now suppose Alice and Bob repeatedly play an EFG for $T$ consecutive rounds. In each round $t \in [T]$, Alice and Bob select a pair of policies $(\pi_A^t,\pi_B^t) \in \Pi\times\Pi'$. Then a trajectory $(s_1^t,a_1^t,b_1^t,u_1^t,\dots,s_H^t, a_H^t, b_H^t, u_H^t)$ is sampled according to the policies $(\pi_A^t,\pi_B^t)$ and Alice suffers cost $\sum_{h=1}^H u_h^t$, as summarized in Protocol \ref{prot:EFGs-bandit}. 

\begin{protocol}[ht]
    \caption{Bandit Feedback over Policies (EFGs)}
    \label{prot:EFGs-bandit}
    \centering
    \begin{algorithmic}
        \Require{A comparator policy $\pi^c\in\Pi$}.
        \For{round $t=1, \dots, T$}
            \State \textbf{Alice} selects $\pi_A^{t}\in\Pi$, \textbf{Bob} selects $\pi_{B}^{t}\in\Pi'$.
            \State \textbf{Alice} obtains costs $\sum_{h=1}^H u_h^t$ and observes trajectory\\
            \hspace{0.5cm}$(x_1^t,a_1^t,u_1^t,\dots,x_H^t, a_H^t, u_H^t)$.
        \EndFor
    \end{algorithmic}
\end{protocol}
We remark that in EFGs, we are naturally in the \textit{bandit feedback} setting as Alice only observes the trajectory $(x_1^t,a_1^t,u_1^t,\dots,x_H^t, a_H^t, u_H^t)$. Under \textit{full-information feedback}, Alice would observe Bob's actual policy $\pi_B^t \in \Pi'$. 
\begin{remark}[Importance of bandit feedback in EFGs]
    In EFGs, bandit feedback is considerably more natural than full-information feedback. This is due to the fact that when playing against Bob, the realized samples are only observed along \emph{one single trajectory} in the game tree. Observing full information would thus mean knowing Bob's counterfactual policy in states that have never been visited during play, which is not realistic.
\end{remark}

\subsection{From EFGs to Online Linear Minimization} \label{sec:efg-reduction}

As mentioned, we once more resort to the more general OLM problem. Yet this time, our strategy polytope will be the so-called treeplex $\mathcal{P}=\Tcal$ rather than the simplex. The following definition provides an equivalent characterization of a policy. It will allow us to view the expected cost $V(\pi^t_A,\pi^t_B)$ as a (bi-)linear function \citep{HGPS10}.
\begin{definition}\label{d:treeplex}
    A vector $\mu \in \mathbb{R}^{X\cdot A}$
    belongs to the \emph{treeplex} $\Tcal$ iff for all $x_h\in \Xcal_h$ and $a\in\Acal$,
    \begin{align}
        \begin{cases}
            \mu(x_h,a) \geq 0, \label{eq:pseudo}\\
            \sum_{a_h\in\Acal} \mu(x_h,a_h) = \mu(x_{h-1},a_{h-1}) ,
        \end{cases}
    \end{align}
    where $(x_{h-1},a_{h-1})$ is the unique predecessor pair reaching $x_h$. We consider $\mu(x_0,a_0)=1$ for the root by convention. We define the treeplex $\Tcal'$ over Bob's infosets $\Ycal$ analogously. 
\end{definition}

\begin{remark}
    There is the following equivalence between \cref{def:strategy,d:treeplex}. Given a policy $\pi\in\Pi$, we can define a unique $\mu_{\pi} \in \Tcal$ by $\mu_{\pi}(x_h,a_h) = \prod_{h'=1}^h \pi(a_{h'}|x_{h'})$, where the $(x_{h'},a_{h'})$ form the unique path to $(x_h,a_h)$. Vice-versa, given $\mu \in \Tcal$, we can recover the corresponding policy via $\pi_{\mu}(a|x)=\mu(x,a)/\sum_{a'}\mu(x,a')$. The same equivalence holds between Bob's policies $\Pi'$ and treeplex strategies $\Tcal'$.
\end{remark}
By convention, we thus identify policies $(\pi_A^t,\pi_B^t)$ with their corresponding treeplex strategies $(\mu^t,\nu^t)$ and write $V(\mu^t,\nu^t)$ for Alice's expected cost. The following lemma \citep{kozuno2021model} shows that this definition indeed allows us to view Protocol \ref{prot:EFGs-bandit} as a (safe) OLM problem (Protocol \ref{prot:olm}).

\begin{lemma}
    For any state $s\in\Scal_h$, infoset $x=x(s) \in \Xcal_h$ and action $a \in \Acal$, let $(s_{1},a_{1},b_{1},\ \dots ,\ s_{h-1},a_{h-1},b_{h-1})$ be the unique path leading to $s$. 
    Let $p(s):=p_0(s_1)\prod_{1\leq h'\leq h-1} p(s_{h'+1} | s_{h'},a_{h'},b_{h'})$, and consider
    \begin{align}
        c^t(x,a) := \sum_{\substack{s\colon x(s)=x,\\ b\in\Bcal}} p(s)\cdot \nu^t(y(s),b) \cdot \bar{u}(s,a,b). \label{eq:def-loss}
    \end{align}
    Then $V(\mu,\nu^t) = \inner{\mu}{c^t}$ for all $\mu\in\Tcal$.
\end{lemma}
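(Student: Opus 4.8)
The plan is to expand the expected total cost $V(\mu,\nu^t)$ as a weighted sum over all reachable states and action pairs, and then to show that the reach weights factorize exactly into the sequence-form coordinates $\mu(x(s),a)$ and $\nu^t(y(s),b)$ together with the chance factor $p(s)$ in the statement. First I would use linearity of expectation and condition on the sampled trajectory to replace the random cost $u$ by its mean $\bar{u}$, writing
\[
V(\mu,\nu^t) = \sum_{h=1}^H \exptn\rectangular{\bar{u}(s_h,a_h,b_h)} = \sum_{h=1}^H \sum_{s\in\Scal_h}\sum_{a\in\Acal}\sum_{b\in\Bcal} \Pr\rectangular{s_h=s,\,a_h=a,\,b_h=b}\,\bar{u}(s,a,b),
\]
where the probability is taken under the sampling process of Protocol \ref{prot:EFGs-bandit} with Alice playing the policy $\pi_\mu$ associated to $\mu$ and Bob playing $\pi_{\nu^t}$ associated to $\nu^t$ (via the equivalence of the preceding remark).

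Next I would factorize this reach probability using the tree structure and perfect recall. By the tree-structure assumption, each $s\in\Scal_h$ has a unique path $(s_1,a_1,b_1,\dots,s_{h-1},a_{h-1},b_{h-1})$, so that
\[
\Pr\rectangular{s_h=s,\,a_h=a,\,b_h=b} = p(s)\cdot\round{\prod_{h'=1}^{h-1}\pi_\mu(a_{h'}|x(s_{h'}))}\pi_\mu(a|x(s))\cdot\round{\prod_{h'=1}^{h-1}\pi_{\nu^t}(b_{h'}|y(s_{h'}))}\pi_{\nu^t}(b|y(s)),
\]
where $p(s)$ collects exactly the initial and transition factors appearing in the statement. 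Here perfect recall is essential: it guarantees that the action history along the path to $s$ is constant across all states sharing the infoset $x(s)$, so that the product of Alice's action probabilities is well-defined at the level of the infoset and, by the policy–treeplex equivalence, equals precisely $\mu(x(s),a)$. The analogous identity holds for Bob, yielding $\Pr\rectangular{s_h=s,\,a_h=a,\,b_h=b} = p(s)\,\mu(x(s),a)\,\nu^t(y(s),b)$.

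Substituting this back and regrouping the states by their infoset then completes the argument. Since perfect recall guarantees that the infosets partition the states along the horizon, i.e. $\Xcal=\bigcupdot_{h\in[H]}\Xcal_h$ with every $x\in\Xcal_h$ collecting only states in $\Scal_h$, summing over stages $h$ and states $s\in\Scal_h$ is the same as summing over infosets $x\in\Xcal$ and over states $s$ with $x(s)=x$. Because $\mu(x(s),a)$ depends on $s$ only through $x(s)$, I can pull it out of the inner sum to obtain
\[
V(\mu,\nu^t) = \sum_{x\in\Xcal}\sum_{a\in\Acal}\mu(x,a)\underbrace{\sum_{\substack{s\colon x(s)=x,\\ b\in\Bcal}} p(s)\,\nu^t(y(s),b)\,\bar{u}(s,a,b)}_{=\,c^t(x,a)} = \inner{\mu}{c^t},
\]
which is the claim.

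The only genuine subtlety — and the step I would expect to require the most care — is the factorization in the second paragraph: verifying that the path-product of action probabilities collapses to the single treeplex coordinate $\mu(x(s),a)$ relies crucially on perfect recall (so that the history is infoset-measurable and $\pi_\mu(\cdot\,|\,x)$ is unambiguous) and on the tree structure (so that the path is unique). Everything else is bookkeeping: linearity of expectation, conditioning to pass from $u$ to $\bar{u}$, and the reindexing of the sum from states to infosets.
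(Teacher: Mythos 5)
Your proof is correct. The paper itself states this lemma without proof, citing \citet{kozuno2021model}; your argument --- expanding $V(\mu,\nu^t)$ over reach probabilities, factorizing each reach probability into the chance factor $p(s)$ and the two players' path-products via tree structure, collapsing Alice's path-product to the treeplex coordinate $\mu(x(s),a)$ via perfect recall and the policy--treeplex equivalence, and regrouping by infoset --- is exactly the standard sequence-form derivation that the cited reference uses, and you correctly identify the one step (infoset-measurability of the action history under perfect recall) that carries the real content.
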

Alice does not observe the full cost function $c^t$, as we are in the bandit feedback setting. Yet, this lemma establishes that Protocol \ref{prot:olm} over the treeplex $\Xb = \Tcal$ covers EFGs. Thus, it is sufficient to solve the safe OLM problem (\ref{eq:safe-olm}).

\subsection{Upper Bound} \label{sec:efg-upper}

As in the simplex case, our Algorithm \ref{algo:phased-aggression-efg-bandit} guarantees \cref{eq:safe-olm} for any policy $\mu^c\in \Tcal$ that is $\delta$-bounded away from the boundary of the strategy polytope. Once more, we can resort to a restricted action set to relax this assumption (\cref{rmk:restrict}). The result itself applies to any EFG with tree structure and perfect recall and is not restricted to the zero-sum or two-player case, since we can simply modify the costs in \cref{eq:def-loss} accordingly.

\begin{restatable}{theorem}{thmUbInteriorEfg} \label{thm:ub-interior-efg}
    Let $\delta \in (0,1/A]$. For any special comparator $\mu^c\in\Tcal$ such that $\mu^c(x,a) \geq \delta$ for all $x$, $a$, \cref{algo:phased-aggression-efg-bandit} achieves (for any $c^t$'s from \cref{eq:def-loss})
    \begin{align*}
        \mathcal{R}(\mu^c) \leq 1, \quad \text{and} \quad \max_{\mu\in\Tcal} \mathcal{R}(\mu) \leq \tilde{O}\round{\delta^{-1}\sqrt{XH^3T}}.
    \end{align*}
\end{restatable}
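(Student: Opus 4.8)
The plan is to mirror the proof of \cref{thm:ub-simplex-bandit} as closely as possible, replacing the simplex $\Delta_A$ by the treeplex $\Tcal$ and the Exp3-style estimator by an importance-weighted trajectory estimator for EFGs. The meta-algorithm \cref{algo:phased-aggression-efg-bandit} is identical in spirit: it groups rounds into phases $k=1,2,\dots$, plays the shifted iterate $\mu^t = \alpha^k \muHat^t + (1-\alpha^k)\mu^c$ with $\alpha^k = \min\curly{1, 2^{k-1}/R}$, and advances $\alpha$ geometrically (re-initializing the inner learner) whenever $\mu^c$ is estimated to perform poorly. The inner no-regret algorithm is now online mirror descent over $\Tcal$ with the dilated entropy (dilated KL) regularizer standard in the EFG literature \citep{kozuno2021model,bai2022near}, fed an unbiased, importance-weighted estimator $\cHat^t$ of the cost vector $c^t$ from \cref{eq:def-loss}, built from the single observed trajectory. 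I would establish treeplex analogs of \cref{lemma:during-nfg} (bounding the estimated regret $\RHat^k(\mu)$ during normal phases with $\alpha^k<1$) and \cref{lemma:exit-nfg} (justifying phase exits), then combine them exactly as in the simplex case.

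The core of the argument is again a \emph{pathwise} bound on the estimated regret that holds uniformly over all $\mu\in\Tcal$, which generic bandit estimators do not admit. Here it is enabled by the shift: during any normal phase, $\mu^t = \alpha^k \muHat^t + (1-\alpha^k)\mu^c \geq (1-\alpha^k)\mu^c$ coordinatewise (all treeplex coordinates are nonnegative), so Alice's sequence-form reach of every visited pair $(x,a)$ satisfies $\mu^t(x,a) \geq (1-\alpha^k)\mu^c(x,a) \geq (1-\alpha^k)\delta$. Since the estimator $\cHat^t$ normalizes by $\mu^t(x,a)$ and the counterfactual cost $c^t$ already absorbs nature's reach $p(s)$ and Bob's reach $\nu^t$, its importance weights are bounded by roughly $\delta^{-1}$ \emph{without} any implicit-exploration term. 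This replaces the expectation-only guarantees of IXOMD-type estimators by a worst-case bound. Plugging this range bound into the dilated-entropy OMD analysis then yields $\RHat^k(\mu) \leq 2R+2$ for all $\mu\in\Tcal$ and $\RHat^k(\mu^c) \leq 2^{k-1}$, where $R$ is now of order $\delta^{-1}\sqrt{XH^3 T}$ up to logs; the factor $X$ and the horizon powers arise from the treeplex diameter under the dilated regularizer and from summing the per-stage cost ranges over the $H$ steps of each trajectory.

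Given these two lemmas, the combination is identical to the simplex case. Unbiasedness $\exptn[\cHat^t(x,a)] = c^t(x,a)$ transfers the estimated bounds to the true expected regret $\mathcal{R}$; geometric balancing of the per-phase comparator regret across the $1 + \lceil \log_2 R \rceil$ phases gives $\mathcal{R}(\mu^c) \leq 1$, while summing the uniform per-phase bound over $O(\log R)$ phases gives $\max_{\mu\in\Tcal}\mathcal{R}(\mu) \leq O(R\log R) = \OTilde\round{\delta^{-1}\sqrt{XH^3 T}}$. The terminal phase with $\alpha=1$ is handled separately exactly as before: the algorithm then reduces to plain bandit OMD on the treeplex (no comparator weight), whose \emph{expected} regret bound I would import directly from \citet{kozuno2021model,bai2022near}, at the cost of one additional phase.

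The hard part will be the construction and analysis of $\cHat^t$ on the treeplex. Two properties must hold at once: unbiasedness for the counterfactual costs of \cref{eq:def-loss} (which mix $p(s)$ and $\nu^t$), and a pathwise upper bound on $\inner{\cHat^t}{\mu}$ valid for \emph{every} $\mu\in\Tcal$, not merely in expectation, since the phase-transition criterion is itself a function of $\RHat^k$ and must be controlled deterministically for the phasing logic to be sound. The tension is that EFG bandit estimators are normally bounded only in expectation precisely because reach probabilities can be arbitrarily small; the $\delta$-shift removes this for Alice's own contribution, but one must still verify that the estimator is well-defined along a trajectory, that the total estimated loss scales linearly (not exponentially) in $H$, and that re-initializing the dilated regularizer at each phase boundary composes cleanly with the range bound to produce the claimed $\sqrt{XH^3}$ dependence rather than a worse horizon power.
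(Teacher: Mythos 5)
Your proposal follows essentially the same route as the paper: the same phase structure with $\alpha^k=\min\{1,2^{k-1}/R\}$, the same key observation that $\mu^t(x,a)\geq(1-\alpha^k)\mu^c(x,a)\geq\delta/2$ makes the importance-weighted estimator pathwise bounded during normal phases, treeplex analogs of \cref{lemma:during-nfg} and \cref{lemma:exit-nfg}, geometric balancing for $\mathcal{R}(\mu^c)\leq 1$, and a separately analyzed terminal phase. One point you leave under-specified, and which the paper treats as essential to reaching the stated $\tilde O(\delta^{-1}\sqrt{XH^3T})$ bound: you must use the \emph{unbalanced} dilated KL of \citet{kozuno2021model} in the phases with $\alpha^k<1$ (the balanced one would cost an extra $\sqrt{A}$ there) but switch to the \emph{balanced} dilated KL of \citet{bai2022near} in the final $\alpha=1$ phase (the unbalanced one would cost an extra $\sqrt{X}$ there, since without the comparator shift the estimator is no longer pathwise bounded and only the balanced analysis gives $\sqrt{XAH^3T}$ in expectation). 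Your citation of both works suggests you have the right ingredients, but committing to a single regularizer throughout would degrade the bound; also, the uniform per-phase bound comes out as $2R+2H$ rather than $2R+2$, since the last estimated cost vector before a phase exit contributes $\langle\hat c^{t'},\mu^c\rangle\leq 2H$ over the $H$ visited infoset--action pairs.
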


If the EFG is a fair zero-sum game, Alice can now choose a min-max equilibrium $\mu^c=\mu^\star$ as the comparator. If $\mu^\star$ has full support, the reduction from \cref{sec:preliminaries} then shows that Alice achieves the best of both worlds guarantee from \cref{q:1}.

\begin{figure*}[ht]
\centering
\begin{minipage}{1.0\textwidth}
\begin{algorithm}[H]
    \caption{Phased Aggression with Importance-Weighting for EFGs} \label{algo:phased-aggression-efg-bandit}
    \begin{algorithmic}[1]
        \Require{Number of rounds $T$, comparator margin $\delta$, regret bound $R\gets\delta^{-1}\sqrt{8XH^3\log(A) T}$, learning rate $\eta \gets \sqrt{\delta^{2}X\log(A)/(8H^2T)}$, balanced learning rate $\tau \gets \sqrt{XA\log(A)/(H^3T)}$.}

        \vspace{0.1cm}
        \State Initialize $\muHat^{1}(x_h,a) = \mu^{1}(x_h,a) \gets \frac{1}{A^h}$ ($h\in[H]$, $x_h\in\Xcal_h$, $a\in \Acal$), $\alpha \gets 1/R$, $\text{start}\gets 1$, $k \gets 1$ (counts phase).
        \For{round $t =1, \dots, T$}
            
            \State \textbf{Alice} chooses $\mu^t \in \Tcal$, \textbf{Bob} selects strategy $\nu^t\in\Tcal'$. \Comment{{\color{blue}and thus cost $c^t$ via \cref{eq:def-loss}}}
            
            \State \textbf{Alice} obtains costs $\sum_{h=1}^H u^t_h$, observes trajectory $(x_1^t,a_1^t,u_1^t,\dots,x_H^t, a_H^t, u_H^t)$. \Comment{{\color{blue}$V(\mu^t,\nu^t)$ in expectation}}\label{line:sample}
            
            \State \textbf{Alice} builds cost estimator $\cHat^t(x_{h},a) \gets \frac{\mathbbm{1}\{(x_{h}^t,a_{h}^t)=(x_{h},a)\}u^t_h}{\mu^t(x_{h},a)}$. \label{line:loss}
            
            \If {$\max_{\mu\in\Tcal}\sum^t_{j = \text{start}} \inner{\cHat^j}{\mu^c - \mu} > 2 R$ \textbf{ and } $\alpha < 1$} \label{line:new-phase}
            
                \State $\text{start} \gets t+1$, $k\gets k+1$. \Comment{If comparator performs poorly, next phase}
                \State $\muHat^{t+1}(x_h, a) \gets \frac{1}{A^h}$ ($h \in[H]$, $x_h\in \Xcal_h$, $a\in \Acal$).  
                \Comment{Initialize to uniform policy}\label{line:reset-omd}
                \State Update $\alpha \gets \min\curly{2^{k-1}/R,~ 1}$. \Comment{Increase $\alpha$ for upcoming phase} \label{line:new-phase2}
        \Else \Comment{OMD update}
            \State \label{line:omds}
            \begin{align}
                \hspace{-4.75cm}\muHat^{t+1} \gets \begin{cases}
                \arg\min_{\mu \in \Tcal} \round{\eta \inner{\mu}{\cHat^t} + D(\mu || \muHat^t)} \quad\quad &\text{(if $\alpha<1$)},\\
                \arg\min_{\mu \in \Tcal} \round{\tau \inner{\mu}{\cHat^t} + \Dbal(\mu || \muHat^t)} \quad\quad &\text{(if $\alpha=1$)}.
                \end{cases}\label{line:omd}
            \end{align}
        \EndIf
            \State $\mu^{t+1} \gets \alpha  \muHat^{t+1} + \left( 1 - \alpha \right) \mu^c$. \Comment{Play shifted OMD to $\mu^c$ by $1-\alpha$} \label{line:combine-bandit}
        \EndFor
    \end{algorithmic}
\end{algorithm}
\end{minipage}
\end{figure*}

\begin{remark}The dependence on $X$ is as good as desired in the sense that there is a $\sqrt{XAT}$ lower bound in the unconstrained case. The dependence on $H$ is less crucial for many relevant EFGs, as we often have $X\simeq A^H$ and so $H$ is a logarithmic factor. See \citet{bai2022near}.
\end{remark}

\textbf{Our Algorithm.} \cref{algo:phased-aggression-efg-bandit} is similar to our algorithm for the simplex. It combines the Phased Aggression scheme with importance-weighted OMD. However, in the EFG case, we have to generalize these notions to the treeplex. 

In particular, we use OMD with the so-called \emph{dilated} KL divergence as regularizer (Line \ref{line:omds}). As we will see in the regret analysis, to this end it is crucial that we use an \emph{unbalanced} dilated KL divergence $D$ \citep{kozuno2021model} in the phases with $\alpha<1$ and a \emph{balanced} KL divergence $\Dbal$ \citep{bai2022near} if $\alpha=1$ is reached. In \cref{app:efg-kl}, we formally define the divergences and confirm that they allow for an efficient closed-form implementation. This is crucial as we want to avoid costly projections onto the treeplex by any means. Moreover, we can efficiently check Line \ref{line:new-phase} via standard dynamic programming over the set of policies (or solving an LP over the treeplex).

\textbf{Regret Analysis.} Our analysis follows a similar argument as in \cref{sec:simplex} and we defer the proofs to \cref{app:efg-upper}. The main technical challenge is to transfer the regret bounds for importance-weighted OMD from the simplex (with KL) to the treeplex $\Tcal$ (with dilated KL).

In addition, we now require a careful analysis to obtain a mild dependence on the number of infosets $X$ and actions $A$, in the following sense. First, when upper bounding the estimated regret in analogy to \cref{lemma:during-nfg} ($\alpha<1$), we analyze OMD with the unbalanced dilated KL divergence by adapting the argument of \citet{kozuno2021model} to our importance-weighting. Using the (more sophisticated) balanced KL here would introduce an additional undesired factor of $\sqrt{A}$. Second, once $\alpha=1$ in the final phase, we analyze the expected regret of \emph{balanced} OMD instead, by adapting the argument of \citet{bai2022near} to our cost estimators. Using the unbalanced divergence would introduce an extra factor of $\sqrt{X}$, which can be prohibitively large.

\subsection{Lower Bound} \label{sec:efg-lower}

As in the case of NFGs, we show that our guarantees for \cref{algo:phased-aggression-efg-bandit} are close to being tight for EFGs of arbitrary depth. Our proof reduces an EFG of depth $H$ to the simplex case from \cref{thm:lower-nfg}. See \cref{app:efg-lower} for the proof.  
\begin{restatable}{theorem}{thmEfgLower}\label{thm:lb-efg-bandit}
    Let $A\geq 2$, $H \geq 1$, and $\delta \in (0,1)$. There exists an EFG of depth $H$ with $X=\Theta(A^H)$ such that for any $\mu^c\in\mathcal{T}$ with $\min_{x,a} \mu^c(x,a) = \delta$, there is an adversary such that for any algorithm: If $\mathcal{R}(\mu^c) \leq O(1)$, then
    \begin{align*}
         \max_{\mu\in\Tcal} \mathcal{R}(\mu) \geq \Omega(\sqrt{\delta^{-1}T} - \delta^{-3/4}T^{1/4}).
    \end{align*}
\end{restatable}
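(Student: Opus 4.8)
The plan is to reduce the depth-$H$ EFG instance to the simplex lower bound of \cref{thm:lower-nfg}, so that the hardness is inherited from a single embedded two-action problem placed at a low-reach location. First I would fix the game tree: take a depth-$H$ tree whose branching yields $X = \Theta(A^H)$ infosets. Since any $\mu^c \in \Tcal$ with $\min_{x,a}\mu^c(x,a) = \delta$ must assign reach probability at least $\delta$ to \emph{every} infoset-action pair, and since $\Theta(A^H)$ action-reaches at depth $H$ sum to $1$, such a comparator can exist only when $\delta$ is small relative to the tree size; in that regime the tree can be made near-full while still admitting a valid margin-$\delta$ comparator. I would then let the adversary locate the pair $(x^\star,a^\star)$ attaining $\mu^c(x^\star,a^\star)=\delta$ and plant the two-action hard instance of \cref{thm:lower-nfg} there, distinguishing $a^\star$ from a sibling action at $x^\star$ and secretly making $a^\star$ minimally better, with all costs in \cref{eq:def-loss} set to zero outside $x^\star$.

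Next I would verify that the reduction is faithful. Because the costs vanish away from $x^\star$, the expected per-round cost is linear in $\mu(x^\star,\cdot)$ alone, so $\mathcal{R}(\mu^c) = \sum_t \inner{\mu^t(x^\star,\cdot)-\mu^c(x^\star,\cdot)}{c^t}$, and comparing against the treeplex strategy that routes to $x^\star$ and plays $a^\star$ shows that $\max_{\mu\in\Tcal}\mathcal{R}(\mu)$ dominates the regret of the planted simplex instance. The bandit trajectory $(x_1^t,a_1^t,u_1^t,\dots,x_H^t,a_H^t,u_H^t)$ reveals exactly the realized cost of the action taken at $x^\star$ (and nothing when $x^\star$ is not reached), which is precisely the feedback of Protocol \ref{prot:bandit-simplex}. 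Hence any EFG algorithm with $\mathcal{R}(\mu^c)\leq O(1)$ induces a bandit-simplex algorithm incurring $O(1)$ regret against the light comparator, and \cref{thm:lower-nfg} forces $\max_{\mu\in\Tcal}\mathcal{R}(\mu) \geq \Omega(\sqrt{\delta^{-1}T}-\delta^{-3/4}T^{1/4})$.

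The main obstacle I expect is the reach-probability bookkeeping. The infoset $x^\star$ is reached by the comparator with some probability $\rho \leq 1$ rather than exactly $1$, so the planted subgame is only a near-copy of the $(1-\delta,\delta)$ instance: the light action has global reach $\delta$ but local probability $\delta/\rho$, and the algorithm has the extra freedom to under-visit $x^\star$, which could in principle help it evade the bound. I would control this by building the spine so that the comparator reaches $x^\star$ with probability $\rho = 1 - o(1)$ (feasible in the small-$\delta$ regime, where the full-support constraint perturbs the spine probabilities only by a lower-order amount), and by arguing that the constant-regret constraint against $\mu^c$ already pins the algorithm's visitation of $x^\star$ close to $\rho$. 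The residual discrepancy between $\rho$ and $1$, together with the estimation error inherent to the embedded instance, should be exactly of the order absorbed by the $-\delta^{-3/4}T^{1/4}$ term, so that the leading $\sqrt{\delta^{-1}T}$ survives. Confirming that a single tree simultaneously realizes $X=\Theta(A^H)$, admits a full-support comparator of margin exactly $\delta$, and keeps $\rho = 1-o(1)$ is the delicate part of the construction.
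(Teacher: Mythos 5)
Your high-level plan is the same as the paper's: take the full $A$-ary tree of depth $H$ (so $X=\Theta(A^H)$), locate a pair $(x^\star,a^\star)$ with $\mu^c(x^\star,a^\star)=\delta$ (by the flow-conservation constraints of \cref{d:treeplex} this minimum is necessarily attained at a leaf), embed the two-environment instance of \cref{thm:lower-nfg} there, and inherit the bound. However, your concrete cost assignment breaks the reduction. If all costs vanish outside $x^\star$ while the two actions at $x^\star$ keep their costs from \cref{thm:lower-nfg} (means $\tfrac12$ and $\tfrac12\mp\gamma T^{-1/2}$), then any treeplex strategy that avoids $x^\star$ incurs cost $0$ per round, which strictly dominates both embedded arms. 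An algorithm that never routes to $x^\star$ then satisfies $\mathcal{R}(\mu^c)\le 0$ while also achieving $\max_{\mu}\mathcal{R}(\mu)=0$, so no lower bound can hold for the instance as you describe it. The constraint $\mathcal{R}(\mu^c)\le O(1)$ only controls the visitation of $(x^\star,a^\star)$ if over-playing that pair is \emph{costly} relative to every alternative; this requires every trajectory that does not end in $(x^\star,a^\star)$ to carry exactly the ``arm 1'' cost of $\tfrac12$ (equivalently, shift the embedded instance so that arm 1 and all avoiding trajectories cost $0$ and arm 2 costs $\mp\gamma T^{-1/2}$ in expectation). With that assignment the EFG becomes a verbatim copy of the two-armed bandit: ``arm 2'' is the event that the trajectory ends at $(x^\star,a^\star)$, ``arm 1'' is everything else, the comparator plays arm 2 with probability exactly $\delta$ per round, and the trajectory feedback coincides with that of Protocol \ref{prot:bandit-simplex}. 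This is the construction the paper intends.

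Relatedly, your reach-probability bookkeeping addresses a non-issue. The quantity entering the regret decomposition of \cref{thm:lower-nfg} is the \emph{global} sequence-form probability $\mu^c(x^\star,a^\star)=\delta$ with which the comparator's trajectory ends at the special pair, not the local conditional probability $\delta/\rho$; once every non-special trajectory is a literal copy of arm 1, ``under-visiting $x^\star$'' is just playing arm 1 and gives the algorithm no extra freedom. Hence there is no need to engineer $\rho=1-o(1)$ or to restrict to a small-$\delta$ regime, and the $-\delta^{-3/4}T^{1/4}$ term has nothing to do with $\rho$: it is already produced by the Pinsker/KL step inside the proof of \cref{thm:lower-nfg}.
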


\section{Experimental Evaluations} \label{sec:experiments}
We experimentally compare our \cref{algo:phased-aggression-efg-bandit} for EFGs to the standard OMD algorithm with dilated KL \citep{kozuno2021model} as well as to minimax play. Our evaluations confirm our theoretical findings, revealing that \cref{algo:phased-aggression-efg-bandit} can achieve the best of both no-regret algorithms and minimax play. They also show that our motivating question from \cref{sec:intro} is indeed of practical relevance. We provide further details and evaluations in \cref{app:further-experiments}.

\begin{figure*}[ht]
    \centering
    \begin{minipage}{1.0\textwidth}
    \begin{minipage}{0.05\textwidth}
        
    \end{minipage}
    \begin{minipage}{0.3\textwidth}
        \centering
        \captionsetup{labelformat=empty}
        \includegraphics[width=\linewidth]{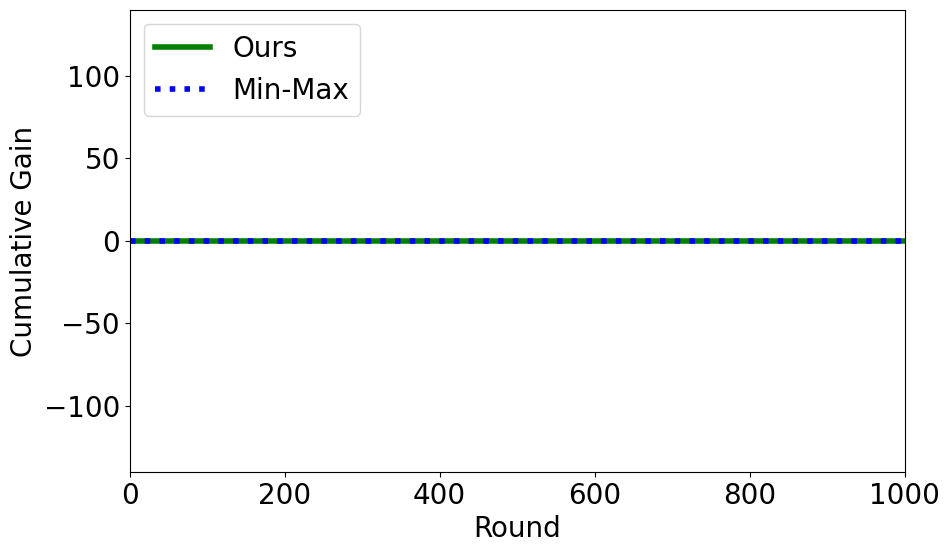}
        \caption{Alg.~\ref{algo:phased-aggression-efg-bandit} vs Min-Max}
    \end{minipage}
    \begin{minipage}{0.3\textwidth}
        \centering
        \includegraphics[width=\linewidth]{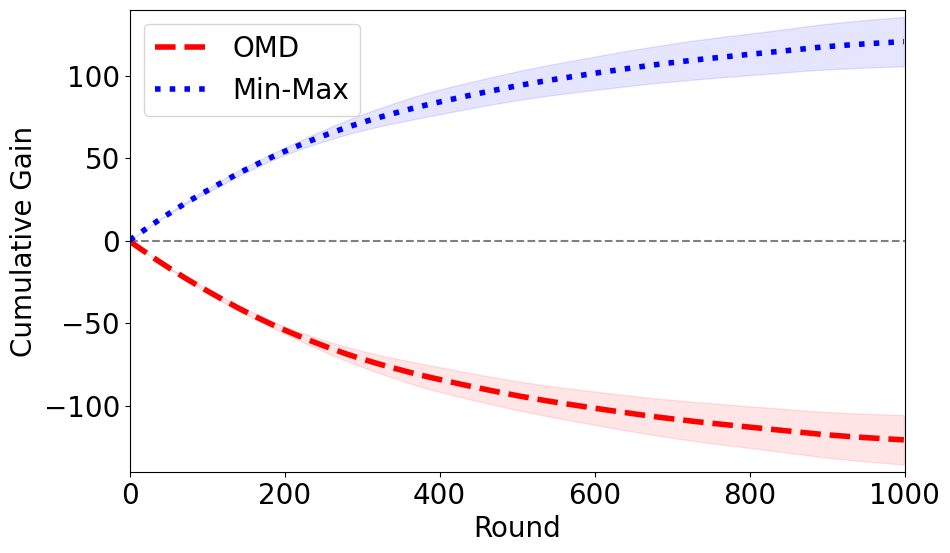}
        \caption*{{OMD} vs {Min-Max}}
    \end{minipage}
    \begin{minipage}{0.3\textwidth}
        \centering
        \includegraphics[width=\linewidth]{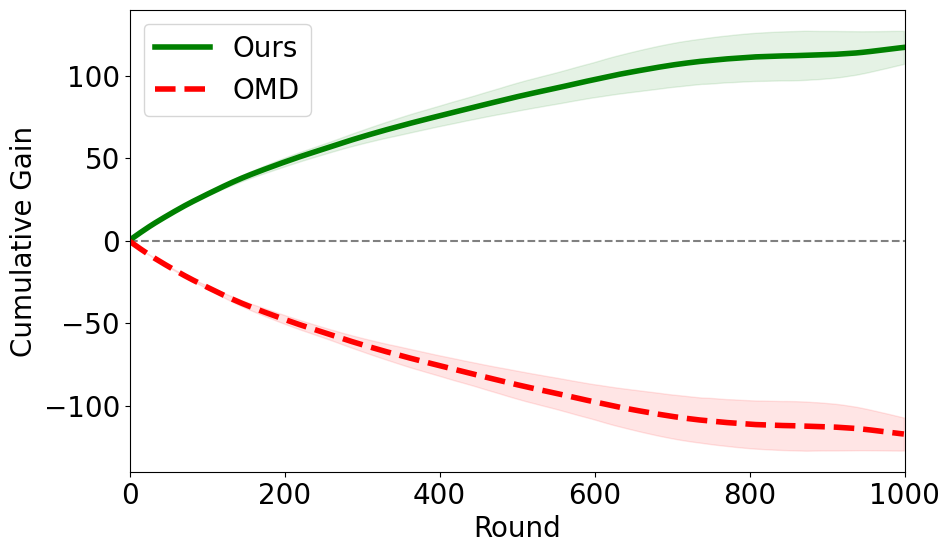}
        \caption*{{Alg.~\ref{algo:phased-aggression-efg-bandit}} vs {OMD}}
    \end{minipage}
    \begin{minipage}{0.05\textwidth}
        
    \end{minipage}
    \begin{minipage}{1.0\textwidth}
    \caption{All vs all comparison for $T=1000$ rounds. The x-axis displays the round $t$, and the y-axis displays how much the respective algorithm ({\color{mblue}Min-Max}, {\color{mred}OMD}, {\color{mgreen}\cref{algo:phased-aggression-efg-bandit}}) \emph{gained} from the other.}\label{fig:all-vs-all}
    \end{minipage}
    \end{minipage}
\end{figure*}
\begin{figure*}[t]
    \centering
    \begin{minipage}{1.0\textwidth}
    \begin{minipage}{0.05\textwidth}
        
    \end{minipage}
    \begin{minipage}{0.3\textwidth}
        \centering
        \includegraphics[width=\linewidth]{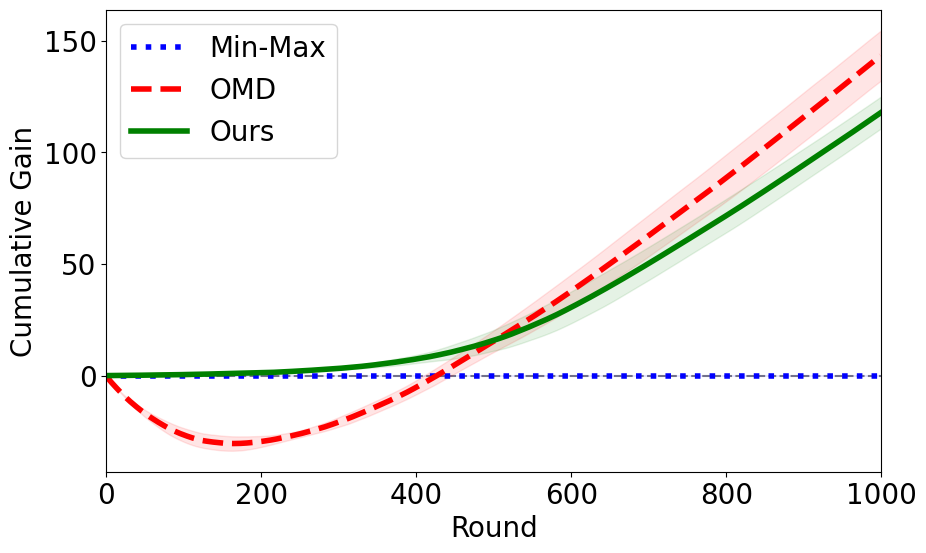}
        \caption*{All vs BluffJ}
    \end{minipage}
    \begin{minipage}{0.3\textwidth}
        \centering
        \includegraphics[width=\linewidth]{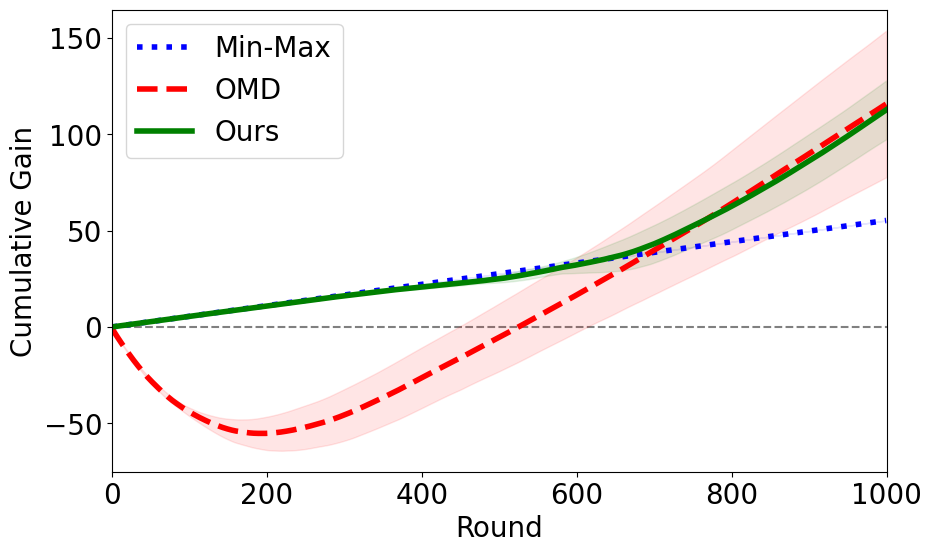}
        \caption*{All vs RaiseKQ}
    \end{minipage}
    \begin{minipage}{0.3\textwidth}
        \centering
        \includegraphics[width=\linewidth]{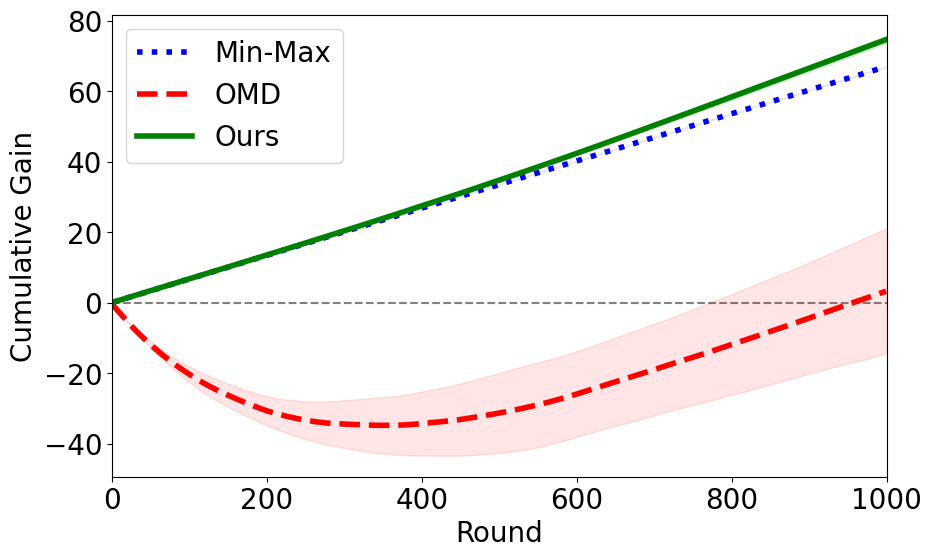}
        \caption*{All vs RandMinMax}
    \end{minipage}
    \begin{minipage}{0.05\textwidth}
        
    \end{minipage}
    \begin{minipage}{1.0\textwidth}
    \centering
    \caption{All vs Bob comparison for $T=1000$ rounds. The x-axis displays the round $t$, and the y-axis displays how much {\color{mblue}Min-Max}, {\color{mred}OMD}, and {\color{mgreen}\cref{algo:phased-aggression-efg-bandit}} \emph{gained} from the second algorithm so far. The y-axes have varying scales for readability.}\label{fig:all-vs-x}
    \end{minipage}
    \end{minipage}
\end{figure*}

\noindent \textbf{Kuhn Poker.} We consider \emph{Kuhn poker} \citep{kuhn1950simplified}, which serves as a simple yet fundamental example of two-player zero-sum imperfect information EFGs. Kuhn poker is a common $3$-card simplification of standard poker, where each player selects one card from the deck $\{$Jack, Queen, King$\}$ without replacement and initially bets one unit.\footnote{\url{https://en.wikipedia.org/wiki/Kuhn_poker}} 
\begin{remark}
    The min-max equilibrium of Kuhn Poker is not full-support ($\delta =0$ in \cref{thm:ub-interior-efg}). As seen in \cref{rmk:restrict}, we can easily circumvent this issue by considering only the actions in the support of the equilibrium. For Kuhn Poker, this results in $\delta = 1/3$. \cref{algo:phased-aggression-efg-bandit} is then still guaranteed not to lose anything while being able to compete with the best response within the support of the equilibrium.
\end{remark}
We consider the following baseline algorithms Alice could play over $T$ rounds of Kuhn poker: 
\begin{itemize}
    \setlength{\itemsep}{0.3em}
    \setlength{\parskip}{0pt}
    \item[1)] play the \emph{Min-Max} equilibrium $\pi^\star$ in every round; or 
    \item[2)] run \emph{OMD} with dilated KL; or 
    \item[3)] run \emph{\cref{algo:phased-aggression-efg-bandit}} with comparator policy $\pi^\star$.
\end{itemize}

We consider two types of experiments: First, we run the three algorithms against each other to check which of the algorithms risks losing units to others (\emph{All vs All}). Second, we evaluate how well each algorithm allows Alice to exploit exploitable strategies (\emph{All vs Exploitable Strategies}). We repeat each experiment $5$ times.

\textbf{All vs All.} In \cref{fig:all-vs-all} we plot the total amount (of units) each algorithm \emph{wins}. As \cref{fig:all-vs-all} shows, both Min-Max and \cref{algo:phased-aggression-efg-bandit} never incur losses while both gain a significant amount against OMD. Indeed, as (symmetrized) Kuhn poker is a symmetric zero-sum game, the min-max equilibrium is guaranteed not to lose. The same holds for our \cref{algo:phased-aggression-efg-bandit}. In contrast, a no-regret algorithm such as OMD can lose up to $O(\sqrt{T})$ units. Interestingly, it does lose a similar amount against our \cref{algo:phased-aggression-efg-bandit}.

\textbf{All vs Exploitable Strategies.} We now compare the performance of Min-Max, OMD and \cref{algo:phased-aggression-efg-bandit} against the following reasonable but suboptimal strategies. The goal is to understand their ability to exploit weak strategies. We consider:

\begin{itemize}
    \setlength{\itemsep}{0.3em}
    \setlength{\parskip}{0pt}
    \item[a)] \emph{BluffJ}: Bob plays the min-max equilibrium, except that he bets (bluffs) when he has a Jack; 
    \item[b)] \emph{RaiseKQ}: Bob raises/calls if and only if he has a King or a Queen, and checks/folds otherwise; 
    \item[c)] \emph{RandMinMax}: Each round, with probability $0.2$, he plays the uniform strategy, and else the min-max one. 
\end{itemize}

In \cref{fig:all-vs-x} we present the amount (of units) each algorithm \emph{wins} against these exploitable strategies. We first consider \emph{All vs BluffJ \& All vs RaiseKQ}. Algorithm~\ref{algo:phased-aggression-efg-bandit} plays conservatively and gains an amount similar to Min-Max until it takes off and starts exploiting Bob near-optimally, as OMD would. OMD, in turn, first loses a certain amount of money and only matches the gain of Min-Max after exploring sufficiently, then having the same slope as \cref{algo:phased-aggression-efg-bandit}. The min-max equilibrium itself does not exploit \textit{BluffJ} at all and exploits \textit{RaiseKQ} sub-optimally. In these cases, our algorithm suffers neither of the two drawbacks of losing money or not exploiting the weak strategy. Finally, in \emph{All vs RandMinMax}, our \cref{algo:phased-aggression-efg-bandit} improves slightly over Min-Max. OMD gains at the same rate after losing an initial amount to its opponent. 

\section{Conclusion}
In this paper, we showed how to provably exploit suboptimal strategies with essentially no expected risk in repeated zero-sum games by combining regret minimization and minimax play. More generally, we believe that our novel results for adversarial bandits leading to these guarantees may be of independent interest. We hope that our work inspires future research on safe online learning, including settings like convex-concave games, learning with feedback graphs, and establishing no-swap-regret guarantees.

\section*{Impact Statement}

This paper presents work whose goal is to advance the field of Machine Learning. There are many potential societal consequences of our work, none of which we feel must be specifically highlighted here.

\section*{Acknowledgments}

Adrian Müller would like to thank Gergely Neu for previous discussions regarding the stochastic case. Stratis Skoulakis was supported by the Villum Young Investigator Award (Grant no. 72091).
This work is funded (in part) through a PhD fellowship of the Swiss Data Science Center, a joint venture between EPFL and ETH Zurich. This work was supported by Hasler Foundation Program: Hasler Responsible AI (project number 21043). Research was sponsored by the Army Research Office and was accomplished under Grant Number W911NF-24-1-0048. This work was supported by the Swiss National Science Foundation (SNSF) under grant number 200021\_205011.

\bibliography{refs}
\bibliographystyle{icml2025}

\newpage
\appendix
\onecolumn

\section{Further Related Work} \label{app:related-work}

\textbf{Safe Opponent Exploitation.} While there have been some approaches to safe learning in games \citep{ponsen2011computing,farina2019online,zhang2021subgame,bernasconi2021exploiting,bernasconi2022safe,ge2024safe}, all these works are fairly different from our learning problem. Related to \citet{ganzfried2015safe,ganzfried2018bayesian}, the works of \citet{damer2017safely,liu2022safe} provide algorithms that interpolate between being safe and exploitive through a specific parameter. However, these algorithms may incur up to $\Omega(T)$ regret compared to the best fixed strategy in hindsight. Recently, \citet{maiti2023logarithmic} proved the first \emph{instance-dependent} poly-logarithmic regret bound for noisy $2\times2$ NFGs, which naturally relates to our desired regret bound. However, such bounds become vacuous when the game matrix does not have pairwise distinct entries and assume to observe the opponent's action (which corresponds to full information in our feedback model).

\textbf{Exploiting Adaptive Opponents.} If Bob is oblivious and plays a fixed sequence of (mixed) strategies, then any regret Alice incurs is potential utility she could gain by playing a no-regret strategy (e.g., the best-of-both-worlds strategy we present). However, if Bob is adaptive, switching to a no-regret strategy does not necessarily allow Alice to recover additional utility (Bob could, for example, react to this by playing his minimax strategy). There is a line of recent work \citep{DSSstrat, MMSSbayesian, kolumbus2022auctions, kolumbus2022and, brown2023is, cai2023selling, chen2023persuading, haghtalab2024calibrated, ananthakrishnan2024knowledge, guruganesh2024contracting, arunachaleswaran2024pareto} on how to play against sub-optimal adaptive strategies (e.g. other learning algorithms) in various settings, although almost all of this work only pertains to general sum games. It is an interesting open question to understand to what extent we can obtain similar best-of-both-worlds results for adaptive opponents in zero-sum games.

\textbf{Comparator-Adaptive OL with Full Information.} In online learning (OL) under full information feedback, \citet{hutter2005adaptive,even2008regret,kapralov2011prediction,koolen2013pareto,sani2014exploiting} establish (with various emphases) that safe OLM over the simplex in the sense of \cref{eq:safe-olm} is possible. Using so-called parameter-free methods from the online convex optimization literature instead \citep[e.g.]{orabona2016coin,cutkosky2018black,orabona2019modern}, one can (after a simple shifting argument) achieve similar guarantees in the full information setting. For our purposes, the most notable of the above algorithms is the Phased Aggression template of \citet{even2008regret}, as it is the only one we were able to adapt to the bandit feedback setting while maintaining the rate-optimal regret guarantee. While the application of the above type of algorithms to fair zero-sum (normal-form) games is direct (\cref{sec:preliminaries}), we are not aware of any prior work making this connection, even under full-information feedback.

\textbf{Comparator-Adaptive OL with Bandit Feedback.} \citet{lattimore2015pareto} establishes a sharp separation between full information and bandit feedback. The author shows that $O(1)$ regret compared to a single comparator action implies a worst-case regret of $\Omega(AT)$ for some other action. This rules out algorithms that resolve our question even in the simple normal-form case under bandit feedback. The key to this lower bound is that the algorithm has to play the special comparator essentially every time, thereby not exploring any other options (as the comparator strategy is deterministic) and thus not knowing whether it is safe to switch the arm. The minimal assumption we can make on the comparator strategy is thus that it plays every action with a non-zero probability. In addition to the mentioned works from the online convex optimization literature, \citet{van2020comparator} remarkably analyzes bandit convex optimization algorithms that adapt to the comparator. However, unlike in the full information case, it is not possible to turn them into an algorithm for safe OLM (as the shifting argument one can use for full-information parameter-free methods like \citet{orabona2016coin,cutkosky2018black,orabona2019modern} does no longer work under bandit feedback).

\noindent\textbf{Relation to Safe Reinforcement Learning.} A closely related line of work is that of \emph{conservative bandits} \citep{wu2016conservative} and \emph{conservative RL} \citep{garcelon2020conservative}. In conservative exploration, algorithms are designed to obtain at least a $(1-\alpha)$-fraction of the return of a comparator, which in our motivating example, however, means that the algorithm may suffer a linear loss $\alpha T$ in the worst case. We thus believe that independently of our motivation from a game-theoretic viewpoint, our results nicely complement existing OL literature. In constrained (or safe) reinforcement learning \citep{badanidiyuru2018bandits,efroni2020exploration}, both the regret and the cumulative violation of a constraint are considered. However, even in the stochastic case the goal of constant regret compared to some known strategy can only be realized if there exists a strategy with a strictly larger return \citep{liu2021learning} for the environment, and in the adversarial case even this reduction fails.

\textbf{OL in (Extensive-Form) Games.} While online learning (OL) in NFGs can readily be reduced to the problem of learning from experts \citep{cesa2006prediction} (full information) or multi-armed bandits \citep{lattimore2020bandit}, it becomes more difficult in the case of EFGs \citep{osborne1994course} due to the presence of (imperfectly observed) states and transitions. State-of-the-art algorithms for no-regret learning in EFGs are based on online mirror descent (OMD) over the treeplex, which leads to near-optimal regret bounds in the full information setting \citep{farina2021bandit,fan2024optimality} and the bandit setting \citep{farina2021bandit,kozuno2021model,bai2022near,fiegel2023adapting}. Alternative approaches are based on counterfactual regret minimization \citep{zinkevich2007regret,lanctot2009monte}, which however do not guarantee a bound on the actual regret (see \citet[Theorem 7]{bai2022near}).

\section{Deferred Proofs for Normal-Form Games} \label{app:nfg}

\subsection{Upper Bound} \label{app:nfg-upper}

First, note that our cost estimates are unbiased, i.e. 
$\exptn\rectangular{\cHat^t(a)} = \exptn\rectangular{ c^t(a) }$, and $\exptn\rectangular{ \inner{\cHat^t}{\mu^t} } = \exptn\rectangular{\exptn\rectangular{ \inner{\cHat^t}{\mu^t} \mid \Fcal_{t-1} }} = \exptn\rectangular{ \inner{c^t}{\mu^t} } = \exptn\rectangular{c^t(a^t) }$, where $\Fcal_{t-1}$ is the $\sigma$-algebra induced by all random variables prior to sampling $a^t$. Further, WLOG we assume that the cost functions are bounded via $c^t\in[0,1]^A$. The reduction from NFGs with matrix entries $U_{a,b}\in[-1,1]$ is then simply via $c^t(a):=(1+U_{a,b^t})/2$, where the shifting and scaling does not change the regret bound. By convention $\start_{k+1}:=T+1$ if $k$ is the last phase.

\thmUbSimplexBandit*

\begin{proof}
    \emph{Case 1: $\alpha=1$ is not reached.} Suppose first the algorithm ends in phase $k <1+\log_2(R)$ at time step $T$. By \cref{lemma:during-nfg}, w.r.t. any comparator 
    \begin{align*}
        \sum_{t=1}^T \inner{\cHat^t}{\mu^t-\mu} \leq (2R+2) \cdot k \leq O(R \log(R)).
    \end{align*}
    All previous phases must have been exited, so by \cref{lemma:during-nfg,lemma:exit-nfg} we have 
    \begin{align*}
        \sum_{t=1}^T \inner{\cHat^t}{\mu^t-\mu^c} \leq 2^{k-1} - \sum_{i=1}^{k-1} 2^{i-1} = 2^{k-1} - (2^{k-1}-1) = 1.
    \end{align*}
    Taking expectation yields the claim.\\

    \emph{Case 2: $\alpha=1$ is reached.} Next, suppose the phase $\alpha^k=1$ was reached and simply Exp3 was run in the final phase $k$. As before
    \begin{align*}
        \sum_{t=1}^{\start_{k}-1} \inner{\cHat^t}{\mu^t-\mu} \leq (2R+2) \cdot k \leq O(R \log(R)).
    \end{align*}
    For the final phase, note that \cref{algo:phased-aggression-nfg} plays Exp3 for $\leq T$ rounds, with uniform initialization. By the standard Exp3 analysis \citep[Sec. 10.1]{orabona2019modern}, this phase has expected regret
    \begin{align}
        \exptn\rectangular{\sum_{t=\start_{k+1}}^T \inner{c^t}{\mu^t-\mu}} \leq \frac{\log(A)}{\tau} + \frac{\tau}{2} AT 
        \leq \sqrt{AT\log(A)/2}
        \leq \delta^{-1}\sqrt{2\log(A) T}=R . \label{eq:exp3}
    \end{align}
    since $\tau=\sqrt{\frac{2\log(A)}{AT}}$ and $\delta\leq 1/A$. Thus for any comparator $\mu\in\Delta_A$ we have
    \begin{align*}
        \exptn\rectangular{\sum_{t=1}^T \inner{c^t}{\mu^t - \mu}} \leq O(R\log(R)).
    \end{align*}
    Finally, for the special comparator note that all phases $k'<k$ have been left and thus by \cref{lemma:exit-nfg,eq:exp3} 
    \begin{align*}
        \exptn\rectangular{\sum_{t=1}^T \inner{c^t}{\mu^t - \mu^c}} \leq R - \sum_{k'=1}^{k-1} 2^{k'-1}= R - (2^{k-1} -1 ) \leq 1,
    \end{align*}
    where the last step used that $\alpha^k=\min\{1,2^{k-1}/R\}=1$ and thus $R\leq2^{k-1}$.
\end{proof}

Recall that 
\begin{align}
    \RHat^k(\mu):=& \sum_{t=\start_k}^{\start_{k+1}-1} \inner{\cHat^t}{\mu^t-\mu} = \alpha^k \sum_{j=\start_k}^{\start_{k+1}-1} \inner{\cHat^t}{\muHat^t-\mu} + (1-\alpha^k) \sum_{t=\start_{k}}^{\start_{k+1}-1} \inner{\cHat^t}{\mu^c-\mu} \label{eq:phase-reg-nfg}
\end{align}
measures Alice's estimated regret.

\lemmaDuringNfg*

\begin{proof}
    WLOG suppose that $R=2^r$ is a power of $2$, else we can run the algorithm for $T$ such that $R$ is the next largest power of two and pay a constant factor in the regret. For the first term in \cref{eq:phase-reg-nfg}, we analyze OMD to bound $\sum_{t=\start_{k}}^{\start_{k+1}-1} \inner{\cHat^t}{\muHat^t-\mu}$ almost surely, making use of the fact that $\cHat^t$ is bounded. Indeed, recall 
    \begin{align*}
        \cHat^t(a) = \frac{c^t(a^t)}{\mu^t(a)}\indicator{a^t=a} \leq \frac{1}{\mu^t(a)}.
    \end{align*}
    We have $\alpha^k = 2^{k-1}/R \leq 2^{\log_2(R)-1}/R=1/2$, so 
    \begin{align*}
        \cHat^t(a) \leq \frac{1}{\mu^t(a)}=\frac{1}{\alpha^k\mu^t(a)+(1-\alpha^k)\mu^c(a)}\leq\frac{1}{\frac{1}{2}\mu^c(a)}\leq \frac{2}{\delta}.
    \end{align*}
    Moreover, $\cHat^t$ is zero outside the visited $a^t$. Thus, by \cref{lem:bandit-omd-bounded-nfg}, almost surely for the first term in \cref{eq:phase-reg-nfg}
    \begin{align}
        \sum_{t=\start_{k}}^{\start_{k+1}-1} \inner{\cHat^t}{\muHat^t-\mu} \leq \frac{\log(A)}{\eta} + 2\eta T\delta^{-2} \leq \delta^{-1}\sqrt{2T\log(A)} = R. \label{eq:omd-regret1-nfg}
    \end{align}

    \noindent For the second term in \cref{eq:phase-reg-nfg}, note that since the if condition may only hold at $t':=\start_{k+1}-1$,
    \begin{align}
        \sum_{t=\start_{k}}^{\start_{k+1}-1} \inner{\cHat^t}{\mu^c-\mu} \leq 2R + \frac{c^{t'}(a^{t'})}{\frac{1}{2}\mu^c(a^{t'})}\mu^c(a^{t'})\leq 2R+2. \label{eq:while-cond1-nfg}
    \end{align}
    \noindent Linearly combining \cref{eq:omd-regret1-nfg,eq:while-cond1-nfg}, 
    \begin{align*}
        \RHat^k(\mu):=& \alpha^k \sum_{t=\start_{k}}^{\start_{k+1}-1} \inner{\cHat^t}{\muHat^t-\mu} + (1-\alpha^k) \sum_{t=\start_k}^{\start_{k+1}-1} \inner{\cHat^t}{\mu^c-\mu} \leq 2R +2
    \end{align*}
    for any $\mu$. For the special comparator, by \cref{eq:omd-regret1-nfg}
    \begin{align*}
        R^k(\mu^c)= \alpha^k \sum_{t=\start_{k}}^{\start_{k+1}-1} \inner{\cHat^t}{\muHat^t-\mu^c} + (1-\alpha^k) \sum_{t=\start_{k}}^{\start_{k+1}-1} \inner{\cHat^t}{\mu^c-\mu^c} \leq (2^{k-1}/R)R = 2^{k-1}. 
    \end{align*}
\end{proof}

\begin{lemma}[OMD with bounded surrogate costs] \label{lem:bandit-omd-bounded-nfg}
    Let $\eta >0$, and $L>0$. Let $(\cHat^t)_t$ be cost functions such that for all $t$, $0\leq\cHat^t(a)\leq L$ (for all $a$), and moreover $\cHat^t(a) = 0$ if $a\neq a^t$ for some arbitrary $a^t$. Set $\muHat^1(a) = 1/A$ and consider the scheme $\mu^{t+1} = \arg\min_{\mu \in \Delta_A} \inner{\mu}{\cHat^t} + \frac{1}{\eta} D(\mu || \muHat^t)$ for $t\leq T'$. Then we have for all $\mu\in\Delta_A$
    \begin{align*}
        \sum_{t=1}^{T'} \inner{\muHat^t-\mu}{\cHat^t} \leq \frac{\log(A)}{\eta} + \frac{\eta}{2} L^2 T'.
    \end{align*}
\end{lemma}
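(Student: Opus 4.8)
The plan is to recognize that, over $\Delta_A$ with the KL (negative-entropy) regularizer, the update in the lemma is exactly the multiplicative-weights (Hedge) rule, and then run the classical log-partition potential argument. Concretely, I would first record the closed form $\muHat^{t+1}(a) = \muHat^t(a)\exp(-\eta\cHat^t(a))/Z_t$, where $Z_t := \sum_a \muHat^t(a)\exp(-\eta\cHat^t(a))$ is the normalizer; this follows from the first-order optimality conditions of the regularized minimization on the simplex. This reduces the whole analysis to tracking $\sum_{t=1}^{T'} \log Z_t$ from two sides.

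Second, I would upper-bound each $\log Z_t$. Using the elementary inequality $e^{-x} \le 1 - x + \tfrac{x^2}{2}$, valid precisely for $x \ge 0$, with $x = \eta\cHat^t(a) \ge 0$, together with $\log(1+y) \le y$, one obtains
\[
    \log Z_t \le -\eta\inner{\muHat^t}{\cHat^t} + \frac{\eta^2}{2}\sum_a \muHat^t(a)\,\cHat^t(a)^2 .
\]
For the lower bound I would telescope: unrolling the recursion gives $\muHat^{T'+1}(a) = \muHat^1(a)\exp\!\big(-\eta\sum_{t\le T'}\cHat^t(a)\big)/\prod_t Z_t$, so taking logarithms, multiplying by $\mu(a)$ for an arbitrary comparator $\mu\in\Delta_A$, and summing over $a$, the uniform initialization $\muHat^1\equiv 1/A$ yields $\sum_t \log Z_t = -\log A - \eta\sum_t\inner{\mu}{\cHat^t} + D(\mu\|\muHat^{T'+1}) + H(\mu)$, where $H(\mu)\ge 0$ is the entropy of $\mu$. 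Since $D(\cdot\|\cdot)\ge 0$, this is at least $-\log A - \eta\sum_t\inner{\mu}{\cHat^t}$ (and the leading $\log A$ is just $D(\mu\|\muHat^1)$ with the entropy dropped).

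Chaining the two bounds on $\sum_t \log Z_t$ and dividing by $\eta$ gives the general OMD inequality
\[
    \sum_{t=1}^{T'} \inner{\muHat^t-\mu}{\cHat^t} \le \frac{\log A}{\eta} + \frac{\eta}{2}\sum_{t=1}^{T'}\sum_a \muHat^t(a)\,\cHat^t(a)^2 .
\]
It then remains only to control the variance term. Here the sparsity assumption (that $\cHat^t$ is supported on the single coordinate $a^t$) collapses the inner sum to $\muHat^t(a^t)\,\cHat^t(a^t)^2 \le \cHat^t(a^t)^2 \le L^2$, using $\muHat^t(a^t)\le 1$ and the boundedness $0\le \cHat^t\le L$; summing over the $T'$ rounds contributes $\tfrac{\eta}{2}L^2 T'$, which is exactly the claimed bound.

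I expect the only delicate point to be the per-round bound on $\log Z_t$: the quadratic surrogate for $e^{-x}$ is one-sided and needs $x\ge 0$, so the argument genuinely relies on the surrogate costs being \emph{nonnegative}, not merely bounded. Combined with the uniform bound by $L$, this is precisely what keeps the second-order term finite even though importance-weighted estimators are unbounded in general. Everything else — the optimality-condition closed form, the telescoping, and the final rearrangement — is routine bookkeeping.
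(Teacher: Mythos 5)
Your proof is correct and follows essentially the same route as the paper: the paper simply cites the standard local-norm OMD bound $\sum_t \inner{\muHat^t-\mu}{\cHat^t} \le \frac{\log A}{\eta} + \frac{\eta}{2}\sum_t\sum_a \muHat^t(a)\cHat^t(a)^2$ from \citet[Sec.~10.1]{orabona2019modern} and then collapses the variance term using sparsity and $0\le\cHat^t\le L$, exactly as you do. The only difference is that you re-derive that cited inequality from scratch via the Hedge log-partition argument (correctly flagging that nonnegativity of the costs is what makes the one-sided quadratic bound on $e^{-x}$ applicable), which is precisely the proof behind the reference.
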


\begin{proof}
    From \citet[Sec. 10.1]{orabona2019modern}, we find that a.s.
    \begin{align*}
        \sum_{t=1}^{T'} \inner{\muHat^t-\mu}{\cHat^t} \leq \frac{\log(A)}{\eta} + \frac{\eta}{2} \sum_{t=1}^{T'} \sum_a \muHat^t(a) (\cHat^t(a))^2
        \leq \frac{\log(A)}{\eta} + \frac{\eta}{2} \sum_{t=1}^{T'} \muHat^t(a^t) L^2 \leq \frac{\log(A)}{\eta} + \frac{\eta}{2}L^2T'.
    \end{align*}
\end{proof}

\lemmaExitNfg*

\begin{proof}
    At $t=\start_{k+1}-1$ the if condition implies $\max_{\mu\in\Delta_A}\sum_{j = \start_{k}}^{\start_{k+1}-1} \inner{\cHat^j}{\mu^c - \mu} > 2 R$, so when we let $\mu^{\star}$ be a maximizer, we find
    \begin{align*}
        \sum_{t=\start_{k}}^{\start_{k+1}-1} \inner{\cHat^t}{\mu^t-\mu^c} =& \alpha^k \sum_{t=\start_{k}}^{\start_{k+1}-1} \inner{\cHat^t}{\muHat^t-\mu^c} \nonumber\\
        =& \alpha^k \sum_{t=\start_{k}}^{\start_{k+1}-1} \inner{\cHat^t}{\muHat^t-\mu^{\star}} +\alpha^k \sum_{t=\start_{k}}^{\start_{k+1}-1} \inner{\cHat^t}{\mu^{\star}-\mu^c} \nonumber\\
        \leq& \alpha^k R + \alpha^k(-2R)\nonumber\\
        =& -2^{k-1},
    \end{align*}
    where we used \cref{eq:omd-regret1-nfg} in the last inequality.
\end{proof}

\subsection{Lower Bound} \label{app:nfg-lower}

\thmLowerNfg*

Our lower bound becomes vacuous in the regime where $\delta \leq O( T^{-1})$, which is when a direct application of \citet{lattimore2015pareto} shows a trivial $\Omega(T)$ lower bound.

\begin{proof}
    It is sufficient to prove the lower bound for $A=2$ actions as we can assign the same distribution to all but one action. We prove a lower bound for stochastic cost functions, which immediately implies the same bound for adversarially chosen costs. Consider the following setup with two different environments. The first action deterministically gives cost $c_1 = 1/2$ in both environments. In the \textit{first environment $(-)$}, action two samples costs according to a $\text{Ber}(\frac{1}{2}-\gamma T^{-1/2})$ distribution with expected cost $c_- = \frac{1}{2}-\gamma T^{-1/2}$. We will choose $\gamma>0$ later and for now, only require $\gamma < \frac{1}{2}T^{1/2}$ in order for the sampling to be well-defined. Symmetrically, in the \textit{second environment $(+)$}, action two samples costs according to a $\text{Ber}(\frac{1}{2}+\gamma T^{-1/2})$ distribution with expected cost $c_+ = \frac{1}{2}+\gamma T^{-1/2}$. We consider the case that the \textit{special comparator} is $\mu^c = (1-\delta,\delta)\in\Delta_2$. In the following, $\mathcal{R}(\mu)$ denotes the regret compared to $\mu\in\Delta_A$ in the worst case environment. We fix an arbitrary algorithm and index regret and expectation with $+$ or $-$ to indicate which probability space (environment) we are referring to.\\

    \noindent Now let $N_2$ be the number of times action two is chosen during the $T$ interactions. The requirement on the regret w.r.t. the special comparator (together with the standard regret decomposition \citep[Sec. 4.5]{lattimore2020bandit}) shows
    \begin{align*}
        1 \geq \mathcal{R}(\mu^c) \geq \mathcal{R}_+(\mu^c) = \exptn_+\rectangular{N_2} (+\gamma T^{-1/2}) - \delta T (+\gamma T^{-1/2}),
    \end{align*}
    and thus
    \begin{align*}
        \exptn_+\rectangular{N_2} \leq& \gamma^{-1}T^{1/2} + \delta T. 
    \end{align*}
    Plugging this into \cref{lemma:pinsker-nfg}, we have (if $\gamma < \frac{1}{2\sqrt{2}}T^{1/2}$)
    \begin{align*}
        \exptn_-[N_2] \leq& \exptn_+[N_2] + T \sqrt{2}\sqrt{\exptn_+[N_2]} \gamma T^{-1/2}\nonumber\\
        \leq& (\gamma^{-1}T^{1/2} + \delta T) + T \sqrt{2} \sqrt{\gamma^{-1}T^{1/2} + \delta T} \gamma T^{-1/2}\nonumber\\
        \leq& \gamma^{-1}T^{1/2} + \delta T + T \sqrt{2}(\gamma^{-1/2}T^{1/4} + \delta^{1/2} T^{1/2}) \gamma T^{-1/2}\nonumber\\
        \leq& \gamma^{-1}T^{1/2} + \delta T + \sqrt{2}\gamma^{1/2}T^{3/4} + \sqrt{2}\delta^{1/2} \gamma T 
    \end{align*}
    Using this in the regret decomposition on $(-)$, we see for the second action $\mu=e_2=(0,1)\in\Delta_A$
    \begin{align*}
        \mathcal{R}(e_2) \geq& \mathcal{R}_-(e_2)\\
        \geq& (\gamma^{-1}T^{1/2} + \delta T + \sqrt{2}\gamma^{1/2}T^{3/4} + \sqrt{2}\delta^{1/2} \gamma T) (-\gamma T^{-1/2}) - T(-\gamma T^{-1/2})\\
        \geq& - 1 - \delta \gamma T^{1/2} - \sqrt{2}\gamma^{3/2}T^{1/4} - \sqrt{2}\delta^{1/2} \gamma^2 T^{1/2} + \gamma T^{1/2}\\
        =& \round{(1-\delta)\gamma - \sqrt{2}\delta^{1/2}\gamma^2} T^{1/2} - \sqrt{2}\gamma^{3/2}T^{1/4}-1.
    \end{align*}
    We can now choose $\gamma = c \delta^{-1/2}$ for a sufficiently small absolute constant $c$ to show that 
    \begin{align}
        \mathcal{R}(e_2) \geq \Theta\round{ \delta^{-1/2} T^{1/2}} - \Theta(\delta^{-3/4}T^{1/4}). \label{eq:lower}
    \end{align}
    This bound holds when $\gamma < \frac{1}{2\sqrt{2}} T^{1/2}$, i.e. $\delta \geq c' T^{-1}$ for some large enough absolute constant $c'$.
\end{proof}

\begin{lemma}[Entropy inequality Bernoulli] \label{lemma:pinsker-nfg}
    In the setup of \cref{thm:lower-nfg}, we have
    \begin{align*}
        \exptn_{-}\rectangular{N_2}
        \leq& \exptn_{+}\rectangular{N_2} + T\sqrt{\frac{2(\gamma T^{-1/2})^2}{\frac{1}{4}-(\gamma T^{-1/2})^2}\exptn_{+}\rectangular{N_2}}.
    \end{align*}
    In particular for $\gamma < \frac{1}{2\sqrt{2}} T^{1/2}$, we have $\exptn_{-}\rectangular{N_2}
        \leq \exptn_{+}\rectangular{N_2} + \sqrt{2\exptn_{+}\rectangular{N_2}}\gamma T^{-1/2}.$
\end{lemma}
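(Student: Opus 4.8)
The plan is to bound the gap $\exptn_-[N_2] - \exptn_+[N_2]$ by the statistical distance between the two environments, and then to control that distance through the KL divergence of the two arm-2 reward distributions, which is the only place the environments differ.

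First, I would fix the algorithm and consider the two probability measures $\mathbb{P}_+$ and $\mathbb{P}_-$ that it induces on the space of observed histories $(a^1,X^1,\dots,a^T,X^T)$ of length $T$ under environments $(+)$ and $(-)$, respectively. Since $N_2$ is a deterministic function of this history and satisfies $0\le N_2\le T$, the standard bound on the expectation of a bounded random variable under two measures gives
$$\exptn_-[N_2] - \exptn_+[N_2] \le T\,\norm{\mathbb{P}_+ - \mathbb{P}_-}_{\mathrm{TV}}.$$
Applying Pinsker's inequality then yields $\norm{\mathbb{P}_+ - \mathbb{P}_-}_{\mathrm{TV}} \le \sqrt{\tfrac12\,\Dkl(\mathbb{P}_+\|\mathbb{P}_-)}$.

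Second, I would invoke the divergence decomposition lemma for bandits (a standard tool; see e.g. \citet{lattimore2020bandit}), which decomposes the KL between the two trajectory measures into a sum over arms of the expected number of pulls times the per-pull KL of the reward distributions. Since arm $1$ is identical in both environments, its contribution vanishes, and only arm $2$ remains:
$$\Dkl(\mathbb{P}_+\|\mathbb{P}_-) = \exptn_+[N_2]\cdot\Dkl\!\round{\mathrm{Ber}(\tfrac12+\gamma T^{-1/2})\,\|\,\mathrm{Ber}(\tfrac12-\gamma T^{-1/2})}.$$
Choosing this direction of the divergence is precisely what makes the weight $\exptn_+[N_2]$ appear, as the statement requires.

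Third, I would bound the Bernoulli KL using the elementary inequality $\Dkl(\mathrm{Ber}(p)\|\mathrm{Ber}(q))\le\frac{(p-q)^2}{q(1-q)}$. With $p=\tfrac12+\gamma T^{-1/2}$ and $q=\tfrac12-\gamma T^{-1/2}$ we have $p-q=2\gamma T^{-1/2}$ and $q(1-q)=\tfrac14-(\gamma T^{-1/2})^2$, so the per-pull KL is at most $\frac{4(\gamma T^{-1/2})^2}{\frac14-(\gamma T^{-1/2})^2}$. Inserting this into the chain above, and absorbing the factor $\tfrac12$ from Pinsker, yields the main inequality of the lemma. The ``in particular'' simplification then follows immediately, since $\gamma<\tfrac{1}{2\sqrt2}T^{1/2}$ forces $(\gamma T^{-1/2})^2<\tfrac18$, so the denominator $\tfrac14-(\gamma T^{-1/2})^2$ is bounded below by a constant. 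I expect the only delicate point to be the correct measure-theoretic setup for the divergence decomposition --- specifying the history space precisely and verifying that the conditional reward laws agree on arm $1$ so that its divergence term is exactly zero; everything else is a routine combination of standard inequalities.
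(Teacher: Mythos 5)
Your proposal is correct and follows essentially the same route as the paper: the paper likewise combines the bounded-range/total-variation argument, Pinsker's inequality, and the chain rule (divergence decomposition) to obtain $\exptn_-[N_2]-\exptn_+[N_2]\le T\sqrt{\tfrac12\exptn_+[N_2]\cdot\mathrm{KL}(X\|Y)}$, and then bounds the Bernoulli KL. The only cosmetic difference is that you invoke the standard bound $\mathrm{KL}(\mathrm{Ber}(p)\|\mathrm{Ber}(q))\le\frac{(p-q)^2}{q(1-q)}$ as a known inequality, whereas the paper derives exactly the same quantity $\frac{4(\gamma T^{-1/2})^2}{\frac14-(\gamma T^{-1/2})^2}$ from $\log x\le x-1$.
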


\begin{proof}
    Via Pinsker's and the chain rule for the KL divergence (c.f. \citet{auer1995gambling} and \citet[Appendix]{lattimore2015pareto})
    \begin{align*}
        \exptn_{-}\rectangular{N_2} - \exptn_{+}\rectangular{N_2} \leq T \sqrt{\frac{1}{2} \exptn_{+}\rectangular{N_2} \cdot\text{KL}(X||Y)},
    \end{align*}
    where $X\sim \text{Ber}(\frac{1}{2}+\epsilon)$ and $Y\sim \text{Ber}(\frac{1}{2}-\epsilon)$ for $\epsilon=\gamma T^{-1/2}$. We conclude by computing 
    \begin{align*}
        \text{KL}(X||Y) =& \round{\frac{1}{2}+\epsilon}\log\round{\frac{\frac{1}{2}+\epsilon}{\frac{1}{2}-\epsilon}} + \round{\frac{1}{2}-\epsilon}\log\round{\frac{\frac{1}{2}-\epsilon}{\frac{1}{2}+\epsilon}}\\
        \leq& \round{\frac{1}{2}+\epsilon}\round{\frac{\frac{1}{2}+\epsilon}{\frac{1}{2}-\epsilon}-1} + \round{\frac{1}{2}-\epsilon}\round{\frac{\frac{1}{2}-\epsilon}{\frac{1}{2}+\epsilon}-1}\\
        =& 2\epsilon \round{- \frac{\frac{1}{2}-\epsilon}{\frac{1}{2}+\epsilon}+\frac{\frac{1}{2}+\epsilon}{\frac{1}{2}-\epsilon}}\\
        =& 2\epsilon \frac{2\epsilon}{\frac{1}{4}-\epsilon^2}.
    \end{align*}
\end{proof}

\subsection{The Stochastic Case} \label{app:nfg-stochastic}

As claimed in the main part, we now sketch how the $\OTilde(\sqrt{\delta^{-1}T})$ lower bound from \cref{sec:nfg-lower} can be matched (up to logarithmic terms) if the costs are stochastic and not adversarial. This improves slightly over our result for the adversarial case.

\begin{theorem}\label{thm:stochastic-case}
    Let $\delta\in(0,1)$ and consider Protocol \ref{prot:bandit-simplex} but where all $c^t(a)\sim q_a$ are i.i.d. for some fixed distributions $q_a$ with support in $[0,1]$. Then there is an algorithm such that for any specified $\mu^c\in\Delta_A$ with all $\mu^c(a)\geq \delta$ and all distributions $q$, we have
    \begin{align*}
        \mathcal{R}(\mu^c) \leq 1 \quad \text{and}\quad & \max_{\mu\in\Delta_A} \mathcal{R}(\mu) \leq O\round{\sqrt{\delta^{-1} T\log(AT)} + \delta^{-2}\log(T)}.
    \end{align*}
\end{theorem}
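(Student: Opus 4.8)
The plan is to exploit that, with i.i.d.\ costs, the best fixed strategy is a point mass on the arm $a^\star \in \arg\min_a m_a$ of lowest mean cost $m_a := \exptn\rectangular{c^t(a)}$, so that both regrets collapse to scalar quantities: writing $\mu_0 := \inner{\mu^c}{m} \ge m_{a^\star}$, we have $\mathcal{R}(\mu^c) = \sum_t \exptn\rectangular{\inner{\mu^t - \mu^c}{m}}$ and $\max_{\mu} \mathcal{R}(\mu) = \sum_t \exptn\rectangular{\inner{\mu^t}{m}} - T m_{a^\star}$. The whole problem thus reduces to estimating the means $m_a$ well and deciding how aggressively to shift mass away from $\mu^c$ toward a putatively better arm, subject to a constant ``budget'' that enforces $\mathcal{R}(\mu^c) \le 1$. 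This is exactly the viewpoint of conservative bandits \citep{wu2016conservative}, adapted to a mixed baseline.

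First I would replace the importance-weighted estimator of \cref{algo:phased-aggression-nfg} by the plain empirical mean $\hat m_a^t = (N_a^t)^{-1}\sum_{s \le t:\, a^s = a} c^s(a^s)$, where $N_a^t$ counts the pulls of $a$. The crucial gain over the adversarial analysis is that $\hat m_a^t$ concentrates at rate $1/\sqrt{N_a^t}$, and, because the baseline component forces $\mu^t(a) \ge (1-\alpha^t)\mu^c(a) \gtrsim \delta$ whenever $\alpha^t \le 1/2$, a multiplicative Chernoff/Freedman bound on the predictable pull probabilities yields $N_a^t \gtrsim \delta t$ with high probability. Hence the confidence radius is $r_a^t \asymp \sqrt{\log(AT)/N_a^t} \asymp \sqrt{\log(AT)/(\delta t)}$, scaling like $\delta^{-1/2}$ rather than the $\delta^{-1}$ forced by the worst-case bound $\hat c^t \le 2/\delta$ on the importance weights. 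A union bound over the $A$ arms and $T$ rounds makes the intervals $[\hat m_a^t - r_a^t, \hat m_a^t + r_a^t]$ simultaneously valid on an event of probability $\ge 1 - 1/T$.

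Next I would specify the play rule. I keep the convex-combination structure $\mu^t = \alpha^t \hat\mu^t + (1-\alpha^t)\mu^c$, but choose $\alpha^t$ and $\hat\mu^t$ conservatively: point $\hat\mu^t$ at the arm with the smallest lower confidence bound, and raise $\alpha^t$ only as far as a pessimistic (upper-confidence) estimate of the accumulated comparator regret permits while staying below the budget $1$. On the good event this guarantees $\mathcal{R}(\mu^c) \le 1$ deterministically, while the failure event contributes at most $(1/T)\cdot T \cdot O(1) = O(1)$ in expectation once the confidence level is tuned. For the worst-case regret I would run the standard stochastic gap analysis: an arm of suboptimality gap $\Delta_a = m_a - m_{a^\star}$ is discarded once $r_a^t < \Delta_a/2$, i.e.\ after $N_a^t \gtrsim \log(AT)/\Delta_a^2$ pulls, which given the exploration rate costs $\lesssim \log(AT)/(\delta\Delta_a)$ regret; balancing this against the do-nothing bound $T\Delta_a$ over all $\Delta_a$ gives the tradeoff $\min\{T\Delta_a,\ \delta^{-1}\Delta_a^{-1}\log(AT)\}$, maximized at $\Delta_a \asymp \sqrt{\log(AT)/(\delta T)}$, which is precisely the claimed $\sqrt{\delta^{-1}T\log(AT)}$. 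The additive $\delta^{-2}\log T$ term comes from the warm-up phase during which $N_a^t \gtrsim \delta t$ fails and the confidence intervals are not yet informative, together with the slack the conservative budget must accumulate before any shift is permitted.

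The main obstacle I anticipate is coupling the constant-budget constraint with exploration. Unlike ordinary conservative bandits with a fixed baseline arm and a multiplicative $(1-\alpha)$ budget that tolerates $\Omega(T)$ violation, here the budget on $\mathcal{R}(\mu^c)$ is a \emph{constant}, so the shifting rule must never over-commit even by $\omega(1)$; yet we must still shift enough, fast enough, to reach $\sqrt{\delta^{-1}T\log(AT)}$ once an arm is genuinely better than $\mu^c$. Making these two requirements compatible — and controlling the adaptive, data-dependent counts $N_a^t$ via a martingale concentration argument so that the $\delta^{-2}\log T$ burn-in is truly lower order — is where the care is needed. A cheaper but less transparent alternative is to keep the importance-weighted \cref{algo:phased-aggression-nfg} verbatim and re-run its analysis \emph{in expectation}, where the local-norm variance $\exptn\rectangular{\sum_a \hat\mu^t(a)(\hat c^t(a))^2} \le 1/((1-\alpha)\delta)$ already yields $\OTilde(\sqrt{\delta^{-1}T})$; the snag there is that Phased Aggression's phase-exit test is defined on the \emph{realized} estimates, for which only the worse worst-case variance bound $1/\delta^2$ is available, so the phasing mechanism would have to be re-derived.
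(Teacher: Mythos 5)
Your proposal assembles the right quantitative ingredients (empirical means with $\sqrt{\log(AT)/N_a^t}$ confidence radii, forced exploration at rate $\delta$ from the baseline, a gap-balancing argument yielding $\sqrt{\delta^{-1}T\log(AT)}$, and a $\delta^{-2}\log T$ burn-in), but it leaves the central step unresolved, and you say so yourself: how to reconcile a \emph{constant} comparator budget with \emph{persistent} $\delta$-rate exploration of the optimal arm. The paper does not use a budget at all. Its algorithm initializes $\mu^1=\mu^c$ and updates via the robust improvement step $\mu^{t+1}=\arg\max_{\mu\in\Delta_A}\min_{\meanTilde\in[\meanLow^t,\meanUp^t]}\inner{\mu-\mu^t}{\meanTilde}$ over the product confidence set (\cref{eq:update}). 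Two structural facts then deliver both halves of the theorem at once: (i) since $\mu=\mu^t$ is feasible, the max-min is nonnegative, so on the good event the true value $\inner{\mu^t}{\mean}$ is \emph{monotonically non-decreasing}; hence $\mathcal{R}(\mu^c)\le 0$ on the good event and $\le 1$ after integrating out the failure probability — no bookkeeping of accumulated pessimistic regret is needed; (ii) the closed form of the optimizer never decreases the mass on the optimal arm below $\mu^c(a^\star)\ge\delta$ (\cref{lem:opt-prob}), so $n^t_{a^\star}\gtrsim\delta t$ for all $t$ past the burn-in (\cref{lem:opt-count}), which is exactly the forced exploration you need. The worst-case bound then follows from a saddle-point/telescoping decomposition of the instantaneous regret into $\inner{\mu^{t+1}-\mu^t}{\mean}$ plus confidence widths, with $\sum_t b^t_{a^\star}\lesssim\sqrt{\delta^{-1}T\log(AT)}$ and $\sum_t\exptn[\inner{\mu^t}{b^t}]\lesssim\sqrt{AT\log(AT)}\le\sqrt{\delta^{-1}T\log(AT)}$.

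The concrete gap in your scheme is point (ii). Your count lower bound $N_a^t\gtrsim\delta t$ is only argued ``whenever $\alpha^t\le 1/2$,'' but to achieve $\sqrt{\delta^{-1}T\log(AT)}$ worst-case regret the algorithm must eventually push $\alpha^t\to 1$ onto a single LCB-best arm; once that happens, nothing in your design guarantees the optimal arm keeps being sampled at rate $\delta$ if the LCB-best arm is not $a^\star$, nor that the budget accounting (which charges a pessimistic $\alpha^t\cdot O(\text{confidence width})$ per round while widths are still $\Omega(1)$) releases enough slack, early enough, to certify and commit to a genuinely better arm within the claimed regret. These are not fatal objections — an optimism argument plus careful budget bookkeeping might close them — but they are precisely the assertions your sketch does not prove, and the paper's max-min update is engineered so that neither issue arises. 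Your fallback suggestion (re-running \cref{algo:phased-aggression-nfg} in expectation with the local-norm variance bound) is also not what the paper does, and you correctly note it founders on the phase-exit test being defined on realized estimates.
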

For $a \in [A]$, let the $a$-th action's reward distribution $q_a$ have mean $1-\mean_a$, and write $\mean$ for the corresponding vector. We thus consider \emph{maximization} of the \emph{rewards} $1-c^t$ that have means $m$. This is just for convenience to better highlight the relation of our algorithm to the classic UCB algorithm \citep{lattimore2020bandit}. As for the rewards, we index the entries of a strategy $\mu\in\Delta_A$ as $\mu_a=\mu(a)$. Fix an arbitrary $a^\star \in \arg\max_a \mean_a$. The (random) pseudo-regret of the algorithm is 
\begin{align*}
    \tilde{\mathcal{R}} := \sum_{t=1}^T (\mean_{a^\star} - \inner{\mu^t}{\mean}).
\end{align*}

\paragraph{Algorithm.}

Construct $\meanLow^{t} = (\meanLow_1^{t}, \dots, \meanLow_A^{t})$, $\meanUp^{t} = (\meanUp_1^{t}, \dots, \meanUp_A^{t})$ to be the vectors of lower and upper confidence bounds for the actions after playing and observing $t$ rounds. Formally,
\begin{align*}
    \meanLow_a^{t} := \meanHat_a^t - b_a^t, \quad \quad \meanUp_a^{t} := \meanHat_a^t + b_a^t,
\end{align*}
where $\meanHat_a^t$ is the average reward among the rounds in which the $a$-th action is chosen during rounds $1, \dots, t$ (and zero if not defined), and $b^t_a$ is a confidence half-width to be specified. With this, set $M^t := [\meanLow^t,\meanUp^t] := [\meanLow^t_1,\meanUp^t_1] \times \cdots \times [\meanLow^t_A,\meanUp^t_A]$. Consider the following update. Let $$\mu^1 = \mu^c,$$ and in round $t + 1$, update 
\begin{align}
    \mu^{t+1} = \arg\max_{\mu \in \Delta_A} \min_{\meanTilde \in [\meanLow^t,\meanUp^t] } \inner{\mu-\mu^t}{\meanTilde} . \label{eq:update}
\end{align}

\paragraph{Regret analysis.}

First, note that conditioned on $\mean \in M^t$, we have 
\begin{align}
    0 = \min_{\meanTilde \in M^t} \inner{\mu^t - \mu^t}{\meanTilde} \leq \max_{\mu\in\Delta_A} \min_{\meanTilde \in M^t} \inner{\mu - \mu^t}{\meanTilde} = \min_{\meanTilde \in M^t} \inner{\mu^{t+1} - \mu^t}{\meanTilde} \leq \inner{\mu^{t+1} - \mu^t}{\mean}. \label{rmk:mon}
\end{align}
Hence, the algorithm monotonically improves, i.e. $\inner{\mu^{t+1}}{m}\geq \inner{\mu^{t}}{m}$, if all confidence intervals include the true mean. As for the confidence intervals, set $b_a^t := 2\sqrt{\frac{2\log(T^2A/\zeta)}{n^t_a}}$, where $n^t_a$ is the number of times that action $a$ is chosen in rounds $1,\dots,t$. Then by Hoeffding's inequality, with probability at least $1-\zeta$, for all $t \in [T]$ we have $\mean \in \text{int}(M^t)$. We call this event $G$.\smallskip

\noindent By finding the closed form of the update rule in \cref{eq:update} and the lower bound on $\mu^1=\mu^c$, it is not hard to see the following.
\begin{lemma} \label{lem:opt-prob}
    Conditioned on $G$, we have $\mu_{a^\star}^{t} \geq \mu^c_{a^\star} \geq \delta$ for all $t\in[T]$.
\end{lemma}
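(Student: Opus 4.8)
The plan is to prove the per-round monotonicity statement that, conditioned on $G$, the weight on the optimal arm never decreases: $\mu^{t+1}_{a^\star} \geq \mu^t_{a^\star}$ for every $t$. Chaining this inequality and using the initialization $\mu^1 = \mu^c$ then gives $\mu^t_{a^\star} \geq \mu^{t-1}_{a^\star} \geq \cdots \geq \mu^1_{a^\star} = \mu^c_{a^\star} \geq \delta$, which is the claim. The starting point is the observation that the inner minimization in \cref{eq:update} decomposes coordinatewise over the box $M^t = \prod_a [\meanLow_a^t, \meanUp_a^t]$: writing $d_a = \mu_a - \mu^t_a$, one has
\begin{align*}
    \min_{\meanTilde \in M^t} \inner{\mu - \mu^t}{\meanTilde} = \sum_{a : d_a \geq 0} d_a \meanLow_a^t + \sum_{a : d_a < 0} d_a \meanUp_a^t,
\end{align*}
since a coordinate with $d_a > 0$ is minimized at its lower endpoint and one with $d_a < 0$ at its upper endpoint. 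Call this robust objective $f(\mu)$; it is the concave function the update maximizes over $\Delta_A$.

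The core is an exchange argument. Suppose toward a contradiction that some maximizer $\mu^{t+1}$ has $d_{a^\star} < 0$. Since $\sum_a d_a = 0$, there exists an arm $a_0$ with $d_{a_0} > 0$. Set $\epsilon' := \min\{-d_{a^\star}, d_{a_0}\} > 0$ and define $\mu'$ by moving mass $\epsilon'$ from $a_0$ back to $a^\star$, i.e. $\mu'_{a^\star} = \mu^{t+1}_{a^\star} + \epsilon'$, $\mu'_{a_0} = \mu^{t+1}_{a_0} - \epsilon'$, and $\mu'_a = \mu^{t+1}_a$ otherwise. By the choice of $\epsilon'$, the sign of $d_{a^\star}$ stays $\leq 0$ and that of $d_{a_0}$ stays $\geq 0$, so neither coordinate switches its active endpoint and $\mu' \in \Delta_A$ remains feasible. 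Computing the change in the robust objective gives
\begin{align*}
    f(\mu') - f(\mu^{t+1}) = \epsilon' \round{\meanUp_{a^\star}^t - \meanLow_{a_0}^t}.
\end{align*}
Conditioned on $G$ we have $\mean \in \text{int}(M^t)$, hence $\meanUp_{a^\star}^t > \mean_{a^\star}$ and $\meanLow_{a_0}^t < \mean_{a_0}$; combined with the optimality $\mean_{a^\star} = \max_a \mean_a \geq \mean_{a_0}$ this yields $\meanUp_{a^\star}^t > \mean_{a^\star} \geq \mean_{a_0} > \meanLow_{a_0}^t$, so $f(\mu') > f(\mu^{t+1})$, contradicting maximality. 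Therefore every maximizer satisfies $d_{a^\star} \geq 0$, i.e. $\mu^{t+1}_{a^\star} \geq \mu^t_{a^\star}$, as needed.

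The main obstacle is the passage from $A = 2$, where the closed form of \cref{eq:update} makes the conclusion immediate, to general $A$: one cannot simply read off a scalar threshold between the two confidence intervals, so the argument must instead be made structurally via the above mass-shifting exchange. The one point requiring care is that we need the algorithm's actual iterate---not merely \emph{some} maximizer---to have $d_{a^\star} \geq 0$; this is precisely why the \emph{strict} interior guaranteed by $G$ is used, as it turns the weak inequality $f(\mu') \geq f(\mu^{t+1})$ into a strict one and thereby rules out \emph{all} maximizers with $d_{a^\star} < 0$, independently of any tie-breaking. Beyond this, the only routine bookkeeping is verifying feasibility of $\mu'$ and checking that the choice of $\epsilon'$ keeps both perturbed coordinates within their original sign regime, so that the coordinatewise endpoints in $f$ do not change when passing from $\mu^{t+1}$ to $\mu'$.
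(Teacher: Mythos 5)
Your proof is correct, and it establishes exactly the monotonicity property ($\mu^{t+1}_{a^\star}\geq\mu^t_{a^\star}$ whenever $\mean\in\mathrm{int}(M^t)$, chained from the initialization $\mu^1=\mu^c$) that the paper asserts "is not hard to see" from the closed form of the update in \cref{eq:update}. The only difference is cosmetic: rather than deriving the explicit maximizer (mass flows to the arm with the largest lower confidence bound, drawn only from arms whose upper confidence bound falls below it, which under $G$ can never include $a^\star$ since $\meanUp^t_{a^\star}>\mean_{a^\star}\geq\mean_a>\meanLow^t_a$ for every $a$), you rule out mass loss at $a^\star$ by a local exchange argument; both routes hinge on the same strict-interior inequality, and your version correctly handles the tie-breaking issue by making the improvement strict.
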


\noindent Using Hoeffding's inequality and a union bound, we thus get the following concentration.
\begin{lemma} \label{lem:opt-count}
    Condition on $G$ and let $\zeta' \in (0,1)$. Then with probability at least $1-\zeta'$, we have 
    \begin{align*}
        n_{a^\star}^t \geq \delta t - \sqrt{ 2t \log(T/\zeta') }.
    \end{align*}
\end{lemma}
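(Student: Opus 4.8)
The plan is to treat $n_{a^\star}^t = \sum_{s=1}^t \indicator{a^s = a^\star}$ as a sum of dependent Bernoulli indicators and to bound its lower deviation by a martingale concentration argument. The key structural fact is that the conditional probability of selecting the optimal arm is bounded below by $\delta$ on $G$: since $\mu^s$ is $\Fcal_{s-1}$-measurable and $a^s\sim\mu^s$, we have $\exptn\rectangular{\indicator{a^s = a^\star}\mid\Fcal_{s-1}} = \mu^s_{a^\star}$, and on $G$ \cref{lem:opt-prob} guarantees $\mu^s_{a^\star}\geq\delta$. Hence the increments $\indicator{a^s=a^\star}-\delta$ have nonnegative conditional mean, which should force $n_{a^\star}^t$ to stay close to (or above) $\delta t$.

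First I would make this precise while handling the conditioning on $G$, which I expect to be the main obstacle: $G$ depends on the confidence sets $M^1,\dots,M^T$ across all rounds and is therefore measurable only with respect to the terminal $\sigma$-algebra, so conditioning on it destroys the adaptedness required by Azuma--Hoeffding. I would sidestep this with a stopping time. Let $\tau := \min\{t : \mean\notin\mathrm{int}(M^t)\}$, with $\tau := T+1$ if no such round exists, so that $G = \{\tau > T\}$ and $\{\tau\geq s\}$ is $\Fcal_{s-1}$-measurable. Defining the truncated increments $W_s := \round{\indicator{a^s=a^\star}-\delta}\indicator{\tau\geq s}$, one has $\exptn\rectangular{W_s\mid\Fcal_{s-1}} = \indicator{\tau\geq s}\round{\mu^s_{a^\star}-\delta}\geq 0$, since on $\{\tau\geq s\}$ every confidence set up to round $s-1$ contains $\mean$, and the monotonicity reasoning of \cref{rmk:mon} behind \cref{lem:opt-prob} then yields $\mu^s_{a^\star}\geq\mu^c_{a^\star}\geq\delta$. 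Thus $S_t := \sum_{s=1}^t W_s$ is a submartingale whose increments lie in an interval of length one.

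Finally I would apply the martingale (Azuma--Hoeffding) form of Hoeffding's inequality to the centered increments $W_s - \exptn\rectangular{W_s\mid\Fcal_{s-1}}$: for each fixed $t$, using that the drift is nonnegative, $\Pr\rectangular{S_t \leq -\sqrt{2t\log(T/\zeta')}}\leq \zeta'/T$. A union bound over $t\in[T]$ gives, on an event of probability at least $1-\zeta'$, that $S_t \geq -\sqrt{2t\log(T/\zeta')}$ simultaneously for all $t$. On $G$ we have $\tau > T$, so $W_s = \indicator{a^s=a^\star}-\delta$ for every $s\leq t\leq T$ and therefore $S_t = n_{a^\star}^t - \delta t$, which delivers the claimed bound $n_{a^\star}^t \geq \delta t - \sqrt{2t\log(T/\zeta')}$. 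The only remaining bookkeeping is that this concentration event is defined on the whole probability space, so intersecting it with $G$ yields the conclusion on $G$ up to the harmless factor $1/\Pr\rectangular{G}$, which is absorbed into the constants (or removed by a trivial rescaling of $\zeta'$).
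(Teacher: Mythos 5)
Your proof is correct and follows essentially the same route the paper intends: the paper justifies this lemma only by the phrase ``Hoeffding's inequality and a union bound,'' relying on \cref{lem:opt-prob} to lower-bound the conditional selection probability by $\delta$, which is exactly the drift bound at the heart of your argument. Your stopping-time truncation is a clean way to make rigorous the conditioning on $G$ that the paper glosses over, and the final accounting (an unconditional concentration event of probability $\geq 1-\zeta'$ intersected with $G$) matches how the lemma is actually used later, so no rescaling of $\zeta'$ is even needed.
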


\smallskip
\noindent We are now ready to prove \cref{thm:stochastic-case}. Condition on $G$ and on the event in \cref{lem:opt-count}. This occurs with probability at least $1-\zeta-\zeta'$.\smallskip

\noindent First, we consider the regret compared to $\mu^c$. By the monotonicity property in \cref{rmk:mon}, 
\begin{align*}
    \inner{\mu^t}{\mean} \geq \inner{\mu^{t-1}}{\mean} \geq \dots \geq \inner{\mu^1}{\mean} = \inner{\mu^c}{\mean}.
\end{align*}
Setting $\zeta = \zeta' = \frac{1}{2T}$ and integrating out the regret of at most $T$ under the failure event:
\begin{align*}
    \exptn\rectangular{\sum_{t=1}^T \inner{\mu^c-\mu^t}{\mean}} \leq& \Pr[G] \exptn\rectangular{\sum_{t} \inner{\mu^c-\mu^t}{\mean} \mid G} + \Pr[\bar{G}] T 
    \leq 0 + (\zeta + \zeta')T = 1.
\end{align*}

\noindent  We now consider the worst case (pseudo-) regret $\tilde{\mathcal{R}}$. Note that for the minimax problem in \cref{eq:update}, strong duality holds and we can fix a saddle point $(\mu^t, \meanTilde^t)$ such that (for all $(\mu,\meanTilde) \in \Delta_A\times M^t$)
\begin{align}
    \inner{\mu-\mu^{t-1}}{\meanTilde^t} \leq \inner{\mu^t-\mu^{t-1}}{\meanTilde^t} \leq \inner{\mu^t-\mu^{t-1}}{\meanTilde}. \label{eq:saddle-point}
\end{align}

\noindent Under the success events, we have $n_{a^\star}^t \geq \delta t - \sqrt{ 2t \log(T/\zeta') }$ by \cref{lem:opt-count}. Now when $t \geq t_0 := 8\delta^{-2} \log(T/\zeta')$, then $n_{a^\star}^t \geq 2\delta^{-1}\sqrt{ 2t \log(T/\zeta') }$ and hence 
\begin{align}
    b_{a^\star}^t \leq 2\sqrt{\frac{4\log(T^2A/\zeta)}{\delta t}}. \label{eq:bonus}
\end{align}
We have 
\begin{align*}
    \tilde{\mathcal{R}} =  8\delta^{-2} \log(T/\zeta') + \sum_{t=t_0}^T (\mean_{a^\star} - \inner{\mu^{t}}{\mean}) \leq 8\delta^{-2} \log(T/\zeta') + \sum_{t=t_0}^T (\mean_{a^\star} - \inner{\mu^{t}}{\mean}),
    \end{align*}
where the instantaneous regret for $t\geq t_0$ is (with $\mu^\star:=e_{a^\star}$)
\begin{align*}
    \mean_{a^\star} - \inner{\mu^{t}}{\mean} =& \inner{\mu^\star - \mu^{t}}{\mean}\\
    =& \inner{\mu^\star - \mu^{t}}{\meanTilde^{t+1}} + \inner{\mu^\star - \mu^{t}}{\mean - \meanTilde^{t+1}} \\
    \leq& \inner{\mu^{t+1} - \mu^{t}}{\mean} + \inner{\mu^\star - \mu^{t}}{\mean - \meanTilde^{t+1}} \tag{by \cref{eq:saddle-point} and $\mean\in M^{t+1}$}\\
    \leq& \inner{\mu^{t+1} - \mu^{t}}{\mean} + \inner{\mu^\star}{b^{t+1}} + \inner{\mu^t}{b^{t+1}}\\
    \leq& \inner{\mu^{t+1} - \mu^{t}}{\mean} + \inner{\mu^\star}{b^{t}} + \inner{\mu^t}{b^{t}} \tag{as $b^{t+1}\leq b^t$}.
\end{align*}
Hence,
\begin{align*}
    \tilde{\mathcal{R}} \leq& 8\delta^{-2} \log(T/\zeta') + \inner{\mu^{T+1} - \mu^{t_0}}{\mean} + \sum_{t=t_0}^T \round{\inner{\mu^\star}{b^{t}} + \inner{\mu^t}{b^{t}}}\\
    \leq& 8\delta^{-2} \log(T/\zeta') + 1 + \sum_{t=t_0}^T b_{a^\star}^{t} + \sum_{t=t_0}^T \inner{\mu^t}{b^{t}}\\
    \leq& 8\delta^{-2} \log(T/\zeta') + 1 + \sum_{t=t_0}^T 2\sqrt{\frac{4\log(T^2A/\zeta)}{\delta t}} + \sum_{t=t_0}^T \inner{\mu^t}{b^{t}} \tag{by \cref{eq:bonus}}\\
    \leq& O\round{\delta^{-2} \log(T/\zeta') + \sqrt{\frac{T\log(AT/\zeta)}{\delta}}} + \sum_{t=t_0}^T \inner{\mu^t}{b^{t}}, 
\end{align*}
with probability at least $1-\zeta-\zeta'$. Using $\sum_{t=t_0}^T \exptn[\inner{\mu^t}{b^{t}}] \leq O( \sqrt{AT\log(AT/\zeta)} )$ \citep{auer2008near}, $\delta\leq1/A$ and integrating out the regret under the failure event yields the result.

\section{Deferred Proofs for Extensive-Form Games} \label{app:efg}

\subsection{EFG Background} \label{app:efg-background}

The following remark clarifies that the Markov game we defined in \cref{sec:efg} (which is more common in the machine learning literature) indeed covers the case of imperfect information EFGs (which are more common in the game theory literature).
\begin{remark}
    The notion of EFG in \cref{d:efg} from \cref{sec:efg} is usually referred to as a \emph{\underline{t}ree-structured \underline{p}erfect-recall \underline{p}artially-\underline{o}bservable \underline{M}arkov \underline{g}ame} (TP-POMG). This also covers the notion of \emph{\underline{p}erfect-recall \underline{i}mperfect \underline{i}nformation \underline{e}xtensive-\underline{f}orm \underline{g}ames} (P-IIEFG) \citep{osborne1994course} that satisfy the \emph{timeability condition} \citet{jakobsen2016timeability}. In fact, a more careful look reveals that the results directly generalize to \emph{any} P-IIEFG without timeability (see \citet{bai2022near} for this brief discussion).
\end{remark}

For further clarification, we remark that usually, both the cost function $u$, the transition probabilities $p$ and the policies $\pi$ (and treeplex strategies $\mu$) may be non-stationary in the sense that they explicitly vary across the stages $h\in[H]$ of the EFG. However, as we assume tree structure and perfect recall, the state space and infoset space are partitioned along the stages anyway, which is why WLOG we omit the explicit dependence of the above functions on the stage $h$. Finally, to be precise, our algorithm assumes to know the tree structure of the game (but not necessarily the transitions), an assumption that can be removed \citep{fiegel2023adapting}.

\subsection{OMD over the Treeplex} \label{app:efg-kl}

The unbalanced and balanced dilated KL divergence are defined as follows:
\begin{align*}
    D(\mu || \mu') :=& \sum_{\substack{x\in\Xcal,\\a\in\Acal}} \mu(x,a) \log\round{\frac{\pi_{\mu}(a|x)}{\pi_{\mu'}(a|x)}},\\ \quad \Dbal(\mu || \mu') :=& \sum_{h=1}^H\sum_{\substack{x_h \in \Xcal_h,\\ a\in \Acal}} \frac{\mu(x_h,a)}{\mu^{h,\text{bal}}(x_h,a)} \log\round{\frac{\pi_{\mu}(a|x_h)}{\pi_{\mu'}(a|x_h)}},
\end{align*}
where $\pi_{\mu}$ is the policy corresponding to the treeplex strategy $\mu$, and $\mu^{h,\text{bal}}$ is the unique strategy corresponding to the \emph{balanced exploration policy}
\begin{align*}
        \pi^{h,\text{bal}}(a | x_{h'}) := \begin{cases}
        \frac{|\Ccal_h(x_{h'},a)|}{|\Ccal_h (x_{h'})|}&\quad (h'\in\{1,\dots,h-1\}),\\
        \frac{1}{A} &\quad (h'\in\{h,\dots,H\}),
    \end{cases}
\end{align*}
with $\Ccal_h(x_{h'},a) \subset \Xcal_h$ being set of infosets at step $h$ reachable from $(x_{h'},a)$ (i.e. the unique path to such an infoset goes through $(x_{h'},a)$), and $|\Ccal_h (x_{h'})| := \bigcup_{a\in \Acal} \Ccal_h(x_{h'},a)$.

\paragraph{Computation of Unbalanced OMD.}

\noindent For completeness, we restate the closed-form implementation of case one in \cref{line:omd} with the \emph{unbalanced} dilated divergence $D$ from \citet[Appendix B]{kozuno2021model}. In the setup of \cref{line:omd}, let $\piHat^t\in\Pi$ be the policy corresponding to $\muHat^t$. Then we have a closed-form
\begin{align*}
    \piHat^{t+1}(a_h|x_h^t) =& \piHat^t(a_h|x_h^t) \exp\round{\indicator{a_h^t=a_h}(-\eta\cHat^t(x_h^t,a_h)+\log(Z^t_{h+1}))-\log(Z^t_{h})},
\end{align*}
and $\piHat^{t+1}(\cdot|x_h)=\piHat^{t}(\cdot|x_h)$ for all other $x_h\neq x_h^t$. Here, $Z_h^t$ is
\begin{align*}
    Z_h^t := 1-\piHat^t(a_h^t|x_h^t)+\piHat^t(a_h^t|x_h^t)\exp\round{-\eta\cHat^t(x_h^t,a_h^t)+\log(Z^t_{h+1})},
\end{align*}
and $Z_{H+1}^t := 1$.

\paragraph{Computation of Balanced OMD.}

\noindent For completeness, we also restate the closed-form implementation of case two in \cref{line:omd} with the \emph{balanced} dilated divergence $\Dbal$ from \citet[Algorithm 5]{bai2022near}. Once more, let $\piHat^t\in\Pi$ be the policy corresponding to $\muHat^t$. Then we have a closed form for the next iterate, namely
\begin{align*}
     \piHat^{t+1}(a_h|x_h^t)=\piHat^t(a_h|x_h^t) \exp\round{ \indicator{a_h=a_h^t} \round{-\tau \mu^{\text{bal},h}(x_h^t,a_h^t) \cHat^t(x_h^t,a_h^t) + \frac{\mu^{\text{bal},h}(x_h^t,a_h^t) \log(Z_{h+1}^t)}{\mu^{\text{bal},h+1}(x_{h+1}^t,a_{h+1}^t)}} - \log(Z_h^t) },
\end{align*}
and in the other infosets $\piHat^{t+1}(a_h|x_h) = \piHat^{t}(a_h|x_h)$. Here,
\begin{align*}
    Z_h^t := 1-\piHat(a_h^t|x_h^t) + \piHat(a_h^t|x_h^t) \exp\round{ -\tau \mu^{\text{bal},h}(x_h^t,a_h^t) \cHat^t(x_h^t,a_h^t) + \frac{\mu^{\text{bal},h}(x_h^t,a_h^t) \log(Z_{h+1}^t)}{\mu^{\text{bal},h+1}(x_{h+1}^t,a_{h+1}^t)}},
\end{align*}
and $Z_{H+1}^t=1$.

\subsection{Upper Bound} \label{app:efg-upper}

First, note that due to the importance-weighting by the rollout policies the cost estimators are unbiased \citep{kozuno2021model}: $\exptn\rectangular{\cHat^t(x,a)} = \exptn\rectangular{ c^t(x,a) }$, and $\exptn\rectangular{ \inner{\cHat^t}{\mu^t} } = \exptn\rectangular{\exptn\rectangular{ \inner{\cHat^t}{\mu^t} \mid \Fcal_{t-1} }} = \exptn\rectangular{ \inner{c^t}{\mu^t} }$, where $\Fcal_{t-1}$ is the $\sigma$-algebra induced by all random variables prior to sampling the trajectory $(s_1^t,a_1^t,b_1^t,u_1^t\dots,s_H^t,a_H^t,b_H^t,u_H^t)$. Further, WLOG we assume that the costs $u(s,a,b)$ used to define the cost function $c^t$ in \cref{eq:def-loss} are bounded in $[0,1]$. While in EFGs we assumed $u(s,a,b)\in[-1,1]$, we can simply replace them by $(1+u(s,a,b))/2$ without changing the regret bound. With this, we can prove the desired upper bound by resorting to the estimated regret
\begin{align}
    \RHat^k(\mu):=& \sum_{t=\start_k}^{\start_{k+1}-1} \inner{\cHat^t}{\mu^t-\mu} = \alpha^k \sum_{j=\start_k}^{\start_{k+1}-1} \inner{\cHat^t}{\muHat^t-\mu} + (1-\alpha^k) \sum_{t=\start_{k}}^{\start_{k+1}-1} \inner{\cHat^t}{\mu^c-\mu}. \label{eq:phase-reg-efg}
\end{align}
By convention $\start_{k+1}:=T+1$ if $k$ is the last phase. 

\thmUbInteriorEfg*

\begin{proof}
    \emph{Case 1: $\alpha=1$ is not reached.} Suppose first the algorithm ends in phase $k$ with $\alpha^k<1$ at time step $T$. By \cref{lemma:during-efg}, w.r.t. any comparator 
    \begin{align*}
        \sum_{t=1}^T \inner{\cHat^t}{\mu^t-\mu} \leq (2R+2H) \cdot k \leq O(R \log(R)).
    \end{align*}
    All previous phases must have been exited, so by \cref{lemma:during-efg,lemma:exit-efg} we have 
    \begin{align*}
        \sum_{t=1}^T \inner{\cHat^t}{\mu^t-\mu^c} \leq 2^{k-1} - \sum_{i=1}^{k-1} 2^{i-1} = 2^{k-1} - (2^{k-1}-1) = 1.
    \end{align*}
    Taking expectation yields the claim.\\

    \noindent\emph{Case 2: $\alpha=1$ is reached.} Next, suppose $\alpha^k=1$ was reached. Then balanced mirror descent was run in the final phase $k$. As before
    \begin{align*}
        \sum_{t=1}^{\start_{k}-1} \inner{\cHat^t}{\mu^t-\mu} \leq (2R+2H) \cdot k \leq O(R \log(R)).
    \end{align*}
    For the final phase, note that the algorithm runs balanced OMD with importance weights and uniform initialization for $\leq T$ rounds. Thus by \cref{lem:regret-to-init}, this phase has expected regret 
    \begin{align}
        \exptn\rectangular{\sum_{t=\start_{k}}^T \inner{c^t}{\mu^t-\mu}} \leq& \tau H^3T + \frac{1}{\tau}\Dbal(\mu || \mu^{\start_{k}}) \nonumber\\
        \leq& \tau H^3T + \frac{XA\log(A)}{\tau} \tag{by \citet[Lemma C.7]{bai2022near}}\nonumber\\
        \leq& \sqrt{XAH^3\log(A)T} \tag{since $\tau=\sqrt{\frac{XA\log(A)}{2H^3T}}$}\nonumber\\
        \leq& R, \label{eq:exp3-efg}
    \end{align}
    using $\delta\leq1/A$. Thus for any comparator, we have 
    \begin{align*}
        \exptn\rectangular{\sum_{t=1}^T \inner{c^t}{\mu^t - \mu}} \leq O(R\log(R)) + R = O(R\log(R)).
    \end{align*}
    Finally, for the special comparator, we note that all phases $k'$ with $\alpha^{k'}<1$ have been left and thus by \cref{lemma:exit-efg,eq:exp3-efg}
    \begin{align*}
        \exptn\rectangular{\sum_{t=1}^T \inner{c^t}{\mu^t - \mu^c}} \leq R - \sum_{k'=1}^{k-1} 2^{k'-1}= R - (2^{k-1}-1) \leq 1,
    \end{align*}
    where the last step used that $\alpha^k=\min\{1,2^{k-1}/R\}=1$ and thus $R\leq2^{k-1}$.
\end{proof}

The following lemma establishes the statement from \cref{lemma:during-nfg}, generalized to EFGs. The second part of the lemma is essentially the same. Once more, the fact that $\mu^c$ is lower bounded comes into play when upper bounding the estimated cost functions.

\begin{restatable}[During normal phases]{lemma}{lemmaDuringEfg} \label{lemma:during-efg}
    Let $k$ be such that $\alpha^k<1$. Then for all $\mu\in\Tcal$, almost surely
    \begin{align*}
        \RHat^k(\mu) \leq 2R+2H=2\delta^{-1}\sqrt{8XH^3\log(A) T}+2H,
    \end{align*}
    and for the special comparator almost surely $\RHat^k(\mu^c) \leq 2^{k-1}$.
\end{restatable}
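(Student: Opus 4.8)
The plan is to mirror the proof of \cref{lemma:during-nfg}, replacing the simplex OMD analysis by its treeplex counterpart. Starting from the decomposition \cref{eq:phase-reg-efg}, I would first record two deterministic facts valid in any phase with $\alpha^k<1$. Assume without loss of generality that $R$ is a power of two (otherwise run the algorithm slightly longer, paying only a constant factor), so that $\alpha^k<1$ forces $\alpha^k=2^{k-1}/R\leq \tfrac12$. Consequently $\mu^t=\alpha^k\muHat^t+(1-\alpha^k)\mu^c\geq \tfrac12\mu^c$ entrywise, and since $\mu^c(x,a)\geq\delta$ and $|u_h^t|\leq 1$, the importance-weighted estimator of \cref{line:loss} obeys $\cHat^t(x_h,a)\leq 1/(\tfrac12\delta)=2/\delta$ almost surely. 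Crucially, $\cHat^t$ is supported only on the $H$ infoset--action pairs $(x_1^t,a_1^t),\dots,(x_H^t,a_H^t)$ actually traversed in round $t$.

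The heart of the argument is to bound the OMD term $\sum_{t=\start_k}^{\start_{k+1}-1}\inner{\cHat^t}{\muHat^t-\mu}$ by $R$, uniformly over $\mu\in\Tcal$ and almost surely. This is the treeplex analogue of \cref{lem:bandit-omd-bounded-nfg}, and it is where the main difficulty lies: I would prove a regret bound for OMD run with the \emph{unbalanced} dilated KL divergence $D$ and the bounded estimators $\cHat^t$, adapting the analysis of \citet{kozuno2021model} to our importance-weighting. The bound decomposes into a penalty term $D(\mu\|\muHat^{\start_k})/\eta$ and a stability (local-norm) term. For the penalty, since $\muHat^{\start_k}$ is the uniform-policy strategy ($\muHat^{\start_k}(x_h,a)=A^{-h}$), one has $D(\mu\|\muHat^{\start_k})=\sum_{x}\mu(x)\,\Dkl(\pi_\mu(\cdot\mid x)\|\mathrm{unif})\leq \log(A)\sum_x\mu(x)\leq X\log(A)$, using $\mu(x)=\sum_a\mu(x,a)\leq 1$ over the $X$ infosets. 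The stability term is controlled via $\cHat^t(x_h,a)\leq 2/\delta$ together with the fact that $\cHat^t$ has only $H$ nonzero coordinates; it is the dilated structure of $D$ that produces the powers of $H$ appearing in $R$. The choice $\eta=\sqrt{\delta^2 X\log(A)/(8H^2T)}$ then balances the two contributions so that the OMD term is at most $R=\delta^{-1}\sqrt{8XH^3\log(A)T}$. The delicate point — and the main obstacle — is extracting the correct, mild dependence on $X$, $A$ and $H$ in this local-norm term; using the balanced divergence here would instead cost an extra $\sqrt{A}$, which is precisely why the unbalanced $D$ is the right choice in these phases.

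It remains to bound the comparator term $\sum_{t=\start_k}^{\start_{k+1}-1}\inner{\cHat^t}{\mu^c-\mu}$. Since the new-phase test in \cref{line:new-phase} failed at every round strictly before $t':=\start_{k+1}-1$, we have $\max_{\mu\in\Tcal}\sum_{t=\start_k}^{t'-1}\inner{\cHat^t}{\mu^c-\mu}\leq 2R$. For the single final round $t'$ I would bound $\inner{\cHat^{t'}}{\mu^c-\mu}\leq\inner{\cHat^{t'}}{\mu^c}=\sum_{h=1}^H \cHat^{t'}(x_h^{t'},a_h^{t'})\,\mu^c(x_h^{t'},a_h^{t'})$, and since $\mu^{t'}(x_h^{t'},a_h^{t'})\geq\tfrac12\mu^c(x_h^{t'},a_h^{t'})$, each of the $H$ summands is at most $|u_h^{t'}|\cdot\mu^c/\mu^{t'}\leq 2$, giving $\inner{\cHat^{t'}}{\mu^c}\leq 2H$. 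Hence the comparator term is at most $2R+2H$. Plugging both bounds into \cref{eq:phase-reg-efg} and using $\alpha^k\leq\tfrac12$ yields $\RHat^k(\mu)\leq\alpha^k R+(1-\alpha^k)(2R+2H)\leq 2R+2H$ for every $\mu\in\Tcal$, almost surely. Finally, for the special comparator the whole comparator term vanishes ($\mu=\mu^c$), so $\RHat^k(\mu^c)=\alpha^k\sum_t\inner{\cHat^t}{\muHat^t-\mu^c}\leq \alpha^k R=(2^{k-1}/R)R=2^{k-1}$, which completes the proof.
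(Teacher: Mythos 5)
Your proposal is correct and follows essentially the same route as the paper: the same decomposition of $\RHat^k$ into the OMD term and the comparator term, the same use of $\alpha^k\leq 1/2$ to bound $\cHat^t\leq 2/\delta$, the same treeplex OMD lemma with the unbalanced dilated KL (penalty $X\log(A)/\eta$ plus a stability term of order $\eta H^2 L^2 T$), and the same handling of the final round of the phase to get the $+2H$. The only part left as a sketch is the detailed stability analysis of dilated OMD, which the paper carries out in its Lemma on bandit OMD with bounded surrogate costs exactly as you describe.
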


\begin{proof}
    WLOG suppose that $R=2^r$ is a power of $2$, else we can run the algorithm for $T$ such that $R$ is the next largest power of two and pay a constant factor in the regret. For the first term in \cref{eq:phase-reg-efg}, we analyze unbalanced OMD to bound $\sum_{t=\start_{k}}^{\start_{k+1}-1} \inner{\cHat^t}{\muHat^t-\mu}$ almost surely, making use of the fact that $\cHat^t$ is bounded. Recall 
    \begin{align*}
        \cHat^t(x_{h},a) = \frac{\indicator{(x_{h}^t,a_{h}^t)=(x_{h},a)}u^t_h}{\mu^t(x_{h},a)} \leq \frac{1}{\mu^t(x_{h},a)}.
    \end{align*}
    Now since $R=2^r$ is a power of $2$, we have $\alpha^k = 2^{k-1}/R \leq 2^{\log_2(R)-1}/R=1/2$, so 
    \begin{align*}
        \cHat^t(x_h,a) \leq \frac{1}{\mu^t(x_{h},a)}=\frac{1}{\alpha\mu^t(x_{h},a)+(1-\alpha)\mu^c(x_{h},a)}\leq \frac{1}{\frac{1}{2}\mu^c(x_{h},a)} \leq\frac{2}{\delta}.
    \end{align*}
    Moreover, $\cHat^t$ is zero outside the visited $((x_h^t,a_h^t))_h$. Thus, by \cref{lem:bandit-omd-bounded}, for the first term in \cref{eq:phase-reg-efg} almost surely
    \begin{align}
        \sum_{t=\start_{k}}^{\start_{k+1}-1} \inner{\cHat^t}{\muHat^t-\mu} \leq \frac{X\log(A)}{\eta} + \frac{4 \eta TH(H+1)}{\delta^2} \leq \delta^{-1}\sqrt{8XH^2\log(A) T} \leq R. \label{eq:omd-regret1-efg}
    \end{align}

    \noindent For the second term in \cref{eq:phase-reg-efg}, note that since the if may only hold at $t':=\start_{k+1}-1$,
    \begin{align}
        \sum_{t=\start_{k}}^{\start_{k+1}-1} \inner{\cHat^t}{\mu^c-\mu} \leq 2R+\inner{\cHat^{t'}}{\mu^c}\leq 2R+2H. \label{eq:while-cond1-efg}
    \end{align}

    \noindent Linearly combining \cref{eq:omd-regret1-efg,eq:while-cond1-efg},
    \begin{align*}
        \RHat^k(\mu) = \alpha^k \sum_{t=\start_{k}}^{\start_{k+1}-1} \inner{\cHat^t}{\muHat^t-\mu} + (1-\alpha^k) \sum_{t=\start_k}^{\start_{k+1}-1} \inner{\cHat^t}{\mu^c-\mu} \leq 2R +2H
    \end{align*}
    for any $\mu$, and for the special comparator $\mu^c$ we have by \cref{eq:omd-regret1-efg}
    \begin{align*}
        R^k(\mu^c)= \alpha^k \sum_{t=\start_{k}}^{\start_{k+1}-1} \inner{\cHat^t}{\muHat^t-\mu^c} + (1-\alpha^k) \sum_{t=\start_{k}}^{\start_{k+1}-1} \inner{\cHat^t}{\mu^c-\mu^c} \leq (2^{k-1}/R)R = 2^{k-1}. 
    \end{align*}
\end{proof}

\noindent Now suppose the algorithm exits a phase $k$. The following result mimics \cref{lemma:exit-nfg} for the case of EFGs, and we resort to essentially the same proof.
\begin{restatable}[Exiting a phase]{lemma}{lemmaExitEfg} \label{lemma:exit-efg}
    Let $k$ be such that $\alpha^k<1$ and suppose \cref{algo:phased-aggression-efg-bandit} exits phase $k$ at time step $\start_{k+1}-1$. Then almost surely $\RHat^k(\mu^c)\leq -2^{k-1}$.
\end{restatable}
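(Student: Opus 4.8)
The plan is to transcribe the proof of the normal-form analog \cref{lemma:exit-nfg} to the treeplex setting, invoking the treeplex OMD bound \cref{eq:omd-regret1-efg} (established inside the proof of \cref{lemma:during-efg}) in place of its simplex counterpart. The whole argument rests on two facts: the phase-regret decomposition \cref{eq:phase-reg-efg}, and the uniform-in-$\mu$ OMD guarantee over $\Tcal$.

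First I would unpack the exit condition. Since \cref{algo:phased-aggression-efg-bandit} exits phase $k$ at $t':=\start_{k+1}-1$, the if-condition in \cref{line:new-phase} must hold at $t'$, i.e.
\[
    \max_{\mu\in\Tcal}\sum_{j=\start_k}^{t'} \inner{\cHat^j}{\mu^c-\mu} > 2R.
\]
Letting $\mu^\star\in\Tcal$ be a maximizer, this rearranges to $\sum_{j=\start_k}^{t'} \inner{\cHat^j}{\mu^\star-\mu^c} < -2R$.

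Next I would apply the decomposition \cref{eq:phase-reg-efg} at the special comparator $\mu=\mu^c$. The second term vanishes since $\inner{\cHat^t}{\mu^c-\mu^c}=0$, leaving $\RHat^k(\mu^c) = \alpha^k \sum_{t=\start_k}^{t'}\inner{\cHat^t}{\muHat^t-\mu^c}$. Inserting $\mu^\star$,
\[
    \RHat^k(\mu^c) = \alpha^k\sum_{t=\start_k}^{t'}\inner{\cHat^t}{\muHat^t-\mu^\star} + \alpha^k\sum_{t=\start_k}^{t'}\inner{\cHat^t}{\mu^\star-\mu^c}.
\]
Because we are in a phase with $\alpha^k<1$, the unbalanced OMD bound \cref{eq:omd-regret1-efg} — derived within \cref{lemma:during-efg} under exactly this hypothesis via the $\delta$-clipping of the importance-weighted costs — applies uniformly over $\mu\in\Tcal$ and bounds the first sum by $R$; the second sum is $<-2R$ by the exit condition. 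Hence $\RHat^k(\mu^c) \leq \alpha^k R + \alpha^k(-2R) = -\alpha^k R = -2^{k-1}$, using $\alpha^k = 2^{k-1}/R$ in the non-saturated regime.

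This is a direct transcription of \cref{lemma:exit-nfg}, so I expect no genuinely new difficulty at this step; the main obstacle has already been discharged elsewhere. The only point worth verifying is that \cref{eq:omd-regret1-efg} may legitimately be instantiated at the comparator $\mu^\star$, which is an arbitrary point of $\Tcal$ rather than a strategy fixed in advance. This is fine precisely because the dilated-KL OMD guarantee holds uniformly over the treeplex, independently of the realized iterates. All the substantive work — controlling importance-weighted OMD over $\Tcal$ with clipped costs and the dilated divergence — lives in \cref{lemma:during-efg}, which this lemma merely reuses.
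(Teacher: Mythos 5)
Your proposal is correct and matches the paper's proof essentially line for line: both decompose $\RHat^k(\mu^c)$ through the maximizer $\mu^\star$ of the exit condition, bound the OMD term by $R$ via \cref{eq:omd-regret1-efg}, and combine with $\alpha^k = 2^{k-1}/R$. Your remark that the pathwise OMD bound holds uniformly over $\Tcal$ (so instantiating it at the data-dependent $\mu^\star$ is legitimate) is the right justification and is implicit in the paper as well.
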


\begin{proof}
    The if condition implies $\max_{\mu\in\Tcal}\sum_{t = \start_{k}}^{\start_{k+1}-1} \inner{\cHat^t}{\mu^c - \mu} > 2 R$, so when we let $\mu^{\star}$ be a maximizer, we find
    \begin{align*}
        \sum_{t=\start_{k}}^{\start_{k+1}-1} \inner{\cHat^t}{\mu^t-\mu^c} =& \alpha^k \sum_{t=\start_{k}}^{\start_{k+1}-1} \inner{\cHat^t}{\muHat^t-\mu^c} \nonumber\\
        =& \alpha^k \sum_{t=\start_{k}}^{\start_{k+1}-1} \inner{\cHat^t}{\muHat^t-\mu^{\star}} +\alpha^k \sum_{t=\start_{k}}^{\start_{k+1}-1} \inner{\cHat^t}{\mu^{\star}-\mu^c} \nonumber\\
        \leq& \alpha^k R + \alpha^k(-2R)\nonumber\\
        =& -2^{k-1},
    \end{align*}
    using \cref{eq:omd-regret1-efg} in the last inequality.
\end{proof}

\subsection{Auxiliary Lemmas: OMD on the EFG Tree}

\paragraph{Unbalanced OMD Lemmas.}

\begin{lemma}[Bandit OMD with bounded surrogate costs] \label{lem:bandit-omd-bounded}
    Let $\eta >0$, and $L>0$. Let $(\cHat^t)_t$ be cost functions such that for all $t$, $0\leq \cHat^t(x_h,a)\leq L$ (for all $x_h$, $a$), and moreover $\cHat^t(x_h,a) = 0$ if $(x_h,a)\neq(x_h^t,a_h^t)$, where $x_h^t$, $a_h^t$ are arbitrary. Set $\muHat^1(x_h,a) = 1/A^h$ and consider the scheme 
    \begin{align*}
        \muHat^{t+1} =& \arg\min_{\mu \in \Tcal} \inner{\mu}{\cHat^t} + \frac{1}{\eta} D(\mu || \muHat^t)
    \end{align*}
    for $t\leq T'$. Then we have for all $\muHat\in\Tcal$
    \begin{align*}
        \sum_{t=1}^{T'} \inner{\muHat^t-\muHat}{\cHat^t} \leq \frac{X\log(A)}{\eta} + \eta H(H+1)L^2 T'.
    \end{align*}
\end{lemma}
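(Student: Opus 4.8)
The plan is to establish the standard online mirror descent (OMD) regret bound for the unbalanced dilated KL regularizer $D$, adapting the analysis of \citet{kozuno2021model} to our bounded, trajectory-supported cost estimators. Writing the OMD update in the ``penalty $+$ stability'' form, for every comparator $\muHat \in \Tcal$ one has
\begin{align*}
    \sum_{t=1}^{T'} \inner{\muHat^t - \muHat}{\cHat^t} \leq \frac{D(\muHat || \muHat^1)}{\eta} + \sum_{t=1}^{T'} \round{\inner{\cHat^t}{\muHat^t - \muHat^{t+1}} - \tfrac{1}{\eta}D(\muHat^{t+1} || \muHat^t)},
\end{align*}
which follows from the three-point identity for the Bregman divergence generated by the dilated entropy, exactly as in the simplex proof of \cref{lem:bandit-omd-bounded-nfg} but carried out over the treeplex. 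It then remains to bound the penalty term by $X\log(A)$ and the stability sum by $\eta H(H+1) L^2 T'$.

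For the penalty term, I would use that $\muHat^1$ is the uniform policy at every infoset, i.e. $\pi_{\muHat^1}(a|x) = 1/A$. Substituting into the definition of $D$ and writing $q_{\muHat}(x) := \sum_a \muHat(x,a) \leq 1$ for the reach weight of infoset $x$ gives
\begin{align*}
    D(\muHat || \muHat^1) = \sum_{x}q_{\muHat}(x)\round{\log(A) + \sum_a \pi_{\muHat}(a|x)\log\pi_{\muHat}(a|x)} \leq \sum_{x}q_{\muHat}(x)\log(A) \leq X\log(A),
\end{align*}
where the first inequality drops the (nonpositive) negative-entropy terms and the second uses $q_{\muHat}(x)\leq 1$ together with $|\Xcal| = X$.

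The main obstacle is the stability sum, and this is where the depth factor $H(H+1)$ enters. Here I would follow the dilated-entropy local-norm argument of \citet{kozuno2021model}: the per-round stability term is controlled by a reach-weighted squared local norm of $\eta\cHat^t$, but because $D$ is dilated, the effective cost at an infoset $x_h$ propagates the costs of all of its descendants up the tree, so the relevant quantity at stage $h$ is a cost-to-go of magnitude at most $(H-h+1)L$ rather than $L$. Crucially, our estimator $\cHat^t$ is supported on the single realized trajectory (one pair $(x_h^t,a_h^t)$ per stage $h$) and bounded by $L$; substituting this into the local-norm bound and summing the squared cost-to-go contributions over the $H$ stages yields a per-round bound proportional to $\eta L^2 \sum_{h=1}^H (H-h+1) = \eta L^2 H(H+1)/2$, which after summing over the $T'$ rounds gives the stated second term. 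I expect the delicate part to be the bookkeeping of the recursive normalizers $Z_h^t$ from the closed-form update (\cref{app:efg-kl}), where one must verify that trajectory-supportedness keeps each stage's contribution at $\muHat^t(x_h^t,a_h^t)L^2 \leq L^2$ and thereby avoids the naive $1/\delta$-type blowups — precisely the adaptation of \citet{kozuno2021model} to importance weighting that the paper highlights. Combining the penalty and stability bounds yields the claim.
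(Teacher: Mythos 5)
Your proposal is correct and follows essentially the same route as the paper: the three-point/``penalty plus stability'' decomposition of OMD with the dilated KL, the bound $D(\muHat\|\muHat^1)\le X\log(A)$ for the uniform initialization, and the \citet{kozuno2021model}-style recursion over the normalizers $Z_h^t$ for the stability term, where trajectory-supportedness and $\cHat^t\le L$ give a per-round cross-term count of order $\eta L^2\sum_{h}(H-h+1)=O(\eta H^2L^2)$ — exactly the accounting the paper performs (your phrase ``squared cost-to-go'' is slightly loose, but the displayed sum is the correct one and avoids the $O(H^3)$ overcount).
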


\begin{proof}
    By \cref{lem:KL-diff-bandit}, 
    \begin{align*}
        D(\muHat||\muHat^t) - D(\muHat||\muHat^{t+1}) + D(\muHat^t||\muHat^{t+1}) =& -(D(\muHat||\muHat^{t+1})-D(\muHat||\muHat^t)) + (D(\muHat^t||\muHat^{t+1})-D(\muHat^t||\muHat^t))\\
        =& \eta \inner{\muHat^t-\muHat}{\cHat^t}.
    \end{align*}
    Thus (using $D\geq 0$), we have a regret bound of
    \begin{align*}
        \sum_{t=1}^{T'} \inner{\muHat^t-\muHat}{\cHat^t} \leq \frac{1}{\eta} \round{D(\muHat||\muHat^1) + \sum_{t=1}^T D(\muHat^t||\muHat^{t+1})}.
    \end{align*}
     For the first term we easily have $D(\muHat||\muHat^1) \leq X\log(A)$ \citep[Lemma 6]{kozuno2021model}. For the second term, by \cref{lem:KL-diff-bandit}, we have
     \begin{align*}
         D(\muHat^t||\muHat^{t+1}) = D(\muHat^t||\muHat^{t+1}) - D(\muHat^t||\muHat^t)
         \leq \eta\inner{\muHat^t}{\cHat^t} + \log(Z_1^t)= \eta\sum_{h=1}^H\muHat^t(x_h^t,a_h^t)\cHat^t(x_h^t,a_h^t) + \log(Z_1^t),
     \end{align*}
     using that $\cHat^t$ is zero outside $((x_h^t,a_h^t))_h$. By \cref{eq:Z-closed-form} and $\log(1+x)\leq x$,
     \begin{align*}
         \log(Z_1^t) \leq& \sum_{h=1}^H \muHat^t(x_{h}^t,a_{h}^t)\exp\round{-\eta \sum_{h'=1}^{h-1} \cHat^t(x_{h'}^t,a_{h'}^t)} \round{\exp\round{-\eta\cHat^t(x_{h}^t,a_{h}^t)} - 1} \\
         \leq& \sum_{h=1}^H \muHat^t(x_{h}^t,a_{h}^t)\exp\round{-\eta \sum_{h'=1}^{h-1} \cHat^t(x_{h'}^t,a_{h'}^t)} \round{-\eta\cHat^t(x_{h}^t,a_{h}^t)+\eta^2\cHat^t(x_{h}^t,a_{h}^t)^2},
     \end{align*}
     where we used $\exp(-y)\leq 1-y+y^2$ for $y\geq 0$. We thus find, using $\cHat^t\geq0$ throughout,
     \begin{align*}
         D(\muHat^t||\muHat^{t+1})\leq& \eta\sum_{h=1}^H\muHat^t(x_h^t,a_h^t)\cHat^t(x_h^t,a_h^t) + \log(Z_1^t)\\
         \leq& \eta\sum_{h=1}^H\muHat^t(x_h^t,a_h^t)\cHat^t(x_h^t,a_h^t) \\
         &+\sum_{h=1}^H \muHat^t(x_{h}^t,a_{h}^t)\exp\round{-\eta \sum_{h'=1}^{h-1} \cHat^t(x_{h'}^t,a_{h'}^t)} \round{-\eta\cHat^t(x_{h}^t,a_{h}^t)+\eta^2\cHat^t(x_{h}^t,a_{h}^t)^2}\\
         =& \eta\sum_{h=1}^H\muHat^t(x_h^t,a_h^t)\cHat^t_h(x_h^t,a_h^t)\round{1-\exp\round{-\eta \sum_{h'=1}^{h-1} \cHat^t(x_{h'}^t,a_{h'}^t)}} \\
         &+ \eta^2\sum_{h=1}^H \muHat^t(x_{h}^t,a_{h}^t)\exp\round{-\eta \sum_{h'=1}^{h-1} \cHat^t(x_{h'}^t,a_{h'}^t)} \cHat^t(x_{h}^t,a_{h}^t)^2\\
         \leq& \eta\sum_{h=1}^H\muHat^t(x_h^t,a_h^t)\cHat^t(x_h^t,a_h^t)\round{1-\exp\round{-\eta \sum_{h'=1}^{h-1} \cHat^t(x_{h'}^t,a_{h'}^t)}} \\
         &+ \eta^2\sum_{h=1}^H \muHat^t(x_{h}^t,a_{h}^t) \cHat^t(x_{h}^t,a_{h}^t)^2\\
         \leq& \eta\sum_{h=1}^H\muHat^t(x_h^t,a_h^t)\cHat^t(x_h^t,a_h^t)\round{\eta \sum_{h'=1}^{h-1} \cHat^t(x_{h'}^t,a_{h'}^t)} + \eta^2\sum_{h=1}^H \muHat^t(x_{h}^t,a_{h}^t) \cHat^t(x_{h}^t,a_{h}^t)^2,
     \end{align*}
     where we used $1-\exp(-x)\leq x$ in the last step. Finally, using the bound on the cost functions and the fact that all $\muHat^t(x_h^t,a_h^t) \leq 1$, we find
     \begin{align*}
         D(\muHat^t||\muHat^{t+1})\leq& \eta^2 H^2L^2 + \eta^2H L^2\leq \eta^2H(H+1) L^2.
     \end{align*}
     Summing over $t$ concludes the proof.
\end{proof}

\noindent In the setup of \cref{lem:bandit-omd-bounded}, let $\piHat^t\in\Pi$ be the policy corresponding to $\muHat^t$ and recall (\cref{app:efg-background})
\begin{align*}
    Z_{H+1}^t =& 1,\\
    Z_h^t =& \sum_{a_h} \piHat^t(a_h|x_h^t) \exp\round{\indicator{a_h^t=a_h}(-\eta\cHat^t(x_h^t,a_h)+\log(Z_{h+1}^t))}\\
    =& 1-\piHat^t(a_h^t|x_h^t)+\piHat^t(a_h^t|x_h^t)\exp\round{-\eta\cHat^t(x_h^t,a_h^t)+\log(Z^t_{h+1})},
\end{align*}
\noindent The following lemma is a slight generalization of \citet{kozuno2021model}. Indeed, the proof only uses that $\cHat^t$ is zero outside of the visited $((x_h^t,a_h^t))_h$, not whether we normalize by $\muHat^t$ or $\mu^t$ or from which policy the trajectory $(x_h^t,a_h^t)_h$ is sampled from. The same holds for the following closed form of $Z_1^t$ \citep[c.f. Lemma 6]{kozuno2021model}:
\begin{align}
    Z_1^t = 1 + \sum_{h'=1}^H \muHat^t(x_{h'}^t,a_{h'}^t)\exp\round{-\eta \sum_{h''=1}^{h'-1} \cHat^t(x_{h''}^t,a_{h''}^t)} \round{\exp\round{-\eta\cHat^t(x_{h'}^t,a_{h'}^t)} - 1}. \label{eq:Z-closed-form}
\end{align}

\begin{lemma}[\citet{kozuno2021model}, Lemma 7]\label{lem:KL-diff-bandit}
    In the setup of \cref{lem:bandit-omd-bounded}, we have 
    \begin{align*}
        D(\mu||\muHat^{t+1})-D(\mu||\muHat^t)=\eta\inner{\mu}{\cHat^t}+\log(Z_1^t)
    \end{align*}
    a.s. for all $t\leq T'$, $\mu\in\Tcal$.
\end{lemma}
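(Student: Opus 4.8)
The plan is to prove the identity by direct computation, exploiting the closed form of the unbalanced dilated-KL OMD update (recorded just before the lemma and in \cref{app:efg-background}) to reduce everything to a telescoping sum over the single realized trajectory $(x_h^t,a_h^t)_{h=1}^H$.

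First I would expand the two dilated divergences from their definition $D(\mu\|\mu') = \sum_{x\in\Xcal,\,a\in\Acal}\mu(x,a)\log(\pi_{\mu}(a|x)/\pi_{\mu'}(a|x))$. Since the numerator $\pi_\mu$ is shared by both terms, the difference collapses to
\[
  D(\mu\|\muHat^{t+1}) - D(\mu\|\muHat^t) = \sum_{x,a}\mu(x,a)\log\frac{\piHat^{t}(a|x)}{\piHat^{t+1}(a|x)}.
\]
Next I would substitute the closed-form update. The key structural fact is that $\piHat^{t+1}(\cdot|x)=\piHat^t(\cdot|x)$ for every infoset $x$ off the realized trajectory, so all those terms vanish and only the $H$ visited infosets $x_1^t,\dots,x_H^t$ survive. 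On each of these the exponential update gives
\[
  \log\frac{\piHat^{t}(a|x_h^t)}{\piHat^{t+1}(a|x_h^t)} = \indicator{a=a_h^t}\round{\eta\cHat^t(x_h^t,a) - \log Z_{h+1}^t} + \log Z_h^t .
\]

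Then I would split the surviving sum into a cost part and a normalizer part. For the cost part, the indicator together with the fact (from \cref{lem:bandit-omd-bounded}) that $\cHat^t$ is supported only on the visited pairs reduces $\sum_h \mu(x_h^t,a_h^t)\,\eta\cHat^t(x_h^t,a_h^t)$ to exactly $\eta\inner{\mu}{\cHat^t}$. For the normalizer part, I would collect, for each stage $h$, the terms $-\mu(x_h^t,a_h^t)\log Z_{h+1}^t$ and $\log Z_h^t\sum_a \mu(x_h^t,a)$; the treeplex flow constraint $\sum_a \mu(x_h^t,a)=\mu(x_{h-1}^t,a_{h-1}^t)$ (\cref{d:treeplex}) rewrites the latter in terms of the parent sequence, so that $\log Z_h^t$ receives the common coefficient $\mu(x_{h-1}^t,a_{h-1}^t)$ from stage $h$ and $-\mu(x_{h-1}^t,a_{h-1}^t)$ (after reindexing) from stage $h-1$, and the sum telescopes. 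With the boundary conventions $\mu(x_0,a_0)=1$ and $Z_{H+1}^t=1$, everything collapses to $\log Z_1^t$, which is the claimed identity.

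The only genuinely delicate step --- and hence the main obstacle --- is the telescoping of the per-stage normalizers $\log Z_h^t$: one must apply the flow constraint at exactly the right place so that the coefficient of each $\log Z_h^t$ produced at stage $h$ cancels the one produced at stage $h-1$, and then handle the two surviving boundary terms. I would stress, following the remark preceding the lemma, that the computation never uses which policy generated the trajectory nor whether we normalize by $\muHat^t$ or $\mu^t$ --- it uses only that $\cHat^t$ vanishes off $((x_h^t,a_h^t))_h$ together with the closed form \cref{eq:Z-closed-form} --- which is precisely what turns \citet[Lemma 7]{kozuno2021model} into the slightly more general statement needed here.
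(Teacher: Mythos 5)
Your computation is correct: expanding the dilated KL difference, using that the update leaves all off-trajectory infosets unchanged, and telescoping the normalizers $\log Z_h^t$ via the treeplex flow constraint with the boundary conventions $\mu(x_0,a_0)=1$ and $Z_{H+1}^t=1$ yields exactly $\eta\inner{\mu}{\cHat^t}+\log(Z_1^t)$. The paper does not prove this lemma itself but defers to \citet[Lemma 7]{kozuno2021model} with the remark that the argument only uses the support of $\cHat^t$ and the closed-form update; your derivation is precisely that argument spelled out, so it matches the intended proof rather than taking a different route.
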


\paragraph{Balanced OMD Lemmas.}

\noindent Recall the definition of $c^t$ from \cref{eq:def-loss}, for which $\cHat^t$ is an unbiased estimator. Again, recall that we WLOG replaced assume $u(s,a,b)\in[0,1]$ (by rescaling via $(1+u(s,a,b))/2$) for simplicity, without changing the regret bound. 

\begin{lemma} \label{lem:regret-to-init}
    Let $\tau >0$. Set $\muHat^1(x_h,a) = 1/A^h$ and with costs from \cref{eq:def-loss} for Protocol \ref{prot:EFGs-bandit} consider the scheme 
    \begin{align*}
        \cHat^t(x_h,a) =& \frac{\indicator{(x_h,a)=(x_h^t,a_h^t)}u_h^t}{\muHat^t(x_h,a)},\\
        \muHat^{t+1} =& \arg\min_{\mu \in \Tcal} \round{\inner{\mu}{\cHat^t} + \frac{1}{\tau} \Dbal(\mu || \muHat^t)}
    \end{align*}
    for $t\leq T'$. Then for all $\muHat\in\Tcal$
    \begin{align*}
        \exptn\rectangular{\sum_{t=1}^{T'} \inner{\muHat^t-\muHat}{c^t}} \leq \frac{\tau}{2} H^3T' + \frac{1}{\tau} \Dbal(\mu || \mu^1).
    \end{align*}
\end{lemma}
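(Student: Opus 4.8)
The plan is to mirror the structure of the unbalanced analysis (\cref{lem:bandit-omd-bounded,lem:KL-diff-bandit}), replacing the dilated KL $D$ by the balanced version $\Dbal$ and tracking the balanced weights $\mu^{h,\text{bal}}$ throughout. Since $\Dbal$ is the Bregman divergence generated by the balanced dilated entropy, the OMD update admits the same three-point decomposition, so first I would establish the balanced analogue of \cref{lem:KL-diff-bandit}, namely
\[
    \Dbal(\mu||\muHat^{t+1}) - \Dbal(\mu||\muHat^t) = \tau\inner{\mu}{\cHat^t} + \log(Z_1^t),
\]
where $Z_1^t$ is the normalizer appearing in the closed-form balanced update of \cref{app:efg-kl}. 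This follows by the same telescoping computation along the infosets of the sampled trajectory, using that $\cHat^t$ is supported only on $((x_h^t,a_h^t))_h$; the balanced weights $\mu^{h,\text{bal}}(x_h^t,a_h^t)$ merely rescale each local term without altering the algebra. Evaluating the identity at $\mu=\muHat^t$ and subtracting, exactly as in the proof of \cref{lem:bandit-omd-bounded}, yields
\[
    \tau\inner{\muHat^t-\mu}{\cHat^t} = \Dbal(\mu||\muHat^t) - \Dbal(\mu||\muHat^{t+1}) + \Dbal(\muHat^t||\muHat^{t+1}),
\]
and summing over $t$ together with $\Dbal\geq0$ gives the raw bound $\sum_{t=1}^{T'}\inner{\muHat^t-\mu}{\cHat^t} \leq \frac{1}{\tau}\Dbal(\mu||\muHat^1) + \frac{1}{\tau}\sum_{t=1}^{T'}\Dbal(\muHat^t||\muHat^{t+1})$.

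Second, I would pass to expectations and convert estimated costs to true costs. Since $\muHat^t$ and the fixed comparator $\mu$ are $\Fcal_{t-1}$-measurable and $\exptn[\cHat^t\mid\Fcal_{t-1}]=c^t$ (unbiasedness, as recorded at the start of \cref{app:efg-upper}), we have $\exptn[\inner{\muHat^t-\mu}{\cHat^t}] = \exptn[\inner{\muHat^t-\mu}{c^t}]$. Thus it remains to control the expected stability sum, and the target reduces to the per-round estimate $\exptn[\Dbal(\muHat^t||\muHat^{t+1})] \leq \frac{\tau^2}{2}H^3$, which then gives $\frac{1}{\tau}\sum_t \exptn[\Dbal(\muHat^t||\muHat^{t+1})] \leq \frac{\tau}{2}H^3T'$.

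Third, and this is where the balanced construction is essential, I would bound the stability term. Using the closed form of the balanced update and the inequalities $\log(1+x)\leq x$ and $\exp(-y)\leq 1-y+y^2$ for $y\geq0$ (as in \cref{lem:bandit-omd-bounded}), each local log-ratio contributes a second-order term proportional to $\tau^2(\mu^{h,\text{bal}}(x_h^t,a_h^t))^2(\cHat^t(x_h^t,a_h^t))^2$; weighting by the factor $\muHat^t(x_h^t,a_h^t)/\mu^{h,\text{bal}}(x_h^t,a_h^t)$ from $\Dbal$ and summing over the $H$ visited infosets leaves $\tfrac{\tau^2}{2}\sum_h \muHat^t(x_h^t,a_h^t)\,\mu^{h,\text{bal}}(x_h^t,a_h^t)\,(\cHat^t(x_h^t,a_h^t))^2$. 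Taking the conditional expectation over the trajectory sampled from $\muHat^t$, the $1/\muHat^t(x_h,a)^2$ in $(\cHat^t)^2$ cancels against the visitation probability $\muHat^t(x_h,a)\rho^t(x_h)$ of the pair $(x_h,a)$ (with $\rho^t(x_h)$ the opponent-and-chance reach), leaving $\mu^{h,\text{bal}}(x_h,a)\rho^t(x_h)$ summed over level $h$. Here the defining property of the balanced exploration policy — that $\sum_{x_h,a}\mu^{h,\text{bal}}(x_h,a)$ collapses level by level and is bounded independently of $X$ — produces the $H^3$ factor rather than a factor of $X$; this is precisely the content imported from \citet[Lemma C.7]{bai2022near}.

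The main obstacle is this last step: the expected-stability bound $\exptn[\Dbal(\muHat^t||\muHat^{t+1})]\leq\frac{\tau^2}{2}H^3$. The subtlety is that the importance weights in $\cHat^t$ are taken with respect to $\muHat^t$, whereas the divergence is reweighted by $\mu^{h,\text{bal}}$, so the cancellation that tames the variance is not automatic and must be arranged exactly by the balanced weights — the naive unbalanced divergence would leave a residual $1/\muHat^t(x_h,a)$ and hence a prohibitive $\sqrt{X}$ factor. Once this per-round bound is in hand, combining it with the regret identity and the unbiasedness argument closes the proof, while the initial term $\frac{1}{\tau}\Dbal(\mu||\mu^1)$ is carried through unchanged.
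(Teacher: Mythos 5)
Your proposal is correct and follows essentially the same route as the paper: telescope the balanced Bregman identity $\Dbal(\mu||\muHat^{t+1})-\Dbal(\mu||\muHat^t)=\tau\inner{\mu}{\cHat^t}+\tau\Xi_1^t$ (imported from \citet{bai2022near}), convert $\cHat^t$ to $c^t$ by unbiasedness, and control the per-round stability term --- which the paper packages as $\tau\inner{\muHat^t}{\cHat^t}+\tau\Xi_1^t$ rather than $\Dbal(\muHat^t||\muHat^{t+1})$, an equivalent reformulation --- via an expansion of the normalizer whose expectation the balanced weights render $X$-free. The only loose point in your sketch is the bookkeeping of the $H^3$ factor: it arises from the nested structure of $Z_1^t$, which produces a double sum over levels $h\le h'$ times a leading $\tau H/2$ (as in the paper's auxiliary lemma adapted from Lemma D.11 of \citet{bai2022near}), not from a single sum of $H$ local second-order terms, and the relevant property is distinct from the $\Dbal(\mu||\mu^1)\le XA\log(A)$ bound of their Lemma C.7 that you cite.
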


\begin{proof}
    By \cref{lem:KL-to-linear}, we have 
    \begin{align*}
        \frac{1}{\tau} \round{\Dbal(\muHat || \muHat^{t+1}) - \Dbal(\muHat || \muHat^t)} = \inner{\muHat}{\cHat^t} + \Xi_1^t.
    \end{align*}
    Thus,
    \begin{align*}
        \frac{1}{\tau} \exptn\rectangular{\Dbal(\muHat || \muHat^{T'}) - \Dbal(\muHat || \muHat^1)} =& \exptn\rectangular{\sum_{t=1}^{T'} \inner{\muHat}{\cHat^t} + \sum_{t=1}^{T'} \Xi_1^t}\\
        \leq& \exptn\rectangular{\sum_{t=1}^{T'} \inner{\muHat-\muHat^t}{\cHat^t}} + \frac{\tau H^3}{2} T'\tag{by \cref{lem:sum-xi}}\\
        =& \exptn\rectangular{\sum_{t=1}^{T'} \inner{\muHat-\muHat^t}{c^t}} + \frac{\tau H^3}{2} T',
    \end{align*}
    as $\exptn\rectangular{\cHat^t(x,a) \mid \mathcal{F}_{t-1}} = c^t(x,a)$. Using $\Dbal \geq 0$, we conclude
    \begin{align*}
        \exptn\rectangular{ \sum_{t=1}^{T'} \inner{\muHat^t-\muHat}{c^t} } \leq& \frac{1}{\tau} \Dbal(\muHat||\muHat^1) + \frac{\tau H^3}{2} T'.
    \end{align*}
\end{proof}

As before, the following lemma from \citet[Lemma D.7]{bai2022near} does not use the specific form of the cost estimates but only the update rules.
\begin{lemma} \label{lem:KL-to-linear}
    In the setup of \cref{lem:regret-to-init}, for all $\mu \in \Tcal$, we have
    \begin{align*}
        \Dbal(\mu||\muHat^{t+1}) - \Dbal(\mu||\muHat^t) = \tau \inner{\mu}{\cHat^t} + \frac{\log(Z^t_1)}{\mu^{\text{bal},1}(x_1^t,a_1^t)}=\tau \inner{\mu}{\cHat^t} + \tau \Xi_1^t.
    \end{align*}
\end{lemma}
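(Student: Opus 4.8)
The plan is to prove this by direct substitution of the closed-form balanced OMD update (recalled in \cref{app:efg-kl}) into the definition of $\Dbal$, followed by a level-by-level telescoping of the log-normalization constants $Z_h^t$. First I would use that $\Dbal$ is linear in the per-infoset log-policy ratios of its first argument, so that
\begin{align*}
\Dbal(\mu||\muHat^{t+1}) - \Dbal(\mu||\muHat^t) = \sum_{h=1}^H \sum_{\substack{x_h\in\Xcal_h,\, a\in\Acal}} \frac{\mu(x_h,a)}{\mu^{\text{bal},h}(x_h,a)} \log\round{\frac{\piHat^t(a|x_h)}{\piHat^{t+1}(a|x_h)}}.
\end{align*}
Since the balanced update only modifies the policy at the visited infosets $x_1^t,\dots,x_H^t$ and leaves $\piHat^{t+1}(\cdot|x_h)=\piHat^t(\cdot|x_h)$ everywhere else, only the terms with $x_h=x_h^t$ survive.

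Next I would substitute the closed form of $\piHat^{t+1}(a|x_h^t)$, which splits each surviving summand into three pieces: (i) a linear cost contribution $\tau\,\mu^{\text{bal},h}(x_h^t,a_h^t)\,\cHat^t(x_h^t,a_h^t)$ carried only by $a=a_h^t$; (ii) a forward term proportional to $\log(Z_{h+1}^t)$, again carried only by $a=a_h^t$; and (iii) the normalization $+\log(Z_h^t)$ carried by \emph{every} $a$. For piece (i), the crucial cancellation is that the balanced weight $\mu^{\text{bal},h}(x_h^t,a_h^t)$ appearing in the update is exactly the denominator of the divergence weight $\mu(x_h^t,a_h^t)/\mu^{\text{bal},h}(x_h^t,a_h^t)$, leaving $\tau\,\mu(x_h^t,a_h^t)\,\cHat^t(x_h^t,a_h^t)$; summing over $h$ and using that $\cHat^t$ is supported on the visited trajectory yields exactly $\tau\inner{\mu}{\cHat^t}$.

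The heart of the argument --- and the step I expect to be the main obstacle --- is reconciling pieces (ii) and (iii) across consecutive levels. Here I would exploit the defining property of the balanced strategy: at its target level the balanced exploration policy plays uniformly, so $\mu^{\text{bal},h+1}(x_{h+1}^t,a)$ is constant in $a$ within the infoset $x_{h+1}^t$. Combined with the treeplex flow constraint $\sum_a \mu(x_{h+1}^t,a) = \mu(x_h^t,a_h^t)$, this makes the total level-$(h+1)$ normalization mass
\begin{align*}
\log(Z_{h+1}^t)\sum_{a} \frac{\mu(x_{h+1}^t,a)}{\mu^{\text{bal},h+1}(x_{h+1}^t,a)} = \frac{\mu(x_h^t,a_h^t)}{\mu^{\text{bal},h+1}(x_{h+1}^t,a_{h+1}^t)}\log(Z_{h+1}^t)
\end{align*}
equal and opposite to the forward term (ii) generated at level $h$, so the two cancel.

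Iterating this cancellation from $h=H$ down to $h=1$ leaves only the level-one normalization term $\frac{1}{\mu^{\text{bal},1}(x_1^t,a_1^t)}\log(Z_1^t)$ (using $\sum_a\mu(x_1^t,a)=1$ at the root and the same constant-in-$a$ property), which by definition equals $\tau\,\Xi_1^t$. The delicate bookkeeping lies in keeping the balanced indices $\mu^{\text{bal},h}$ versus $\mu^{\text{bal},h+1}$ aligned and verifying the constant-in-$a$ property holds at exactly the level where it is invoked; once that identity is in place the telescoping is mechanical. As already remarked, this matches \citet[Lemma D.7]{bai2022near}, whose argument uses only the update rule and not the particular form of the estimator $\cHat^t$.
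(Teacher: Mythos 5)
Your proposal is correct and follows exactly the route the paper relies on: the paper gives no proof of its own here but imports the statement from \citet[Lemma D.7]{bai2022near}, whose argument is precisely your direct substitution of the closed-form balanced update into $\Dbal$ followed by the level-by-level telescoping of the $\log(Z_h^t)$ terms via the flow constraint $\sum_a \mu(x_{h+1}^t,a)=\mu(x_h^t,a_h^t)$ and the constancy of $\mu^{\text{bal},h+1}(x_{h+1}^t,\cdot)$ at its target level. The bookkeeping you flag as delicate is handled correctly, including the termination at $h=H$ via $Z_{H+1}^t=1$.
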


\noindent We introduce some extra notation for convenience: Let $\piHat^t\in\Pi$ be the policy corresponding to $\muHat^t$ and set
\begin{align*}
    \beta_h^t := \tau \mu^{\text{bal},h}(x_h^t,a_h^t), \quad
    \piHat^t_h := \piHat^t(a_h^t|x_h^t), \quad 
    \cHat^t_h := \cHat^t(x_h^t,a_h^t),
\end{align*}
and consider the functions 
\begin{align*}
    \Xi_H^t(\cHat) :=& \Xi_H^t(\cHat_{H}) := \log\round{1-\piHat_H^t + \piHat_H^t\exp(-\beta_H^t\cHat_H)} / \beta_H^t,\\
    \Xi_h^t(\cHat) :=& \Xi_h^t(\cHat_{h:H}) := \log\round{1-\piHat_h^t + \piHat_h^t\exp(\beta_h^t(\Xi_{h+1}^t(\cHat_{h+1:H})-\cHat_h))} / \beta_H^t \quad (h<H),
\end{align*}
and the values
\begin{align*}
    \Xi_h^t :=& \Xi_h^t(\cHat^t) = \frac{1}{\beta_h^t} \log(Z^t_h) = \frac{1}{\beta_h^t} \log\round{ 1-\piHat_h^t + \piHat_h^t\exp(\beta_h^t(\Xi_{h+1}^t - \cHat^t_h))) } \quad (h\in[H])
\end{align*}
for the input $\cHat^t$. The following lemma now lets us bound the remaining term in the proof of \cref{lem:regret-to-init}.

\begin{lemma} \label{lem:sum-xi}
    In the setup of \cref{lem:regret-to-init}, we have 
    \begin{align*}
        \sum_{t=1}^T \exptn\rectangular{\Xi_1^t} \leq -\sum_{t=1}^{T'} \exptn\rectangular{\inner{\muHat^t}{\cHat^t}} + \frac{\tau}{2} H^3 {T'}.
    \end{align*}
\end{lemma}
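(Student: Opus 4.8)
The plan is to derive a pointwise upper bound on $\Xi_1^t$ by unrolling the recursion that defines the $\Xi_h^t$, isolating a first-order term that telescopes exactly into $-\inner{\muHat^t}{\cHat^t}$ and a second-order remainder, and then to control the \emph{expectation} of the remainder by $\tfrac{\tau}{2}H^3$ per round using the balanced weights. First I would record the sign and two one-step estimates. Since the rescaled costs satisfy $\cHat^t\geq 0$ and $Z_{H+1}^t=1$, a downward induction on $h$ through $\beta_h^t\Xi_h^t=\log(1-\piHat_h^t+\piHat_h^t\exp(\beta_h^t(\Xi_{h+1}^t-\cHat_h^t)))$ shows every $\Xi_h^t\leq 0$, so the exponent $y_h:=\beta_h^t(\Xi_{h+1}^t-\cHat_h^t)$ is $\leq 0$. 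On the negative half-line both $\log(1+x)\leq x$ and $e^{y}\leq 1+y+\tfrac12 y^2$ hold with no magnitude restriction, and together with Jensen's inequality $1-p+pe^{y}\geq e^{py}$ they give the two-sided one-step bound
\begin{align*}
    \piHat_h^t\round{\Xi_{h+1}^t-\cHat_h^t}\ \leq\ \Xi_h^t\ \leq\ \piHat_h^t\round{\Xi_{h+1}^t-\cHat_h^t}+\tfrac12\,\piHat_h^t\beta_h^t\round{\Xi_{h+1}^t-\cHat_h^t}^2.
\end{align*}

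Next I would unroll the upper bound from $h=1$ down to the leaves. Writing $\rho_h^t:=\prod_{h'\leq h}\piHat_{h'}^t=\muHat^t(x_h^t,a_h^t)$ for the reach probability along the realized trajectory, the first-order parts telescope so that the coefficient of $\cHat_h^t$ is exactly $-\rho_h^t$; since $\cHat^t$ vanishes off the visited path, $\sum_h\rho_h^t\cHat_h^t=\inner{\muHat^t}{\cHat^t}$, yielding
\begin{align*}
    \Xi_1^t\ \leq\ -\inner{\muHat^t}{\cHat^t}+\tfrac12\sum_{h=1}^H\rho_h^t\beta_h^t\round{\Xi_{h+1}^t-\cHat_h^t}^2.
\end{align*}
Summing over $t$ and taking expectations, the first-order term contributes exactly $-\sum_t\exptn\rectangular{\inner{\muHat^t}{\cHat^t}}$ (and equals $-\sum_t\exptn\rectangular{\inner{\muHat^t}{c^t}}$ by unbiasedness), so the entire task reduces to showing the expected second-order remainder is at most $\tfrac{\tau}{2}H^3T'$.

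The main obstacle is precisely this remainder, and a pointwise bound is hopeless: each $\cHat_h^t$ carries an inverse reach factor $1/\muHat^t(x_h^t,a_h^t)$ that is unbounded, and only after taking expectations does the importance weighting cancel. I would therefore take the expectation level by level via the tower property over the trajectory (which, since $\alpha=1$, is generated by $\muHat^t$ itself). The decisive cancellation is that $\rho_h^t\beta_h^t=\tau\,\muHat^t(x_h^t,a_h^t)\,\mu^{\text{bal},h}(x_h^t,a_h^t)$ carries exactly one factor $\muHat^t(x_h^t,a_h^t)$, which, combined with the reach weight $\Pr[\text{visit }(x_h,a)]=\muHat^t(x_h,a)\cdot P_{\text{env}}(x_h)$ (here $P_{\text{env}}(x_h)$ denotes the probability that chance and Bob reach $x_h$), cancels the two inverse factors in $\round{\cHat_h^t}^2=(u_h^t)^2/\muHat^t(x_h^t,a_h^t)^2$, leaving $\tau\sum_{x_h,a}P_{\text{env}}(x_h)\,\mu^{\text{bal},h}(x_h,a)\,(u_h^t)^2$ with the balanced weight evaluated at the \emph{matching} level $h$.

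The crux, and where I expect the real difficulty, is the cross term involving $\Xi_{h+1}^t$: it must be handled by this same level-matched conditioning rather than by a crude split such as $2(\Xi_{h+1}^t)^2+2(\cHat_h^t)^2$, which would pair the level-$h$ balanced weight against a cost estimator at a deeper level $h'>h$ and blow up (the surviving ratio $\muHat^t(x_h,a)/\muHat^t(x_{h'},a')$ is an unbounded inverse of deep policy probabilities). Concretely, I would run a backward induction bounding the conditional expectation of the remainder suffix, controlling $|\Xi_{h+1}^t|$ through the lower bound above (which gives $-\Xi_h^t\leq-\Xi_{h+1}^t+\cHat_h^t$), and invoke the balanced-weight normalization identity of \citet{bai2022near} — the same property underlying the $\Dbal(\mu\,\|\,\mu^1)\leq XA\log(A)$ estimate used in \cref{eq:exp3-efg} — to conclude $\sum_{x_h,a}P_{\text{env}}(x_h)\,\mu^{\text{bal},h}(x_h,a)=O(1)$, which is what removes the $X$-dependence. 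Each of the $H$ levels then contributes $O(\tau H^2)$, the extra $H^2$ stemming from the squared accumulated suffix $\Xi_{h+1}^t$ of horizon length at most $H$, and summing over levels yields the claimed $\tfrac{\tau}{2}H^3T'$. Ensuring the balanced weight is always paired with the cost estimator at its own depth is the delicate bookkeeping on which the whole bound rests.
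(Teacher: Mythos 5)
Your overall strategy is the right one and matches the paper's: a pointwise second-order expansion of the recursion for $\Xi_h^t$ that isolates the telescoping term $-\inner{\muHat^t}{\cHat^t}$, followed by an expectation argument for the remainder. (The paper simply outsources the pointwise step to \cref{lem:log-to-linear}, i.e.\ Lemma D.11 of \citet{bai2022near}, after which the expectation step is three lines.) Your one-step inequalities, the sign argument $\Xi_h^t\leq 0$, the exact telescoping of the first-order part into $-\inner{\muHat^t}{\cHat^t}$, and the handling of the diagonal term $\rho_h^t\beta_h^t(\cHat_h^t)^2$ are all correct.

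The gap sits exactly where you flag ``the crux'': the contributions to $\tfrac12\rho_h^t\beta_h^t(\Xi_{h+1}^t-\cHat_h^t)^2$ coming from cost estimators at depths $h'>h$. You propose to control $|\Xi_{h+1}^t|$ via the crude telescoped lower bound $-\Xi_{h+1}^t\leq\sum_{h'>h}\cHat_{h'}^t$ and then repair things in expectation by ``level-matched conditioning''. This does not close: each $\cHat_{h'}^t$ can be as large as $1/\muHat^t(x_{h'}^t,a_{h'}^t)$, and after squaring, the visit probability supplies only one factor of $\muHat^t(x_{h'},a_{h'})$ while your prefactor $\rho_h^t\beta_h^t$ supplies $\muHat^t(x_h,a_h)$ at the shallower level; what survives in expectation is the ratio $\muHat^t(x_h,a_h)/\muHat^t(x_{h'},a_{h'})=1/\muHat^t_{h+1:h'}$, which is unbounded --- precisely the blow-up you yourself describe, and no amount of conditioning removes it. The missing idea is to retain the policy products when telescoping the lower bound: $-\Xi_{h+1}^t\leq\sum_{h'=h+1}^H\muHat^t_{h+1:h'}\cHat^t_{h'}$, where each summand equals $u_{h'}^t/\muHat^t(x_h^t,a_h^t)\leq 1/\muHat^t(x_h^t,a_h^t)$, an inverse reach probability at level $h$ that matches the prefactor $\rho_h^t$. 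This yields the pointwise bound $|\Xi_{h+1}^t-\cHat_h^t|\leq H/\muHat^t(x_h^t,a_h^t)$, so one of the two factors in the square cancels $\rho_h^t$ exactly and the remainder becomes \emph{linear} in $\cHat^t$, namely $\tfrac{\tau H}{2}\sum_h\sum_{h'\geq h}\mu^{\text{bal},h}(x_h^t,a_h^t)\,\muHat^t_{h+1:h'}\,\cHat^t_{h'}$; this is exactly \cref{lem:log-to-linear}, after which unbiasedness and the fact that each flow-weighted expected-cost sum is at most $1$ give the $\tfrac{\tau}{2}H^3T'$ bound. Without this refinement your argument does not go through. (A minor additional point: the normalization you need is just that a sequence-form flow paired with environment reach probabilities sums to at most $1$; it is unrelated to the $\Dbal(\mu\,\|\,\mu^1)\leq XA\log(A)$ estimate you cite.)
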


\begin{proof}
    By \cref{lem:log-to-linear} and as $\cHat^t$ is unbiased,
    \begin{align*}
        \sum_{t=1}^{T'} \exptn\rectangular{\Xi_1^t} \leq& -\sum_{t=1}^{T'} \exptn\rectangular{\inner{\muHat^t}{\cHat^t}} + \frac{\tau H}{2} \sum_{t=1}^{T'} \sum_{h=1}^{H} \sum_{h'=h}^H \sum_{x_{h'},a_{h'}} \exptn\rectangular{\mu^{\text{bal},h}_{1:h}(x_{h},a_{h}) \muHat^{t}_{h+1:h'}(x_{h'},a_{h'}) \cHat^t(x_{h'},a_{h'})}\\
        =& -\sum_{t=1}^{T'} \exptn\rectangular{\inner{\muHat^t}{c^t}} + \frac{\tau H}{2} \sum_{t=1}^{T'} \sum_{h=1}^{H} \sum_{h'=h}^H \underbrace{\sum_{x_{h'},a_{h'}} \exptn\rectangular{\mu^{\text{bal},h}(x_{h},a_{h}) \muHat^{t}_{h+1:h'}(x_{h'},a_{h'}) c^t(x_{h'},a_{h'})}}_{\leq 1}\\ 
        \leq& -\sum_{t=1}^{T'} \exptn\rectangular{\inner{\muHat^t}{c^t}} + \frac{\tau H^3}{2} T'. 
    \end{align*}
\end{proof}

\begin{lemma}[\citet{bai2022near}, Lemma D.11]\label{lem:log-to-linear}
    We have
    \begin{align*}
        \Xi^t_1 \leq -\inner{\muHat^t}{\cHat^t} + \frac{\tau H}{2} \sum_{h=1}^{H} \sum_{h'=h}^H \sum_{x_{h'},a_{h'}} \mu^{\text{bal},h}(x_{h'},a_{h'}) \muHat^{t}_{h+1:h'}(x_{h'},a_{h'})\cHat^t(x_{h'},a_{h'}),
    \end{align*}
    where $\muHat^t_{h+1:h'}(x_{h'}^t,a_{h'}^t):=\prod_{h''=h+1}^{h'}\piHat^t(a_{h''}|x_{h''})$ along the unique path $(x_{h''},a_{h''})_{h''}$ leading from step $h+1$ to $(x_{h'}^t,a_{h'}^t)$.
\end{lemma}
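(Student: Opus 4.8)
The plan is to unfold the recursion defining the $\Xi_h^t$, starting from the identity $\beta_h^t\Xi_h^t=\log Z_h^t$ with $Z_h^t=1-\piHat^t_h+\piHat^t_h\exp(\beta_h^t(\Xi_{h+1}^t-\cHat^t_h))$ and the terminal condition $\Xi_{H+1}^t=0$. First I would linearize the logarithm: applying $\log(1+x)\le x$ with $x=\piHat^t_h\round{\exp(\beta_h^t(\Xi_{h+1}^t-\cHat^t_h))-1}$ gives, at every stage $h$,
\[
\beta_h^t\,\Xi_h^t\;\le\;\piHat^t_h\round{\exp\round{\beta_h^t\round{\Xi_{h+1}^t-\cHat^t_h}}-1}.
\]

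Next I would show by backward induction that $\Xi_h^t\le 0$ for all $h$. Indeed, after the WLOG rescaling $u\mapsto(1+u)/2$ the costs satisfy $\cHat^t\ge 0$, so together with $\Xi_{H+1}^t=0$ the exponent $\beta_h^t(\Xi_{h+1}^t-\cHat^t_h)$ is nonpositive and hence $Z_h^t\le 1$, i.e. $\Xi_h^t\le0$. On this nonpositive range I may then apply the second-order estimate $e^{y}-1\le y+\tfrac{y^2}{2}$, valid for $y\le0$, to turn the previous display into
\[
\Xi_h^t\;\le\;\piHat^t_h\round{\Xi_{h+1}^t-\cHat^t_h}+\frac{\beta_h^t\piHat^t_h}{2}\round{\Xi_{h+1}^t-\cHat^t_h}^2.
\]
Unfolding this from $h=1$ down to $h=H$, the linear part telescopes: the accumulated prefactor at stage $h$ is the realized reach $\prod_{h'\le h}\piHat^t(a^t_{h'}|x^t_{h'})=\muHat^t(x^t_h,a^t_h)$, and because $\cHat^t$ is supported on the visited path $((x^t_h,a^t_h))_h$ the linear contributions sum to exactly $-\sum_h\muHat^t(x^t_h,a^t_h)\cHat^t_h=-\inner{\muHat^t}{\cHat^t}$, the leading term of the claim. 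Each quadratic contribution accumulates with prefactor $\muHat^t(x^t_h,a^t_h)$ and, using $\beta_h^t=\tau\mu^{\text{bal},h}(x^t_h,a^t_h)$, leaves the remainder $\sum_h\tfrac{\tau}{2}\mu^{\text{bal},h}(x^t_h,a^t_h)\,\muHat^t(x^t_h,a^t_h)\,(\Xi_{h+1}^t-\cHat^t_h)^2$.

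The crux, and the main obstacle, is to convert this quadratic remainder into the claimed form, which is \emph{linear} in $\cHat^t$ and carries the balanced measure evaluated at the \emph{deeper} stage $h'$. I would expand $\cHat^t_h-\Xi_{h+1}^t$ as the nonnegative telescoping sum $\sum_{h'=h}^H\muHat^t_{h+1:h'}\cHat^t_{h'}$ of at most $H$ terms and then square it; one power of $\cHat^t$ is absorbed against the importance-weighting denominator, since $\cHat^t_{h'}=u^t_{h'}/\muHat^t(x^t_{h'},a^t_{h'})$ with $u^t_{h'}\in[0,1]$, which leaves a single power of $\cHat^t$ and produces the factor $H$ from the number of summands. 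The delicate step is the bookkeeping of the balanced weights: a crude bound leaves $\mu^{\text{bal},h}$ sitting at stage $h$, which is too lossy, so one must instead track the balanced factors level by level through the telescoping sum, using the factorization $\mu^{\text{bal},h}(x_{h'},a_{h'})=\mu^{\text{bal},h}(x_h,a_h)\,A^{-(h'-h)}$ of the balanced exploration policy, so that the measure correctly lands at stage $h'$. This yields precisely $\tfrac{\tau H}{2}\sum_{h}\sum_{h'\ge h}\sum_{x_{h'},a_{h'}}\mu^{\text{bal},h}(x_{h'},a_{h'})\,\muHat^t_{h+1:h'}(x_{h'},a_{h'})\,\cHat^t(x_{h'},a_{h'})$, where the sum may be extended over all $(x_{h'},a_{h'})$ at no cost because $\cHat^t$ vanishes off the realized path. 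This is exactly the argument of \citet[Lemma D.11]{bai2022near}, which carries over to our importance-weighted estimators since its proof uses only the update rule and the support of $\cHat^t$, not the specific form of the estimator.
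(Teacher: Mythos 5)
Your proposal is, at bottom, the same as the paper's: the paper's entire proof of \cref{lem:log-to-linear} is the observation that the argument of \citet[Lemma~D.11]{bai2022near} goes through verbatim because it only uses $\cHat^t\geq 0$, the support of $\cHat^t$ on the realized path, and the bound $\muHat^t(x_h,a_h)\cHat^t(x_h,a_h)\leq 1$ (which survives the importance weighting since $\cHat^t_{h'}=u^t_{h'}/\muHat^t(x^t_{h'},a^t_{h'})$ with $u^t_{h'}\in[0,1]$). You identify exactly these three properties and then defer to the citation, so the logical content matches. The extra value of your writeup is the reconstruction of the cited proof's skeleton (linearizing $\log(1+x)\leq x$, the backward induction giving $\Xi_h^t\leq 0$, the second-order bound $e^y-1\leq y+y^2/2$ for $y\leq 0$, the telescoping of the linear part into $-\inner{\muHat^t}{\cHat^t}$, and the absorption $\muHat^t(x^t_{h'},a^t_{h'})\cHat^t_{h'}\leq 1$), all of which is accurate. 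One caution: in the step you correctly flag as the crux, the mechanism you gesture at does not close on its own. Since $\mu^{\text{bal},h}(x_{h'},a_{h'})=\mu^{\text{bal},h}(x_h,a_h)A^{-(h'-h)}\leq \mu^{\text{bal},h}(x_h,a_h)$, simply substituting this factorization into the crude bound (which naturally produces the balanced weight at stage $h$) moves the inequality in the \emph{wrong} direction; landing the balanced measure at the deeper stage $h'$ requires the weighted Cauchy--Schwarz bookkeeping internal to \citet{bai2022near}, not a direct replacement. This does not invalidate your proof, because like the paper you ultimately rest on the cited lemma, but the sentence claiming the factorization ``yields precisely'' the stated remainder overstates what your sketch actually establishes.
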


\noindent The proof is the same as in \citet{bai2022near}.\footnote{There, in (ii) we still have $\muHat^t(x_h,a_h) \cHat^t(x_h,a_h) \leq 1$. All other properties used in the proof hold for general $\cHat \geq 0$ (in particular Lemma D.9 and D.10, although stated for $\lTilde \in [0,1]^H$).} 

\subsection{Lower Bound} \label{app:efg-lower}

\thmEfgLower*

\begin{proof}
    Consider an $A$-nary tree with $X=\Theta(A^H)$ leaves and where each infoset corresponds to a unique state. As for the transitions, the learner is deterministically sent to a leaf $s=(a_1,\dots,a_A)$ upon playing $a_h$ in each step $h$. Since $\delta=\min_{x,a}\mu^c(x,a)$, there also exists a leaf information set $x=x(s)$ and and an action $a$ such that $\mu^c(x,a)=\delta$. Now consider two environments in which all state-action triples have cost one, except for the cost in leaf $s$, which is either sampling according to the $(+)$ or $(-)$ environment from \cref{thm:lower-nfg}. We are thus effectively simulating a two-armed bandit with comparator $(1-\delta,\delta)$ with the same construction as in the simplex case. The derivation in \cref{thm:lower-nfg} thus concludes the proof. 
\end{proof}

\section{Further Experimental Evaluations} \label{app:further-experiments}

In this section, we provide further details regarding our experimental evaluations in \cref{sec:experiments}.

\noindent \textbf{All vs Exploitable Strategies.} In addition to \cref{sec:experiments}, we compare the performance of Min-Max, OMD and \cref{algo:phased-aggression-efg-bandit} against a couple of other exploitable strategies strategies. We consider the following constant strategies:

\begin{itemize}[leftmargin=*]
    \setlength{\itemsep}{0.3em}
    \setlength{\parskip}{0pt}
    \item \emph{RaiseK}: Bob raises/calls if and only he has a King, and checks/folds otherwise.
    \item \emph{RandMinMax($\alpha$)}: Bob plays a perturbed version of the Min-Max strategy: In every round, with a small probability $\alpha$, he will play the uniform strategy, and otherwise the Min-Max strategy. 
\end{itemize}

In \cref{fig:all-vs-x2}, we present the amount of money that each of Min-Max, OMD, and Algorithm~\ref{algo:phased-aggression-efg-bandit} extract with respect to the aforementioned exploitable strategies. Specifically, \cref{fig:all-vs-x2} reveals the following. 

\begin{figure*}[ht]
    \centering

    \begin{minipage}{1.0\textwidth}
    \begin{minipage}{0.05\textwidth}
        
    \end{minipage}
    \begin{minipage}{0.27\textwidth}
        \centering
        \includegraphics[width=\linewidth]{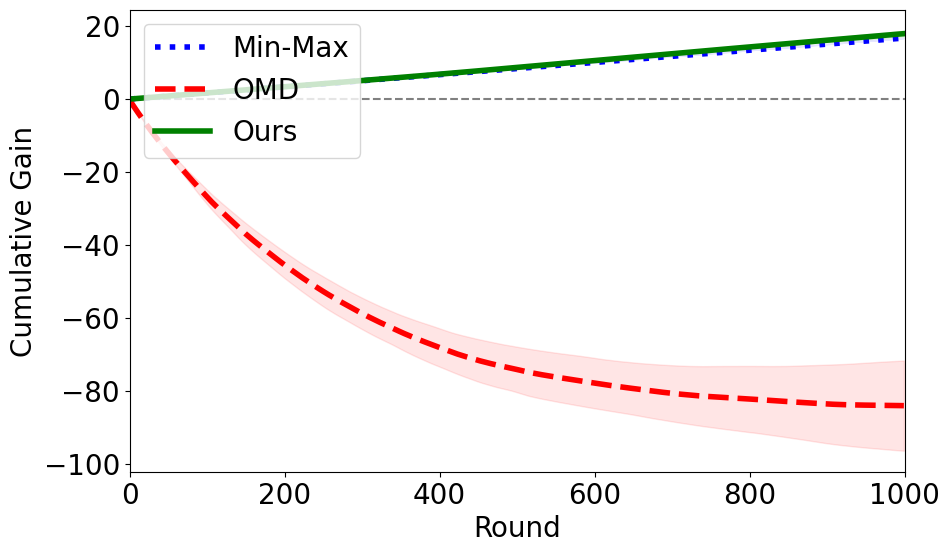}
        \caption*{All vs RandMinMax($0.05$)}
    \end{minipage}
    \begin{minipage}{0.27\textwidth}
        \centering
        \includegraphics[width=\linewidth]{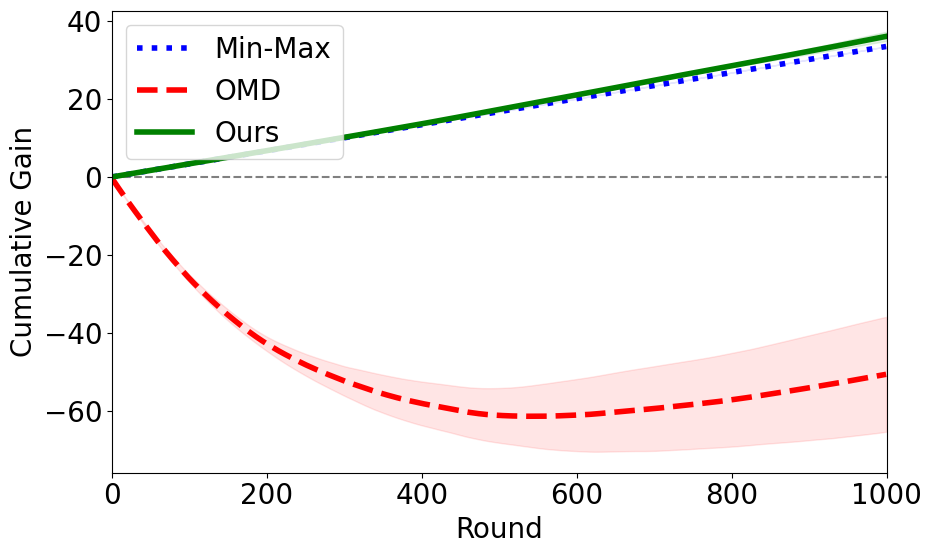}
        \caption*{All vs RandMinMax($0.1$)}
    \end{minipage}
    \begin{minipage}{0.27\textwidth}
        \centering
        \includegraphics[width=\linewidth]{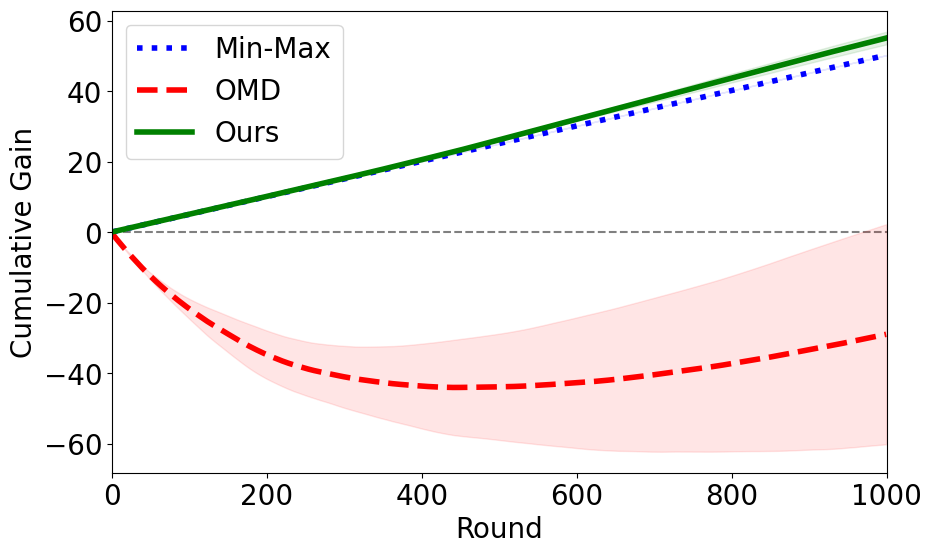}
        \caption*{All vs RandMinMax($0.15$)}
    \end{minipage}
    \begin{minipage}{0.05\textwidth}
        
    \end{minipage}
    \begin{minipage}{1.0\textwidth}
    \centering
    
    $\;$
    \end{minipage}
    \end{minipage}

    \vspace{0.4cm}

    \centering
    \begin{minipage}{1.0\textwidth}
    \begin{minipage}{0.05\textwidth}
        
    \end{minipage}
    \begin{minipage}{0.27\textwidth}
        \centering
        \includegraphics[width=\linewidth]{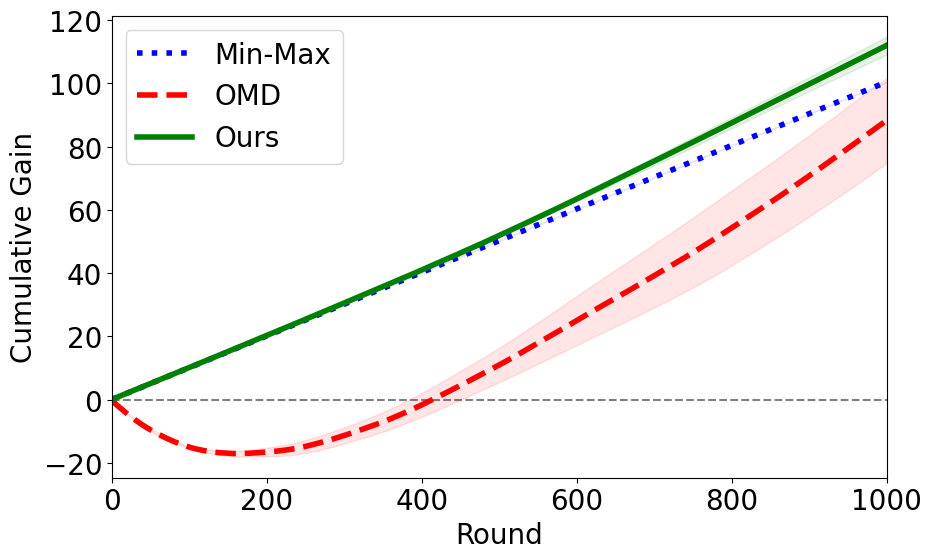}
        \caption*{All vs RandMinMax($0.3$)}
    \end{minipage}
    \begin{minipage}{0.27\textwidth}
        \centering
        \includegraphics[width=\linewidth]{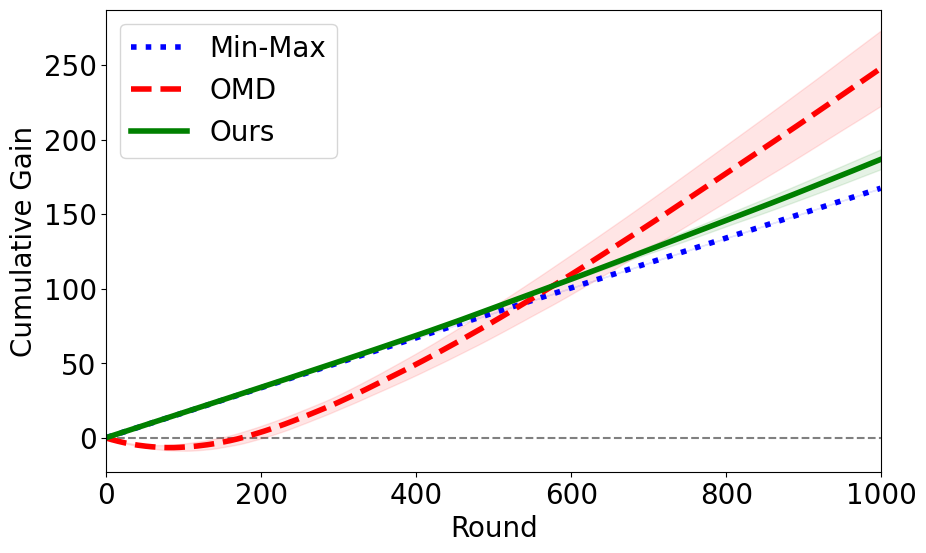}
        \caption*{All vs RandMinMax($0.5$)}
    \end{minipage}
    \begin{minipage}{0.27\textwidth}
        \centering
        \includegraphics[width=\linewidth]{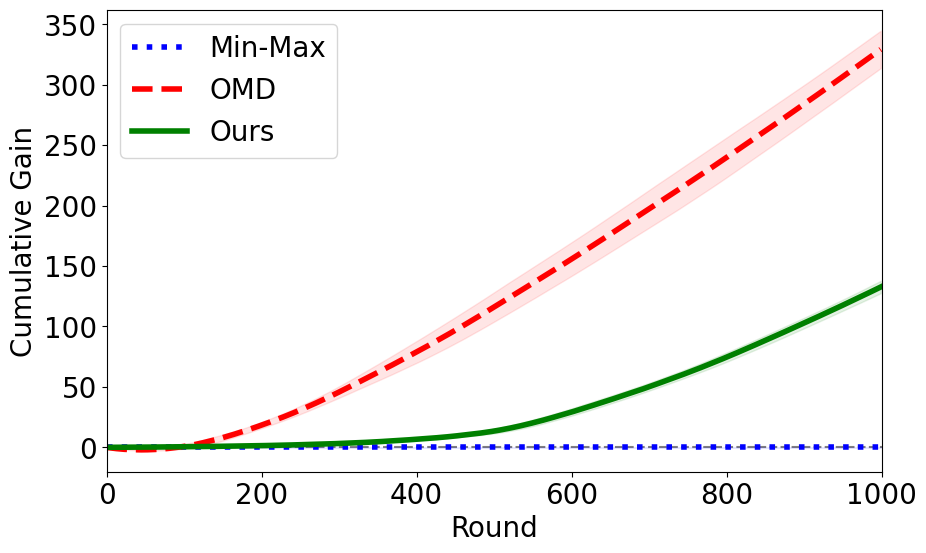}
        \caption*{All vs RaiseK}
    \end{minipage}
    \begin{minipage}{0.05\textwidth}
        
    \end{minipage}
    \begin{minipage}{1.0\textwidth}
    \centering
    \vspace{0.4cm}
    \caption{All vs Bob comparison for $T=1000$ rounds. The x-axis displays the round $t$, and the y-axis displays how much {\color{mblue}Min-Max}, {\color{mred}OMD}, and {\color{mgreen}\cref{algo:phased-aggression-efg-bandit}} gained from the second algorithm so far. The y-axes have varying scales for readability.}\label{fig:all-vs-x2} \vspace{-0.4cm}
    \end{minipage}
    \end{minipage}
\end{figure*}

\emph{All vs RandMinMax($\alpha$)}: In all plots, our \cref{algo:phased-aggression-efg-bandit} achieves at least the gain of the min-max equilibrium and in fact always improves slightly over it. For small values of $\alpha$ (e.g. $\alpha=0.05$), meaning that Bob plays a (reasonable) strategy very close to the min-max equilibrium, OMD always loses money while \cref{algo:phased-aggression-efg-bandit} wins linearly. For larger values of $\alpha$ (e.g. $\alpha=0.1, 0.15,0.3$), OMD loses an initial amount but slowly starts catching up towards a total positive gain for very large $T$. Finally, when $\alpha$ is large (e,g, $\alpha=0.5$), meaning that Bob plays a highly suboptimal (and not exploitative) strategy, OMD is able to obtain a positive gain much quicker and eventually surpasses our \cref{algo:phased-aggression-efg-bandit} (as it is not restricted to the support of the min-max equilibrium, which in this case is of advantage). 

\emph{All vs RaiseK}: Notice that min-max equilibrium does not exploit RaiseK at all. At the same time, OMD exploits it linearly right away, extracting a near-optimal gain from the opponent. Our \cref{algo:phased-aggression-efg-bandit} also exploits \emph{RaiseK} linearly at a comparable slope, however starting exploitation somewhat delayed due to the risk-averse nature of the algorithm. However, our algorithm consistently exploits weak opponents significantly better than the min-max strategy in all cases, and unlike OMD does so while not risking to lose essentially any money.

In summary, our experimental evaluations reveal the following insights that are in accordance with our theoretical findings: If Alice plays \cref{algo:phased-aggression-efg-bandit}, she secures at least the gain of the min-max strategy, thus not losing against any opponent. Yet, she is able to better exploit strategies that deviate from the min-max strategy, at a level often comparable to standard no-regret algorithms.

\textbf{Implementation Details.} In all experiments, we average $n=5$ runs of repeated play (plotting Alice's average cumulative expected gain), and plot one standard deviation. In all algorithms, we used the same learning fixed rates ($\eta \sim1/\sqrt{T}$) and the (unbalanced) dilated KL divergence for fairness and simplicity. We provide the code in the supplementary material.


\end{document}